\documentclass{article}

\usepackage[preprint]{corl_2026} %

\usepackage[utf8]{inputenc}

\usepackage{amsmath}
\usepackage{amsthm}
\usepackage{amsfonts}
\usepackage{amssymb}
\usepackage{graphicx}
\usepackage{wrapfig}
\usepackage{array}
\usepackage[dvipsnames]{xcolor}
\usepackage[table]{xcolor}
\usepackage{subcaption}
\usepackage{mathrsfs}
\usepackage{multicol}
\usepackage{multirow}
\usepackage{arydshln}
\usepackage{diagbox}
\usepackage{algorithm}
\usepackage{algpseudocode}
\usepackage{float}
\usepackage{caption}
\usepackage{cancel}
    
\usepackage{duckuments}
\usepackage{amsmath}
\usepackage{tabularx}
\usepackage{booktabs}
\usepackage{makecell}

\usepackage{colortbl}
\usepackage{tikz}
\usepackage{graphicx}
\usepackage{multicol}
\usepackage{rotating}
\usepackage{enumitem}

\definecolor{rowgray}{gray}{0.92}

        \newcommand{\reals}{\mathbb{R}}
        \newcommand{\R}{\reals}

        \newtheorem{theorem}{Theorem}
        
        \newtheorem{proposition}{Proposition}

    \usepackage{letltxmacro}
    \LetLtxMacro\orgvdots\vdots
    \LetLtxMacro\orgddots\ddots

    \makeatletter
    \DeclareRobustCommand\vdots{%
        \mathpalette\@vdots{}%
    }
    \newcommand*{\@vdots}[2]{%
        \sbox0{$#1\cdotp\cdotp\cdotp\m@th$}%
        \sbox2{$#1.\m@th$}%
        \vbox{%
            \dimen@=\wd0 %
            \advance\dimen@ -3\ht2 %
            \kern.5\dimen@
            \dimen@=\wd2 %
            \advance\dimen@ -\ht2 %
            \dimen2=\wd0 %
            \advance\dimen2 -\dimen@
            \vbox to \dimen2{%
                \offinterlineskip
                \copy2 \vfill\copy2 \vfill\copy2 %
            }%
        }%
    }
    \DeclareRobustCommand\ddots{%
        \mathinner{%
            \mathpalette\@ddots{}%
            \mkern\thinmuskip
        }%
    }
    \newcommand*{\@ddots}[2]{%
        \sbox0{$#1\cdotp\cdotp\cdotp\m@th$}%
        \sbox2{$#1.\m@th$}%
        \vbox{%
            \dimen@=\wd0 %
            \advance\dimen@ -3\ht2 %
            \kern.5\dimen@
            \dimen@=\wd2 %
            \advance\dimen@ -\ht2 %
            \dimen2=\wd0 %
            \advance\dimen2 -\dimen@
            \vbox to \dimen2{%
                \offinterlineskip
                \hbox{$#1\mathpunct{.}\m@th$}%
                \vfill
                \hbox{$#1\mathpunct{\kern\wd2}\mathpunct{.}\m@th$}%
                \vfill
                \hbox{$#1\mathpunct{\kern\wd2}\mathpunct{\kern\wd2}\mathpunct{.}\m@th$}%
            }%
        }%
    }
    \makeatother

\usepackage{cleveref}
\Crefname{figure}{Fig.}{Figs.}
\Crefname{equation}{Eq.}{Eqs.}
\Crefname{lemma}{Lemma}{Lemmata}
\Crefname{proposition}{Proposition}{Propositions}
\Crefname{assumption}{Assumption}{Assumptions}
\Crefname{theorem}{Theorem}{Theorems}
\Crefname{section}{Section}{Sections}
\Crefname{subsection}{Subsection}{Subsections}
\Crefname{appendix}{Appendix}{Appendices}
\Crefname{corollary}{Corollary}{Corollaries}

\usepackage{tcolorbox}
\tcbuselibrary{skins}

\DeclareMathOperator{\SE}{SE}

\newcolumntype{Y}{>{\centering\arraybackslash}X}

\newif\ifshowcomments
\showcommentstrue

\newcommand{\addauthor}[2]{%
  \expandafter\newcommand\csname #1\endcsname[1]{%
    \ifshowcomments
      {\color{#2}[#1: ##1]}%
    \fi
  }%
}

\usepackage{amsthm}
\theoremstyle{plain}

\theoremstyle{definition}
\newtheorem{definition}{Definition}
\theoremstyle{remark}
\newtheorem{remark}{Remark}

\addauthor{adam}{blue}
\addauthor{giannis}{orange}
\addauthor{costis}{purple}
\addauthor{russ}{red}
\addauthor{tommy}{cyan}
\addauthor{nicholas}{purple}
\addauthor{kerem}{pink}

\title{Ambient Diffusion Policy:\\Imitation Learning from Suboptimal Data in Robotics}

\author{
  Adam Wei\thanks{Corresponding author: \texttt{weiadam@mit.edu}. \quad $^{\dagger}$Equal contribution.} \quad
  Nicholas Pfaff$^{\dagger}$ \quad
  Thomas Cohn$^{\dagger}$ \quad
  Arif Kerem Day\i{}\\[0.25em]
  \textbf{Constantinos Daskalakis \quad Giannis Daras \quad Russ Tedrake}\\[0.5em]
  \url{https://ambient-diffusion-policy.github.io/} \\[0.5em]
  MIT
}

\begin{document}
\maketitle

\addtocontents{toc}{\protect\setcounter{tocdepth}{-1}}

\vspace{-1.5em} %
\begin{figure}[H]
    \centering
    \captionsetup{font=small}
    \includegraphics[width=\linewidth]{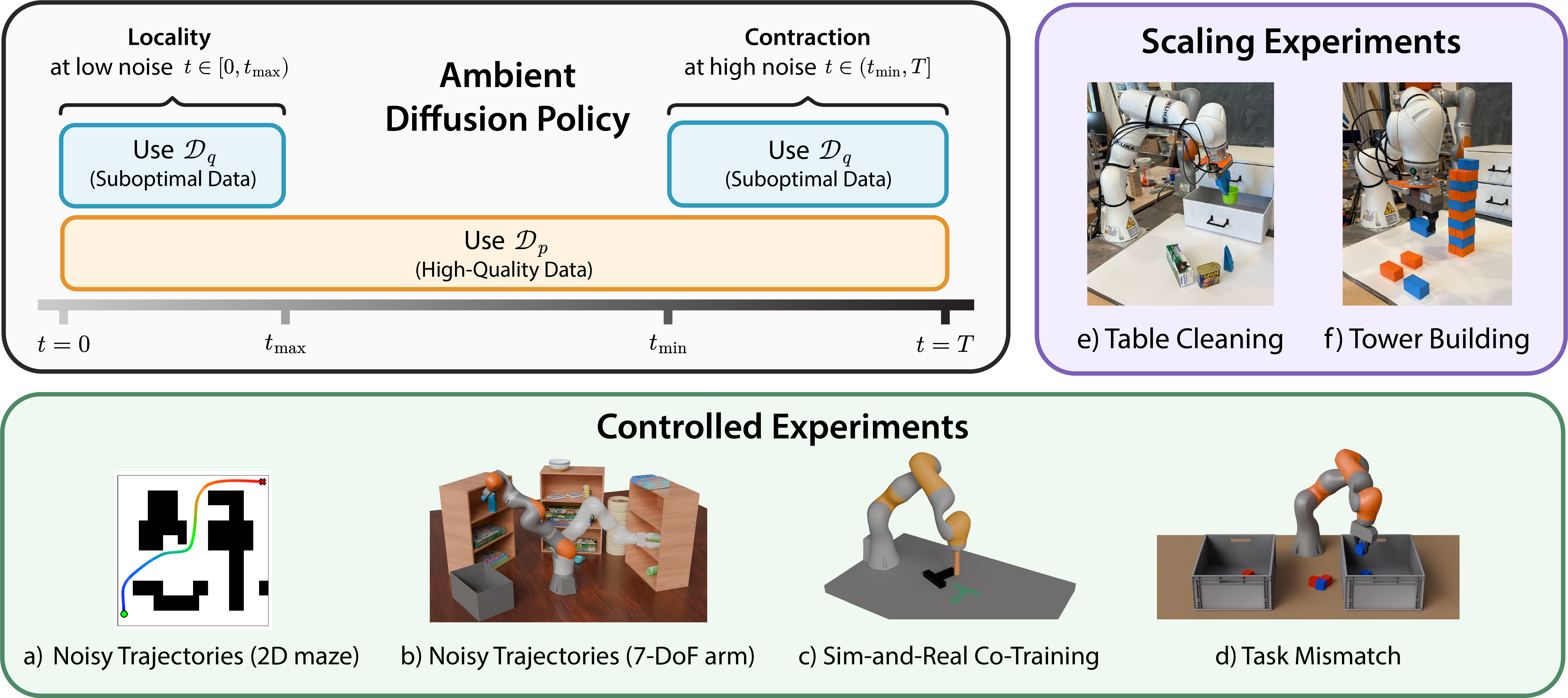}
    \caption{\textbf{Training Algorithm for Ambient Diffusion Policy.} High-quality data can be used during training at all diffusion times. 
    Suboptimal data is restricted to training at low or high diffusion times $t\in[0, t_{\max})\cup(t_{\min}, T]$. \textbf{Task visualizations.} We evaluate Ambient Diffusion Policy on four types of action suboptimality, six different tasks, and scale to the Open X-Embodiment (OXE)~\cite{open_x_embodiment_rt_x_2023}.}
    \label{fig:teaser}
\end{figure}

\vspace{-0.25em} %
\begin{abstract}
\looseness=-1
We propose Ambient Diffusion Policy, a simple and principled method for imitation learning from suboptimal data in robotics.
High-quality, task-specific robot data is expensive and time-consuming to collect, while suboptimal datasets with lower-quality or out-of-distribution demonstrations are abundant.
Existing methods that co-train on both data sources in robotics often fail to separate the meaningful and the harmful features in the suboptimal samples.
In contrast, our method extracts only the useful features by introducing a new axis to co-training in robotics: \emph{noise-dependent data usage}.
Ambient Diffusion Policy restricts the contribution of suboptimal data during training to only the high and low diffusion times.
To rigorously justify our approach, we first observe that robot action data exhibits a \textit{spectral power law}. This induces two important properties on the optimal Diffusion Policy that we exploit: a \textit{global-to-local hierarchy} and \textit{locality}.
We theoretically formalize this discussion using a simplified model.
Our experiments validate Ambient Diffusion Policy on four types of suboptimal action data (noisy trajectories, sim-to-real gap, task mismatch, and large-scale data mixtures) across six tasks.
The results show that it effectively learns from arbitrary sources of suboptimal data. Notably, it outperforms existing co-training baselines by up to 33\% when scaled to Open X-Embodiment---a large dataset with heterogeneous data quality and unstructured distribution shifts.
Overall, Ambient Diffusion Policy increases the utility of suboptimal demonstrations and expands the set of usable data sources in robotics.

\end{abstract}

\section{Introduction}
\label{sec:intro}
The training corpus of nearly every large-scale robot policy spans different tasks, real and simulated environments, embodiments, and even modalities~\cite{pi05, nvidia2025gr00tn1openfoundation, trilbm, kim2024openvla, open_x_embodiment_rt_x_2023, lin2026systematicstudydatamodalities, geminiroboticsteam2025geminiroboticsbringingai, agibotworldcontributors2025agibotworldcolosseolargescale, chi2024universalmanipulationinterfaceinthewild}. One reason for this data heterogeneity is that high-quality, task-specific robot data is expensive and time-consuming to collect; it requires skilled teleoperators and well-tuned low-level controllers. In contrast, suboptimal data is abundant. Any real-world data collection effort naturally produces failures and trajectories of differing quality~\cite{pi07, open_x_embodiment_rt_x_2023, agibotworldcontributors2025agibotworldcolosseolargescale, pi06star}. Out-of-distribution (OOD) data sources are also plentiful and widely available~\cite{open_x_embodiment_rt_x_2023, agibotworldcontributors2025agibotworldcolosseolargescale}. These include simulation~\cite{wei2025empirical, maddukuri2025sim}, cross-embodied data~\cite{open_x_embodiment_rt_x_2023}, and ego-centric video~\cite{grauman2024egoexo4dunderstandingskilledhuman, gao2026dreamdojogeneralistrobotworld}. 
Practitioners often draw from data sources of varying quality~\cite{open_x_embodiment_rt_x_2023, agibotworldcontributors2025agibotworldcolosseolargescale, kim2024openvla, hejna2024remixoptimizingdatamixtures, pi06star} to create massive pretraining sets, yet methods for learning from arbitrary suboptimal or shifted distributions are underexplored in robotics~\cite{kim2024openvla, pi05, wei2025empirical, hejna2024remixoptimizingdatamixtures}. We propose a simple and principled method for training robot policies that can leverage suboptimal and OOD datasets.

The simplest way to handle heterogeneous datasets with mixed quality is to discard the lowest-quality samples~\cite{kim2024openvla, pi05, zhang2026dexoraopensourcevlahighdof, wen2025dexvlavisionlanguagemodelplugin}, but this data filtering is wasteful: even suboptimal samples contain useful learning signal~\cite{daras2025ambient}. The most common alternative in robotics is to ``co-train'' on everything while down-weighting suboptimal datasets~\cite{pi05, nvidia2025gr00tn1openfoundation, trilbm, kim2024openvla, lin2026systematicstudydatamodalities, wei2025empirical, maddukuri2025sim}. However, training a model to sample from a mixture with low-quality distributions fundamentally biases the policy~\cite{tangkaratt2021robustimitationlearningnoisy, pi07}. Finetuning is a complementary method~\cite{pi05, pi06star, wen2025dexvlavisionlanguagemodelplugin, trilbm}, but can be insufficient alone (Section~\ref{sec:controlled_experiments:finetuning}).

Our key intuition is that suboptimal and high-quality samples differ in some features, but align in others. For example, non-expert teleoperators may share the same high-level plan as experts, but exhibit less precise manipulation skills. Conversely, pick-and-place data may be useful for grasping primitives, but encode the wrong task.

We turn this intuition into an algorithm by observing that robotic data exhibits a spectral power law~\cite{torralba2003statistics}. We show that this spectral structure induces a \emph{global-to-local hierarchy} in action diffusion and a locality property~\cite{lukoianov2025localityimagediffusionmodels} in the optimal denoisers. Concretely, Diffusion Policy~\cite{chi2024diffusionpolicyvisuomotorpolicy} learns high-level planning at high noise and motion primitives at low noise. Thus, a policy can selectively learn useful features from suboptimal data by using it only at noise levels where it aligns with the target distribution. This unlocks a new design axis for co-training in robotics: \emph{noise-dependent data usage}.

To this end, we propose \textbf{Ambient Diffusion Policy}, a principled algorithm for training Diffusion Policies on suboptimal robot data. Each suboptimal sample can only contribute to training at the high or low diffusion times where it aligns with the target data. Our method extends the Ambient Diffusion Omni framework~\cite{daras2024consistent, daras2025ambient} (or ``Ambient'' for short), which has been applied to computer vision and protein design~\cite{proteins}.
Our contributions are as follows: 

\begin{itemize}[leftmargin=2em, topsep=1pt, itemsep=8pt, parsep=0pt]
    \item \textbf{Properties of Robot Data.} We empirically demonstrate that robot action data exhibits a spectral power law, which induces a \textit{global-to-local hierarchy} and \textit{locality} in action diffusion. These properties make Ambient Diffusion Omni well-suited for robotics (Section~\ref{sec:robot_data}).
    \item \textbf{Ambient Diffusion Policy.} We propose a simple and principled method for learning from suboptimal data that requires just a \textit{single change} to Diffusion Policy's data sampler (Section~\ref{sec:method}).
    \item \textbf{Generality.} Ambient outperforms baselines when training on three common types of action suboptimality: noisy demonstrations, sim-to-real gap, and task mismatch (Section~\ref{sec:controlled_experiments}).
    \item \textbf{Scale.} When trained on Open X-Embodiment~\cite{open_x_embodiment_rt_x_2023}---a large dataset with mixed data quality and unstructured distribution shifts---Ambient outperforms the co-training baseline by up to \textbf{33\%} on two real-world tasks. Additionally, Ambient continues to improve as we scale the amount of suboptimal data in the training mixture, whereas co-training plateaus (Section~\ref{sec:oxe_experiments}).
    \item \textbf{Theory.} We prove that for a simple theoretical model, the spectral power law implies fast contraction through noise (Theorem~\ref{th:power_law_implies_contraction}) and locality (Theorem~\ref{th:locality}) of the optimal policies. These contributions justify our algorithm and advance the theoretical foundations of the broader Ambient Diffusion framework (Section~\ref{sec:theory}).
\end{itemize}

\section{Related Work}
\label{sec:related_work}
\textbf{Dataset Re-weighting.} The prevailing method for training on heterogeneous datasets in robotics is to re-weight each dataset's contribution to the loss \cite{pi05, nvidia2025gr00tn1openfoundation, trilbm, kim2024openvla, lin2026systematicstudydatamodalities, geminiroboticsteam2025geminiroboticsbringingai, wei2025empirical, maddukuri2025sim, nasiriany2024robocasalargescalesimulationeveryday}. In our problem setting, this method would optimize the loss
\begin{equation}
    \mathcal{L}_\text{co-training}(\theta) = \alpha\,\mathcal{L}_{\mathcal{D}_p}(\theta) + (1-\alpha)\,\mathcal{L}_{\mathcal{D}_q}(\theta),
    \label{eq:cotraining}
\end{equation}
where $\mathcal{L}_{\mathcal{D}_p}$ and $\mathcal{L}_{\mathcal{D}_q}$ denote the loss on the target and suboptimal datasets, and $\alpha\in[0,1]$ is the re-weighting parameter. In the remainder of the paper, we use ``co-training'' to refer to this standard approach. A core challenge is weight selection for each dataset~\cite{geminiroboticsteam2025geminiroboticsbringingai, kim2024openvla, theorydifferentdomains, hejna2025robotdatacurationmutual, xie2023doremioptimizingdatamixtures, hejna2024remixoptimizingdatamixtures, agia2025cupidcuratingdatarobot}. But even with the optimal weights, training a policy to sample from a mixture with suboptimal distributions biases the policy~\cite{tangkaratt2021robustimitationlearningnoisy, pi07}. Our method avoids this issue by only training on suboptimal samples in specific intervals of the diffusion process. Importantly, these intervals have precise interpretations and can be automatically annotated.

\textbf{Data Filtering.} A related method is data filtering, which rejects suboptimal samples entirely~\cite{kim2024openvla, pi05, zhang2026dexoraopensourcevlahighdof, Goyal_2024_CVPR, li2025datacomplmsearchgenerationtraining}. Our experiments and prior work~\cite{daras2025ambient, pi06star, pi07} show that data filtering is wasteful. For example, Dexora~\cite{zhang2026dexoraopensourcevlahighdof} improves policy performance by discarding 80\% of its training demonstrations using a kinematic smoothness heuristic. Section~\ref{sec:controlled_experiments:motion_planning} shows the opposite for Ambient Diffusion Policy: training on additional non-smooth demonstrations improves performance while retaining the policy's smoothness. Similarly, OpenVLA's Magic Soup++ mixture~\cite{kim2024openvla} includes only 27 of the datasets in OXE~\cite{open_x_embodiment_rt_x_2023}; Section~\ref{sec:oxe_experiments} shows that our method can still extract useful learning signal from the discarded datasets.

\textbf{Learning from Suboptimal or OOD Data.}
Prior works studied learning from corrupted data \cite{Aali_2023, daras2024consistent, chen2025denoisingscoredistillationnoisy, daras2024surveydiffusionmodelsinverse, shah2025does, dataloops, lu2025sfbd, sfbdomni, park2025measurementscorebaseddiffusionmodel, modi2026generativemodelingblackboxcorruptions, kawar2024gsurebaseddiffusionmodeltraining, matsuzaki2026scorecleanimagegeneration, bora2018ambientgan}; however, many require multiple training runs~\cite{bai2024expectationmaximizationalgorithmtrainingclean, hosseintabar2025diffemlearningcorrupteddata, rozet2025learningdiffusionpriorsobservations} or rely on knowing the form of the corruption. The latter is unrealistic in robotics. For example, the sim-to-real gap has no closed-form description. Recent works in robotics have also begun exploring training methods for suboptimal data. $\pi_{0.7}$~\cite{pi07} labels data quality and LDA-1B~\cite{lyu2026lda1bscalinglatentdynamics} relegates low-quality trajectories to an auxiliary dynamics objective. Other approaches learn invariant representations between the target and suboptimal data~\cite{cheng2026generalizabledomainadaptationsimandreal, punamiya2025egobridgedomainadaptationgeneralizable}, but are challenging to scale~\cite{wei2025empirical}. \citet{tangkaratt2021robustimitationlearningnoisy} and \citet{liu2022robustimitationlearningcorrupted} propose methods for robust imitation learning; however, they focus on ignoring corrupted samples instead of learning from them. They also require over half the data to be high-quality. Concurrent work explored a similar idea to ours for hand-tracking data~\cite{pace2025xdiffusiontrainingdiffusionpolicies}. In contrast, our method learns from arbitrary mixtures of suboptimal action data, scales to large real-world datasets~\cite{open_x_embodiment_rt_x_2023}, has theoretical grounding, and is simple to implement.

\section{Background}
\label{sec:background}
\subsection{Robot Imitation Learning and Diffusion Policy}
\label{sec:background:diffusion}

Given a dataset $\mathcal{D}_p=\{ (O^{(i)}, A^{(i)})\}_{i=1}^{n_p}$ of observation-action pairs, imitation learning aims to train a policy $\pi(A\mid O)$ that approximates the conditional distribution $p(A\mid O)$~\cite{chi2024diffusionpolicyvisuomotorpolicy}. Here, $A$ and $O$ represent chunks of future actions and recent observations~\cite{zhao2023learningfinegrainedbimanualmanipulation}.
Diffusion Policy~\cite{chi2024diffusionpolicyvisuomotorpolicy} samples from $\pi(A \mid O)$ with a conditional denoising process over the action space~\cite{ho2020denoisingdiffusionprobabilisticmodels}. Concretely, it learns
a family of denoisers $\{h_\theta(A_t, O, t)\}_{t\in [0,T]}$ that predict the clean action $A_0$ from a noisy version\footnote{Diffusion Policy~\cite{chi2024diffusionpolicyvisuomotorpolicy} uses a variance-preserving implementation. We present it from the variance-exploding perspective to lighten notation. The two perspectives are equivalent up to a change of variables~\cite{song2021scorebasedgenerativemodelingstochastic}; see Appendix~\ref{sec:appendix:vp_vs_ve}.}
\begin{equation}
    A_t = A_0 + \sigma(t)\, Z, \quad Z \sim \mathcal{N}(0, I),
    \label{eq:forward}
\end{equation}
where $T$ is a sufficiently large constant, $\sigma(t)$ is a strictly increasing function known as the noise schedule, and $\sigma(0)=0$. For brevity, we write $\sigma_t := \sigma(t)$.

These denoisers are trained by minimizing the denoising loss~\cite{ho2020denoisingdiffusionprobabilisticmodels} (or a reparametrization of it, as in flow matching~\cite{lipman2023flowmatchinggenerativemodeling}):
\begin{equation}
    \mathcal{L}(\theta) = \mathbb{E}_{t,\, (O, A_0),\, A_t \mid A_0}
    \left[ \| h_\theta(A_t, O, t) - A_0 \|^2 \right],
    \label{eq:diffusion_loss}
\end{equation}
where $t \sim \mathcal{U}[0, T]$, $(O, A_0)\sim \mathcal{D}_p$, and $A_t$ is sampled from Eq.~\eqref{eq:forward}. For a sufficiently expressive $h_\theta$, the minimizer of Eq.~\eqref{eq:diffusion_loss} is the conditional expectation $h_\theta^*(A_t, O, t) = \mathbb{E}[A_0 \mid A_t, O]$. Given access to the conditional expectations, one can sample from the target distribution $\pi(A \mid O)$ by running a reverse diffusion process~\cite{chi2024diffusionpolicyvisuomotorpolicy, ho2020denoisingdiffusionprobabilisticmodels, song2022denoisingdiffusionimplicitmodels}. Appendix~\ref{sec:appendix:diffusion_policy} provides implementation details for Diffusion Policy.

\subsection{Problem Setting}
\label{sec:background:datasets}
Suppose we have access to a (small) dataset, $\mathcal{D}_p = \{(O^{(i)}, A^{(i)})\}_{i=1}^{n_p}$, from a target distribution, $p$, and a much larger dataset, $\mathcal{D}_q = \{(O^{(i)}, A^{(i)})\}_{i=1}^{n_q}$, from a shifted or suboptimal distribution, $q$. As discussed in the Introduction, this is a common problem setting in robotics.
\definecolor{steelbluelight}{HTML}{E8F4F8}
\definecolor{steelbluedark}{HTML}{2E9BBF}
\begin{tcolorbox}[
    enhanced,
    colback=steelbluelight,
    colframe=steelbluedark,
    boxrule=0.5pt,
    arc=5pt,
    left=6pt, right=6pt, top=8pt, bottom=8pt
]
\textbf{Problem Statement:} Our goal is to leverage the vast
pool of \textit{suboptimal} samples in $\mathcal{D}_q$ to improve the estimation
of the denoisers, $\mathbb{E}_p[A_0 \mid A_t, O]$, on the user-defined \textit{target} distribution.
\end{tcolorbox}
The definitions of $p$ and $q$ can be arbitrary. In general, $\mathcal D_p$ is a dataset that a practitioner considers ``high-quality'' for a downstream application. In this paper, we define $\mathcal{D}_p$ as expert demonstrations on the target robot, environment, and task; however, our framework applies equally well to any other definition, including heuristic measures~\cite{belkhale2023dataqualityimitationlearning} or human labels~\cite{pi07}. In Section \ref{sec:controlled_experiments}, we construct $\mathcal{D}_q$ to contain controlled distribution shifts. In Section \ref{sec:oxe_experiments}, we scale $\mathcal{D}_q$ to Open X-Embodiment (OXE) \cite{open_x_embodiment_rt_x_2023}.

\subsection{Ambient Diffusion Omni}
This Problem Statement is not unique to robotics.
Ambient Diffusion Omni was proposed by~\citet{daras2025ambient} to address a similar problem in the vision domain. Their key idea is to allow suboptimal samples from $q$ to be used only at certain diffusion times during training. The algorithm assigns two parameters to samples from $\mathcal D_q$, denoted $t_{\min}$ and $t_{\max}$. Samples from $\mathcal D_q$ can only contribute to the learning at diffusion times $t \in [0, t_{\max}) \cup (t_{\min}, T]$.

The algorithm leverages \textit{noise as a contraction mechanism}: for any distributions $p$ and $q$ supported on a subset of $\mathbb{R}^d$ with diameter $D$, the authors show that
\newcommand{\dtv}{\mathrm{d}_{\mathrm{TV}}}
\begin{gather}
\dtv(p_t, q_t)\le \dtv(p,q)\cdot {D \over 2\sigma_t},
\label{eq:tv_contraction_bound}
\end{gather}
where $d_{\text{TV}}$ is the Total Variation distance, $p_t = p\circledast \mathcal N(0, \sigma_t I)$, and $q_t = q\circledast \mathcal N(0, \sigma_t I)$.

Simply put, noise erases distribution differences, i.e. $p$ and $q$ \textit{contract} towards one another. Hence, there exists a sufficiently high diffusion time, $t_{\min}$, for which noisy actions from both the high-quality and suboptimal distributions become effectively indistinguishable.  Consequently, $\mathcal D_q$ can safely contribute to training at high noise when $t\in(t_{\min}, T]$. The utility of $q$ is highest when $t_{\min}$ is small (i.e. when $p$ and $q$ contract quickly). We discuss sufficient conditions for fast contraction and justification for using $\mathcal D_q$ for $t\in[0, t_{\max})$ in subsequent sections.

Section~\ref{sec:robot_data} examines structural properties of action data that make this method well-suited for robotics.
Section~\ref{sec:theory} theoretically formalizes the discussion in both Section~\ref{sec:robot_data} and prior work~\cite{daras2025ambient, proteins}. Section~\ref{sec:method} outlines our algorithm. Despite the in-depth motivation, the resulting method is remarkably simple: it requires a \textbf{single change} to the Diffusion Policy~\cite{chi2024diffusionpolicyvisuomotorpolicy} data sampler. We conclude with experiments that demonstrate the generality and scalability of our method.

\section{Distributional Properties of Robot Data}
\label{sec:robot_data}
We empirically show that robot data exhibits a \textit{spectral power law}. This spectral structure is special for two reasons. First, it is not universal: it is absent in natural audio~\cite{audio, dieleman2024spectral}, stellar spectra~\cite{df_gray_observation_2005}, and Poisson point processes~\cite{papoulis2002probability}. Second, we show that it makes Ambient Diffusion effective: it implies a global-to-local hierarchy (Figure~\ref{fig:visualize_backward}), fast contraction through noise (Theorem~\ref{th:power_law_implies_contraction}), and a locality property (Theorem~\ref{th:locality}). Hence, Ambient Diffusion is well-suited for robotics.

\subsection{Spectral Power Law in Robotics}
\label{sec:robot_data:spectral_power_law}

\begin{figure*}
    \centering
    \includegraphics[width=\linewidth]{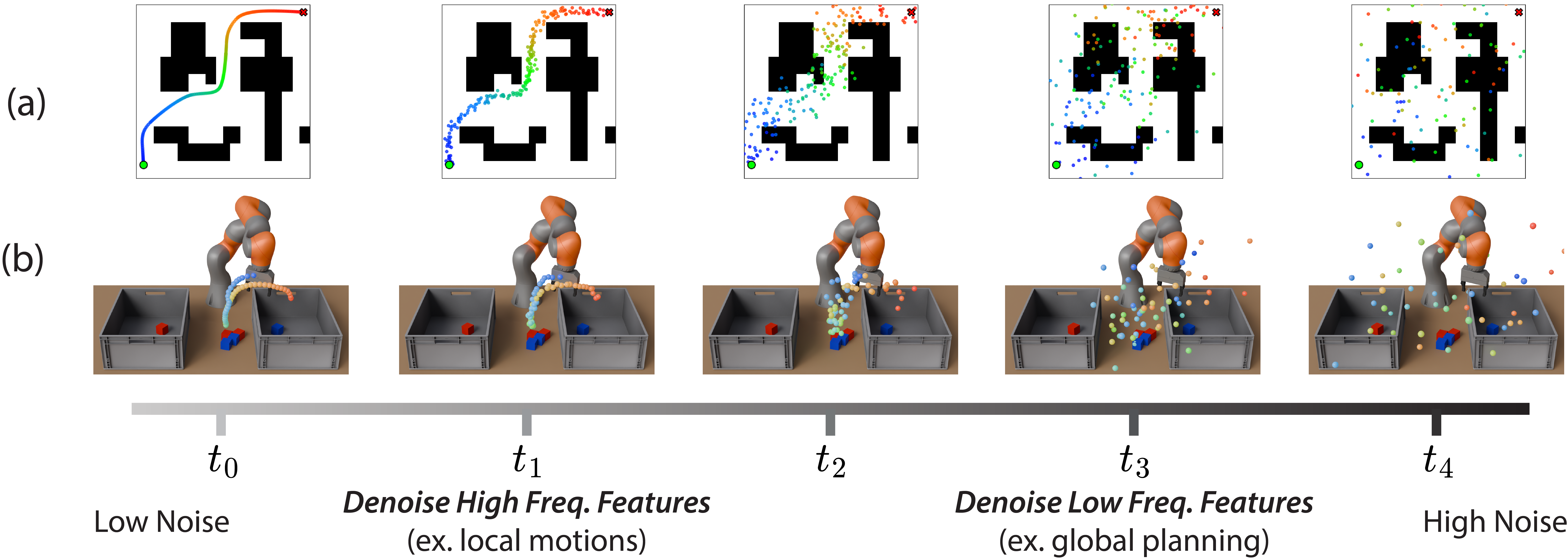}

    \caption{\textbf{The spectral power law induces a \textit{global}-to-\textit{local} hierarchy in action diffusion.} This is analogous to the ``coarse-to-fine'' hierarchy in image diffusion~\cite{dieleman2024spectral}. \textbf{(a)} \textit{$t \geq t_2$ (global planning):} the policy commits to a high-level path through the maze. \textit{$t < t_2$ (local refinement):} it refines the plan to be collision-free at $t_1$ and smooth at $t_0$. \textbf{(b)} $t_3$: the policy plans to move a block to the right bin. $t_2$: it plans which block to grasp. $t < t_2$: it refines the grasping motion.}
    \label{fig:visualize_backward}
\end{figure*}

The power spectral density (PSD) of a signal measures its energy at each frequency component, $f$. The signal exhibits a spectral power law if its PSD decays approximately as $|f|^{-\alpha}$, where $\alpha > 0$. In other words, its energy is concentrated in low-frequency components. A formal definition extended to random vectors is presented in Definition~\ref{def:spectral_power_law}. Figures~\ref{fig:spectral_power_law} and~\ref{fig:psd_all} show that \textbf{robot action data exhibits a spectral power law}.

\begin{figure}[!t]
    \centering
    \begin{subfigure}[b]{0.48\textwidth}
        \centering
        \includegraphics[width=\linewidth]{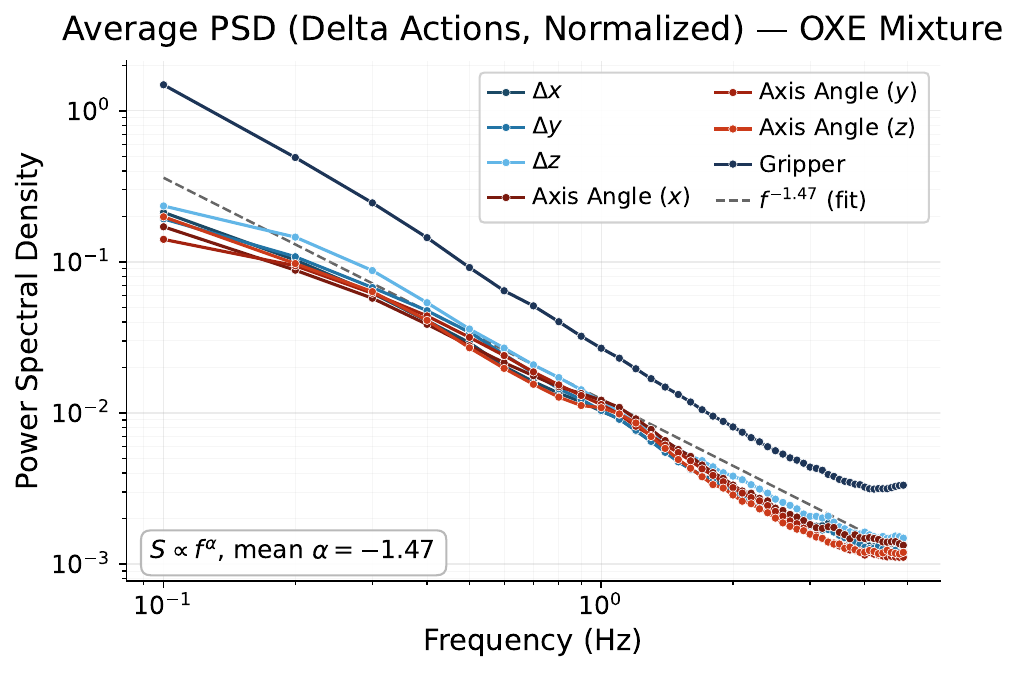}
        \caption{\textbf{Spectral power law in robot action data.} The power spectral density for OXE~\cite{open_x_embodiment_rt_x_2023}, a collection of 2.4M robot episodes across 70+ datasets, follows a power law. Figure~\ref{fig:psd_all} shows that this spectral power law is a more general property of robot data.}
        \label{fig:spectral_power_law}
    \end{subfigure}
    \hfill
    \begin{subfigure}[b]{0.48\textwidth}
        \centering
        \includegraphics[width=\linewidth]{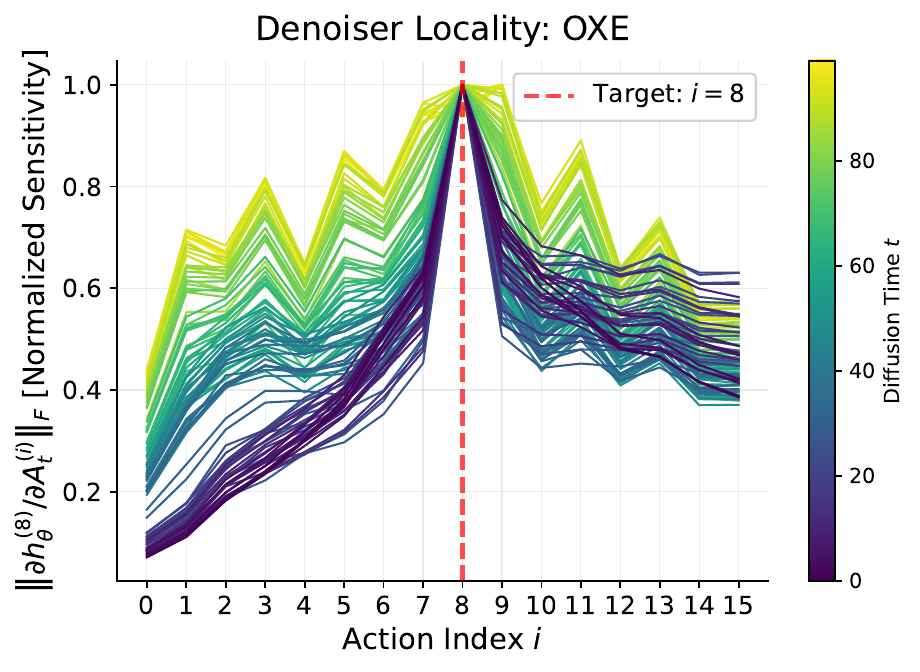}
        \caption{\textbf{Locality in robot action data.} Visualizing the sensitivity of the 8th action estimate to its neighboring actions (Eq.~\eqref{eq:sensitivity_field}). At low noise, the 8th action estimate is most sensitive to its temporal neighbors.}
        \label{fig:locality}
    \end{subfigure}
    \label{fig:robot_data_properties}
\end{figure}

\textbf{Implications for Diffusion Policy~\cite{chi2024diffusionpolicyvisuomotorpolicy}:} When a spectral power law exists, \citet{dieleman2024spectral} showed that denoisers generate low-frequency features at high noise, and high-frequency features at low noise. Figure~\ref{fig:visualize_PSD} illustrates that in robotics, low frequencies encode \textit{global} features (e.g. the high-level plan), while high frequencies encode \textit{local} features (e.g. grasping primitives and smoothness). Thus, Diffusion Policies exhibit a \textit{global-to-local} hierarchy: they learn \textit{global} planning at high noise and \textit{local} motion primitives at low noise. Appendix~\ref{sec:appendix:action_diffusion} provides further experimental evidence.

\begin{figure*}[!t]
    \centering
    \includegraphics[width=\linewidth]{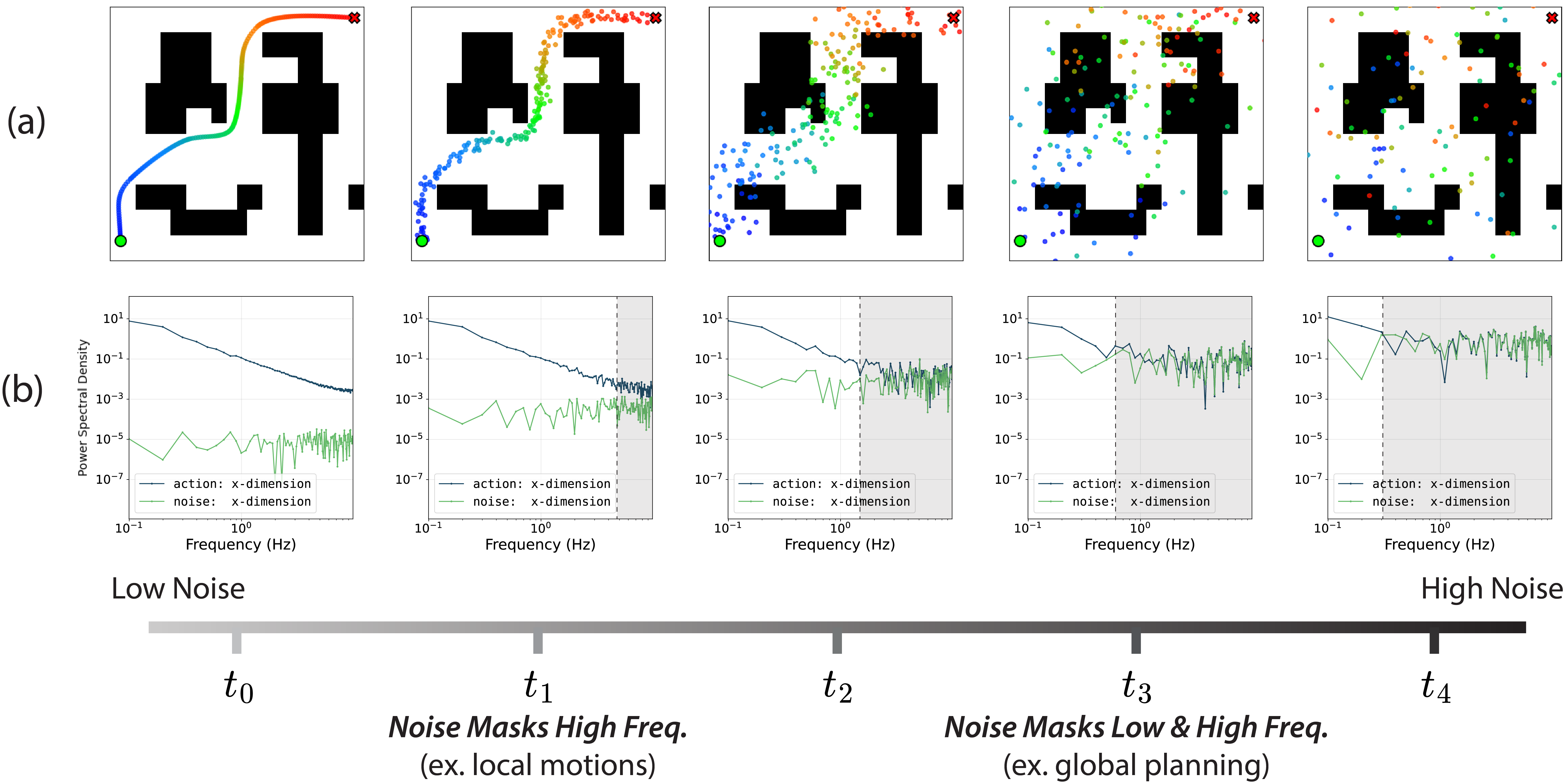}
    \caption{\textbf{The spectral power law makes Gaussian noise act as a high-frequency mask.} \textbf{(a)} Noisy trajectory visualizations. \textbf{(b)} PSD analysis for the noisy trajectories in the $x$-dimension at each diffusion time: the gray regions indicate \textit{masked} frequencies, which have low SNR. As $t$ increases, adding Gaussian noise destroys the signal in the trajectory, starting from the local (high-frequency) features first. This effect is visible in both the noisy trajectories and the PSD plots.}
    \label{fig:visualize_PSD}
\end{figure*}

This hierarchy unlocks a new design axis for co-training algorithms in robotics: \textit{noise-dependent data usage}. Diffusion Policies learn different features of the actions at each noise level; thus, suboptimal samples should only contribute to training at noise levels where $p$ and $q$ align.

\textbf{Implications for Ambient Diffusion Policy:} Due to the spectral power law, adding Gaussian noise to robot data acts as a high-frequency mask (Figure~\ref{fig:visualize_PSD}). Thus, if differences between $p$ and $q$ are concentrated in a high-frequency tail (i.e. low-level actions), then $t_{\min}$ will be small. We formalize this in Theorem~\ref{th:power_law_implies_contraction}.
By only learning from suboptimal samples when $t > t_{\min}$, policies can learn the useful \textit{global} features in $\mathcal D_q$ and ignore the \textit{local} suboptimality
(which is masked by noise).

Fortunately, the suboptimality in many robot datasets is in the local actions. For instance, non-expert and expert demonstrations may aim to achieve the same goal, but differ in the quality of the low-level motions. Similarly, in sim-to-real co-training or cross-embodied data, the high-level plan may be shared while the precise contact dynamics or feasible motions may differ.

\subsection{Locality in Robotics}
\label{sec:robot_data:locality}

A denoiser exhibits ``locality'' if its output at each coordinate depends primarily on a small receptive field in the noisy input~\cite{lukoianov2025localityimagediffusionmodels, daras2025ambient}.\footnote{As in~\citet{lukoianov2025localityimagediffusionmodels}, ``locality'' is a property of the data which is inherited by the optimal denoiser at low noise. We use the term ``locality'' loosely as both a property of the data and the resulting denoisers.} In robotics, this means that each action output from the denoiser is most sensitive to its temporal neighbors in the noisy input.
In Theorem~\ref{th:locality}, we prove that the spectral power law can imply locality in the optimal denoisers.
Indeed, Figure~\ref{fig:locality} empirically identifies this phenomenon in robotics. Intuitively, this is because these denoisers focus on resolving \textit{local} motion primitives, so they do not need to attend to distant actions in the input.

Now suppose that $p$ and $q$ share the same local motion primitives but differ globally. Due to locality, there exists a sufficiently small $t_{\max}$ such that for all $t < t_{\max}$, $p$ and $q$ agree within the receptive field of the optimal denoiser. Section~\ref{sec:theory} formalizes this statement. Section~\ref{sec:controlled_experiments:bin_sorting} leverages locality to learn \textit{local} grasping primitives from data at low noise, even when its task-level structure is incorrect.

\textbf{Remark:} A spectral power law and locality exists in all the examined robot datasets (Table~\ref{tab:psd_datasets}). This suggests that these are general properties of robot data. Section~\ref{sec:method} provides methods for estimating $t_{\min}$ and $t_{\max}$ to leverage these properties in Ambient Diffusion Policy.

\section{Theoretical Foundations}
\label{sec:theory}
We prove two theorems that extend the theoretical foundations of Ambient Diffusion and rigorously justify our method for robotics. Prior works~\cite{daras2025ambient} empirically identify the spectral power law and locality as important properties. We prove that for a simplified model, the former implies both \textit{fast contraction} through noise (Theorem~\ref{th:power_law_implies_contraction}) and \textit{locality} of the optimal denoiser (Theorem~\ref{th:locality}). This establishes the spectral power law as the single fundamental property underlying the framework.

Both theorems are proved for zero-mean stationary Gaussians. We adopt this setting for two reasons: \textbf{1)} it simplifies the bounds, and \textbf{2)} it is a natural model for latent-space diffusion, which has been proposed in robotics~\cite{bauer2026latentactiondiffusioncrossembodiment}, and is common in generative modeling more broadly~\cite{rombach2022highresolutionimagesynthesislatent}. Extending the analysis to general distributions is left to future work. Proofs are presented in Appendix~\ref{sec:appendix:theory}.

We begin by formally defining the spectral power law for general random vectors.

\begin{definition}[Spectral Power Law]
Let $X$ be a zero-mean, stationary random vector on $\mathbb{R}^N$, and let $S_X(f) := \mathbb{E}\big[|(\mathcal{F}X)(f)|^2\big]$ denote its power spectral density in the orthonormal Fourier basis. We say that the distribution of $X$ follows a \textit{spectral power law} if there exist a constant $C > 0$, a rolloff exponent $\alpha > 1$, and a cutoff frequency $f_0 \ge 1$ such that the high-frequency spectrum satisfies:
$$
S_X(f) = C |f|^{-\alpha} \quad \text{for all } |f| \ge f_0.
$$
The exact spectral behavior below the cutoff $f_0$ is permitted to be arbitrary but bounded.
\label{def:spectral_power_law}
\end{definition}

\subsection{Contraction Through Noise}
Theorem \ref{th:power_law_implies_contraction} states that a spectral power law and low-frequency alignment between $p$ and $q$ imply \textit{fast contraction} through noise. Moreover, this contraction is \textit{independent of the original distance}.

\begin{theorem}[Spectral Power Law implies fast contraction through noise]
    Let $p_0 = \mathcal{N}(0, \Sigma_p)$ and $q_0 = \mathcal{N}(0, \Sigma_q)$ be zero-mean stationary Gaussians on $\mathbb{R}^N$ with power spectra $S_p, S_q$. Fix a cutoff frequency $f^\star \ge 1$, an exponent $\alpha > \tfrac{1}{2}$, and a constant $C > 0$. Assume:

\begin{enumerate}
    \item \textbf{(Low-frequency agreement.)} $S_p(f) = S_q(f)$ for every $f \le f^\star$.
    \item \textbf{(Power-law tail.)} $\max(S_p(f), S_q(f)) \le C f^{-\alpha}$ for every $f$ such that $f^\star < f \leq \lfloor N/2 \rfloor$.
\end{enumerate}

Let $p_t := p_0 * \mathcal{N}(0, \sigma_t^2 I)$ and $q_t := q_0 * \mathcal{N}(0, \sigma_t^2 I)$. If
\begin{equation}
\sigma_t^2 \ge C (f^\star)^{-\alpha},
\tag{$\star$}
\end{equation}
then
\[
\boxed{ \mathrm{d}_{\mathrm{KL}}(p_t \| q_t) \le \frac{4 C^2}{(2\alpha - 1)\sigma_t^4 (f^\star)^{2\alpha - 1}} }
\qquad
\boxed{ \mathrm{d}_{\mathrm{TV}}(p_t, q_t) \le \frac{\sqrt{2}C}{\sigma_t^2 \sqrt{(2\alpha - 1)(f^\star)^{2\alpha - 1}}} }.
\]
\label{th:power_law_implies_contraction}
\end{theorem}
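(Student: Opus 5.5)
The plan is to diagonalize both covariances in the orthonormal Fourier basis and reduce everything to a sum of one-dimensional Gaussian KL terms. Stationarity makes $\Sigma_p$ and $\Sigma_q$ circulant, so they are simultaneously diagonalized by the orthonormal DFT $\mathcal{F}$, with eigenvalues $S_p(f)$ and $S_q(f)$. Adding isotropic noise keeps this structure, since $\mathcal{F}(\Sigma + \sigma_t^2 I)\mathcal{F}^* = \mathrm{diag}(S(f) + \sigma_t^2)$. KL divergence is invariant under the unitary change of basis. After passing to a real orthonormal Fourier basis (cosines and sines, so each $f\neq 0, N/2$ contributes two real coordinates with the same variance), $p_t$ and $q_t$ become product Gaussians with variances $a_f := S_p(f)+\sigma_t^2$ and $b_f := S_q(f)+\sigma_t^2$. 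Then
\[
\mathrm{d}_{\mathrm{KL}}(p_t\|q_t) = \sum_{k} \tfrac12\Big(\tfrac{a_k}{b_k} - 1 - \log\tfrac{a_k}{b_k}\Big),
\]
summed over real coordinates $k$. By assumption 1, every term with $|f|\le f^\star$ vanishes, so only the tail $f^\star < f \le \lfloor N/2\rfloor$ remains, with each frequency counted at most twice.

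Next I bound each tail term. Write $x = a/b - 1 = (S_p - S_q)/(S_q+\sigma_t^2)$. Since both spectra are nonnegative and at most $Cf^{-\alpha}$, we get $|x| \le Cf^{-\alpha}/\sigma_t^2$. Assumption $(\star)$ together with $f > f^\star$ gives $Cf^{-\alpha} \le C(f^\star)^{-\alpha} \le \sigma_t^2$, so $|x|\le 1$. Moreover $x \ge -S_q/(S_q+\sigma_t^2) > -1/2$ whenever $S_q \le \sigma_t^2$, which $(\star)$ also guarantees. On $x \in (-1/2, 1]$ the elementary inequality $x - \log(1+x) \le x^2$ holds (for $x\ge0$ it is $\le x^2/2$; for $x\in(-1/2,0)$ use $-\log(1+x)\le -x + x^2$). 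Hence each real coordinate contributes at most $\tfrac12 C^2 f^{-2\alpha}/\sigma_t^4$, and each frequency at most $C^2 f^{-2\alpha}/\sigma_t^4$ after counting both the cosine and sine coordinates. The same bound holds if one counts $\pm f$ separately in the complex basis.

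Summing, and using the integral comparison for the decreasing function $f^{-2\alpha}$ with $2\alpha>1$,
\[
\sum_{f > f^\star} f^{-2\alpha} \le \int_{f^\star}^\infty u^{-2\alpha}\,du = \frac{(f^\star)^{1-2\alpha}}{2\alpha-1}.
\]
(If $f^\star$ is not an integer, start the integral at $\lfloor f^\star\rfloor$ and absorb the difference.) This gives $\mathrm{d}_{\mathrm{KL}} \le \frac{C^2}{(2\alpha-1)\sigma_t^4 (f^\star)^{2\alpha-1}}$, up to the bookkeeping constant. The stated constant $4$ leaves room for a cruder per-term inequality (e.g.\ $x-\log(1+x)\le 2x^2$ on $[-1/2,1]$) and for double counting of $\pm f$. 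The TV bound then follows from Pinsker: $\mathrm{d}_{\mathrm{TV}} \le \sqrt{\mathrm{d}_{\mathrm{KL}}/2}$, which gives exactly $\sqrt{2}C/(\sigma_t^2\sqrt{(2\alpha-1)(f^\star)^{2\alpha-1}})$ from the boxed KL bound.

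The main obstacle is the per-coordinate inequality near the lower end. Without $(\star)$, the ratio $a/b$ could approach $0$ and the $-\log$ term would blow up. So the key step is checking that $(\star)$ keeps $x$ in a compact interval around $0$ where a quadratic bound with an explicit constant holds. The rest is careful accounting of real versus complex Fourier modes to land on the constant $4$.
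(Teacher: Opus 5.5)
Your proposal is correct and is essentially the paper's proof: diagonalize in the Fourier basis, use $(\star)$ to keep the variance ratio $r$ at least $1/2$ so that $r-1-\ln r\le (r-1)^2$, compare the tail sum with $\int_{f^\star}^\infty u^{-2\alpha}\,du$, and finish with Pinsker. You use $|S_p-S_q|\le\max(S_p,S_q)$ where the paper uses $2\max(S_p,S_q)$, and you work in the real cosine/sine basis, so you actually get the KL bound with constant $1$ in place of $4$. Drop the parenthetical about non-integer $f^\star$, though. The loss $(f^\star/\lfloor f^\star\rfloor)^{2\alpha-1}$ cannot be absorbed in general: for $N=8$, $f^\star=1.99$, $\alpha=10$, $C=\sigma_t^2=1$, $S_q\equiv 0$, and $S_p$ equal to $2^{-10}$ at $f\in\{2,6\}$ and $0$ elsewhere, one gets $\mathrm{d}_{\mathrm{KL}}(p_t\|q_t)\approx 4.77\cdot 10^{-7}$, above the claimed $\approx 4.42\cdot 10^{-7}$. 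So $f^\star$ must be an integer, as the paper's integral comparison tacitly assumes.
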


In contrast, Eq.~\eqref{eq:tv_contraction_bound} depends on the original TV distance (which can be arbitrarily close to 1) and decays only as $1/\sigma$. For our simple theoretical model, the spectral power law enables fast contraction and, consequently, high utilization of the suboptimal data in Ambient Diffusion.

\subsection{Spectral Power Law Implies Locality}
The next theorem shows that the spectral power law implies \textit{locality} for the optimal denoiser at low noise levels.

\begin{theorem}[Spectral Power Law implies locality of the optimal denoiser]
    Let $X_0 \sim \mathcal{N}(0, \Sigma)$ be a zero-mean stationary Gaussian on $\mathbb{R}^N$ whose power spectrum $S(f)$ follows a spectral power law with constant $C > 0$ and exponent $\alpha > 1$. Let $X_t = X_0 + \sigma_t Z$ be the variance-exploding noisy observation, and $h^\star_t(x) := \mathbb{E}[X_0 \mid X_t = x]$ denote the MMSE optimal denoiser.

    Let $M_i$ be a circulant mask that sets to zero all coordinates of the input $x$ at a circular distance strictly greater than $L$ from index $i$. Then, for any mask size $1 \le L \le \lfloor N/2 \rfloor$, the absolute error in the $i$-th coordinate between the full optimal denoiser and the masked denoiser is bounded by:
    \[
        \boxed{ \bigg| (h^\star_t(x))_i - (h^\star_t(M_i x))_i \bigg| \le \frac{\alpha N \|x\|_\infty}{8 L}. }
    \]
\label{th:locality}
\end{theorem}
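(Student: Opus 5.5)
The plan is to exploit that, for a zero-mean stationary Gaussian, the MMSE denoiser is an explicit linear circulant filter. We then bound its off-diagonal kernel decay with a discrete summation-by-parts (Abel/Dirichlet) argument, using the power-law tail to control the total variation of the filter's frequency response.

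\textbf{Step 1 (explicit denoiser).} Since $X_0\sim\mathcal N(0,\Sigma)$ and $X_t=X_0+\sigma_t Z$, we have $h^\star_t(x)=Wx$ with $W=\Sigma(\Sigma+\sigma_t^2I)^{-1}$. Stationarity makes $\Sigma$ circulant, so it is diagonalized by the orthonormal DFT. Hence $W$ is circulant with eigenvalues
\[
\lambda(f)=\frac{S(f)}{S(f)+\sigma_t^2}\in[0,1].
\]
Write $W_{ij}=k(j-i)$ with $k(d)=\frac1N\sum_{f}\lambda(f)e^{2\pi \mathrm{i} fd/N}$, where $f$ ranges over $N$ consecutive integers centered at $0$. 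By linearity, $(h^\star_t(x))_i-(h^\star_t(M_ix))_i=\sum_{L<|d|\le N/2}k(d)\,x_{i+d}$, with indices taken mod $N$. The error is therefore at most $\|x\|_\infty\sum_{|d|>L}|k(d)|$. Since there are fewer than $N$ such $d$, it suffices to show $|k(d)|\le \alpha/(8|d|)\le \alpha/(8L)$ for $|d|>L$.

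\textbf{Step 2 (summation by parts).} Let $E_d(m)=\sum_{f\le m}e^{2\pi\mathrm{i} fd/N}$ be the partial sums of the exponentials. Geometric summation gives $|E_d(m)|\le 1/|\sin(\pi d/N)|\le N/(2|d|)$ for $0<|d|\le N/2$, using $\sin(\pi u)\ge 2u$ on $[0,1/2]$. Abel summation then yields
\[
|k(d)|\le \frac{1}{N}\cdot\frac{N}{2|d|}\Big(\mathrm{TV}(\lambda)+|\lambda_{\mathrm{end}}|\Big)=\frac{\mathrm{TV}(\lambda)+O(\lambda_{\mathrm{end}})}{2|d|}.
\]
So everything reduces to bounding the total variation of $\lambda$ across the frequency range, and the target $\alpha/(8|d|)$ requires $\mathrm{TV}(\lambda)\lesssim\alpha/4$.

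\textbf{Step 3 (variation of $\lambda$ — main obstacle).} On the tail $|f|\ge f_0$, we have $\lambda(f)=\big(1+\sigma_t^2|f|^\alpha/C\big)^{-1}$, which is monotone in $|f|$. Viewing it as a function of $u=\log|f|$,
\[
\frac{d\lambda}{du}=-\alpha\,\lambda(1-\lambda),\qquad\text{so}\qquad \Big|\frac{d\lambda}{du}\Big|\le \frac{\alpha}{4}.
\]
This is where the factor $\alpha/4$ should enter: I would try to bound each increment $|\lambda(f+1)-\lambda(f)|$ by $\frac{\alpha}{4}\log\frac{f+1}{f}\le\frac{\alpha}{4f}$, and combine these with the kernel sum in a weighted way, rather than crudely using monotonicity (which only gives $\mathrm{TV}\le 2$). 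The delicate part is the low band $|f|<f_0$, where $S$ is only assumed bounded and $\lambda$ may oscillate.

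I would first try to absorb that band using $\alpha>1$ and $f_0\ge1$, together with the boundedness of $S$ there, so that its contribution to $\mathrm{TV}(\lambda)$ is still dominated by $\alpha/4$ times a constant that the $N/(8L)$ slack can tolerate. If that fails, I would fall back to the crude bound: $\mathrm{TV}$ over two monotone tails is at most $2$, and the low band is treated as $O(f_0)$ additional variation, folded into the constant. Matching the exact constant $\alpha/8$ is the step I expect to require the most care. Once $|k(d)|\le\alpha/(8|d|)$ is established, summing over the at most $N$ masked coordinates gives the stated bound $\alpha N\|x\|_\infty/(8L)$.
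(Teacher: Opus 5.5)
Step~1 agrees with the paper, but the reduction you make at its end cannot be completed. The pointwise target $|k(d)|\le\alpha/(8|d|)$, and even the weaker $|k(d)|\le\alpha/(8L)$ for $|d|>L$, is false, so Step~3 is not just delicate but impossible as posed.

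To see this, take the power law for all $f\ge1$ and a DC power $S(0)\gg\sigma_t^2\gg C$. Then $\lambda(0)\approx1$ while $\lambda(f)\le C|f|^{-\alpha}/\sigma_t^2\approx 0$ for $f\ne0$. The denoiser is essentially the global average, so $k(d)\approx1/N$ for every $d$. For $1<\alpha<4$ and large even $N$ this violates both forms of your target at $|d|=N/2$ with $L=N/2-1$, since $\alpha/(8|d|)=\alpha/(4N)<1/N$.

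The theorem survives this example only because exactly $N-2L-1$ coordinates are masked and $8L(N-2L)\le N^2$. At $L=N/4$ the ratio of error to bound is about $1/\alpha$, so the bound is nearly tight as $\alpha\to1$. Your step ``fewer than $N$ terms, each at most $\alpha/(8L)$'' discards exactly this count.

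Independently, first-order Abel summation cannot reach the constant. It gives $|k(d)|\le\mathrm{TV}(\lambda)/(2|d|)$, and the cyclic total variation is at least $2(\max\lambda-\min\lambda)$. That is close to $2$ whenever the filter passes from near $1$ to near $0$ inside the band, which happens over a wide range of noise levels for every $\alpha$. Your increment bound $\alpha/(4f)$ sums to order $\alpha\log N$. The fallback yields at best $N\|x\|_\infty/L$, which is weaker than the claim for every $\alpha<8$. Since $\alpha/8$ is fixed by the statement, nothing can be ``folded into the constant''.

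Absorbing the band $|f|<f_0$ is also impossible, because the bound contains no $f_0$ and fails when the spectrum below a large cutoff is arbitrary. With $f_0=N/2$ one can arrange $\lambda(f)\approx\frac12(1+(-1)^f)$, so $k\approx\frac12\delta_0+\frac12\delta_{N/2}$. At $L=N/2-1$ the error is then $\frac12\|x\|_\infty$, which exceeds $\frac{\alpha N}{4(N-2)}\|x\|_\infty$ whenever $\alpha<2-4/N$. You must assume the power law for all $f\ge1$, which is what the paper's proof tacitly uses.

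For comparison, the paper avoids the ``$1/|d|$ decay plus term count'' structure and sums by parts twice instead. It multiplies $k(m)$ by $2(1-\cos\frac{2\pi m}{N})=4\sin^2\frac{\pi m}{N}$ to get $|k(m)|\le\frac{N}{16m^2}\sum_f|\Delta^2W_f|$, where $W_f$ is your $\lambda(f)$. The factor $N/L$ then comes from the tail sum $\sum_{m>L}m^{-2}\le1/L$. The paper then asserts $\sum_f|\Delta^2W_f|\le\alpha$ from $|W'(f)|\le\alpha/(4f)$.

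Be aware that this assertion is strained by the same flat example: the second differences at $f\in\{-1,0,1\}$ alone sum to about $4$ in absolute value. So switching to second differences does not by itself settle $1<\alpha<4$. The near-constant part of the kernel needs separate treatment, for example by counting the masked indices exactly.
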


In short, at low noise levels the optimal denoiser is functionally myopic: it can reconstruct the signal by observing \textit{only} a local spatial neighborhood, safely ignoring the global structure of the input. This is precisely the property that Ambient Diffusion Policy exploits to learn \textit{local} action primitives from suboptimal data, even when its global structure is incorrect (Section~\ref{sec:controlled_experiments:bin_sorting}).

\section{Ambient Diffusion Policy: Method}
\label{sec:method}

Since robot data exhibits a spectral power law, learning from suboptimal data in robotics with Ambient is empirically and theoretically justified. We now present our algorithm: Ambient Diffusion Policy. It has two phases: a) data annotation, and b) training.

\subsection{Phase 1: Dataset annotation}

This phase annotates $t_{\max}$ and $t_{\min}$ for the suboptimal samples from $\mathcal D_q$. The simplest method is to perform a hyperparameter sweep. A more principled solution is to use a classification network. In particular, $\sigma_{t_{\min}}$ corresponds to the minimal amount of noise required to make the distributions $p_t$ and $q_t$ indistinguishable~\cite{daras2025ambient}. Hence, one way to find this noise level is to train a classifier, $c_{\phi}(A_t, t)$ that predicts the probability that the noisy action $A_t$ came from $p_t$ (as opposed to $q_t$). Then, $t_{\min}$ is the minimum noise level required to fool the classifier into classifying samples from $q_t$ as samples from $p_t$ (i.e. the classifier cannot reliably distinguish $p_t$ and $q_t$). Formally, for some small $\tau > 0$, we assign
\begin{equation}
    t_{\min} = \inf \{t : \mathbb E_{(O, A_0)\sim q,\, A_t|A_0}[c_{\phi^*}(A_t, t)] > 0.5 - \tau \}.
    \label{eq:tmin}
\end{equation}
Figure~\ref{fig:classifier_accuracy} visualizes this method for $\tau=0.05$; details are in the Appendix. If the classifier is well-trained, we show that this annotation method guarantees distributional closeness between $p_t$ and $q_t$ for $t > t_{\min}$ (Theorem~\ref{th:approximate-classifier-threshold}).

This classifier can also be used to annotate samples at different granularities (e.g. one $t_{\min}$ per dataset vs one $t_{\min}$ per sample); Appendix~\ref{sec:appendix:sim_and_real} and Figure~\ref{fig:planar_pushing_ablations}\subref{fig:planar_pushing_partitions} provide additional details. A similar classifier-based approach can be used to annotate $t_{\max}$~\cite{daras2025ambient}, although we do not use it in this paper.

\subsection{Phase 2: Training}

The training loop for Ambient Diffusion Policy is the same as Diffusion Policy~\cite{chi2024diffusionpolicyvisuomotorpolicy} with one required change: a custom data sampler. The custom sampler first draws a batch of diffusion times $t^{(i)}\sim \mathcal{U}[0, T]$. Afterward, it draws admissible samples from $\mathcal{D}_p \cup \mathcal{D}_q$. Samples from $\mathcal{D}_p$ are always admissible; suboptimal samples from $\mathcal{D}_q$ are admissible only if $t^{(i)} \in [0, t_{\max}) \cup(t_{\min}, T]$. Figure~\ref{fig:teaser} visualizes the intervals where each dataset can contribute to training. Inference is identical to Diffusion Policy. We defer the full training pseudocode to Appendix~\ref{sec:appendix:pseudocode} to prevent implementation details from obscuring the essence of the method.

\textbf{Variants of Ambient Diffusion Policy.} Ambient Diffusion Policy provides an (optional) alternative loss function~\cite{daras2024consistent}, which is discussed in Appendix~\ref{sec:appendix:ambient_loss}.
Additionally, there is a variation of the algorithm that uses the classifier from Phase 1 for rejection sampling at training-time. We provide pseudocode and theoretical justification in Appendix~\ref{sec:appendix:rejection_sampling} and leave experimental verification of this variant to future work.

\section{Controlled Experiments}
\label{sec:controlled_experiments}
We evaluate Ambient Diffusion Policy on three distribution shifts in robot action data to illustrate its generality: noisy trajectories, sim-to-real gap, and task mismatch. This section compares our method against data filtering, co-training, and finetuning. Appendix~\ref{sec:appendix:experiments_overview} provides experimental details.

\subsection{Noisy Trajectories}
\label{sec:controlled_experiments:motion_planning}
\begin{wrapfigure}[13]{r}{0.5\linewidth}
    \centering
    \includegraphics[width=\linewidth]{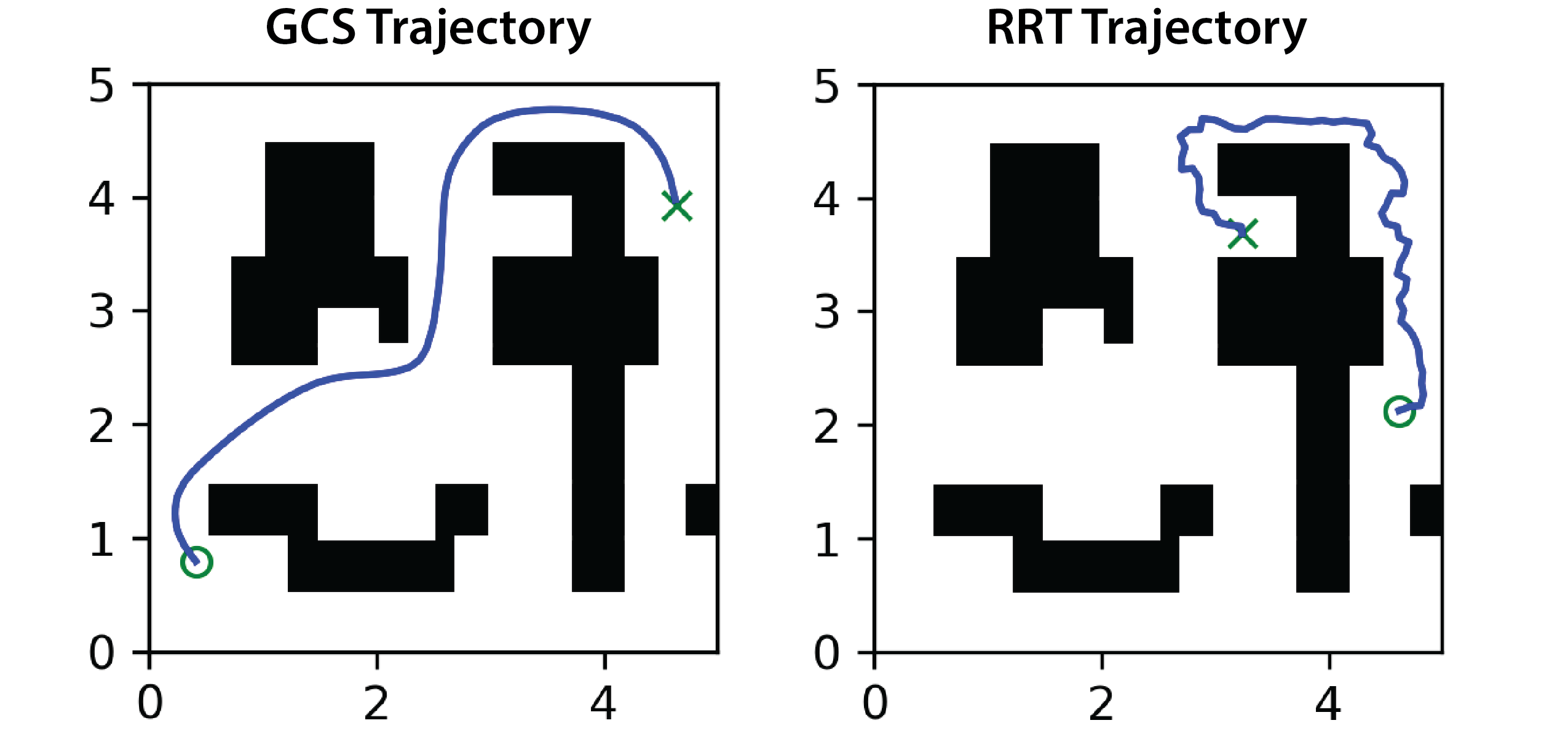}
    \caption{\textbf{GCS (left) vs RRT (right) data.} The high-quality GCS \cite{marcucci2022motionplanningobstaclesconvex} trajectories are smoother and shorter than their RRT \cite{LaValle1998RapidlyexploringRT} counterparts.}
    \label{fig:gcs_rrt}
\end{wrapfigure}
Consider a 2D point robot that must navigate between random start and goal positions in a maze environment (Figure~\ref{fig:teaser}a). A policy rollout is successful if it reaches an $\epsilon$-ball around the goal within a time limit without collision. We measure a policy's \textit{global} understanding of the maze using success rate, and its \textit{local} smoothness using average squared acceleration (Eq.~\eqref{eq:smoothness}).

$\mathcal D_p$ contains 50 smooth trajectories from GCS
\cite{marcucci2022motionplanningobstaclesconvex}; $\mathcal D_q$ contains 5,000
cheaper but jittery trajectories from RRT
\cite{LaValle1998RapidlyexploringRT}. Figure~\ref{fig:gcs_rrt} visualizes the datasets. This mimics real-world sources of suboptimality, such as teleoperation with low-quality hardware or noisy control stacks \cite{foland2025diffusionpolicieslearnkinematic}.

\begin{table}[!t]
    \centering
    \caption{\textbf{Maze and Neural Motion Planning results (1000 trials each).} In both settings, Ambient Diffusion Policy achieves the highest success rate and is substantially smoother than co-training.}
    \label{tab:maze_nmp_results}
    \renewcommand{\arraystretch}{1.7}
    \begin{subtable}[t]{0.48\textwidth}
        \centering
        \scriptsize
        \setlength{\tabcolsep}{3pt}
        \begin{tabular}{l|cc}
            \toprule
            Training Algo. & Success Rate $\uparrow$ & \makecell{Smoothness $\downarrow$\\(Eq.~\eqref{eq:smoothness})} \\
            \midrule
            Data Filtering & $57.5^{+3.1}_{-3.1}$ \% & $31.9$ \\
            \makecell[l]{Co-train\\($\alpha^*=0.019$)} & $99.4^{+0.4}_{-0.7}$ \% & $62.2$ \\
            \makecell[l]{Ambient\\($\sigma_{t_{\min}^*}=0.074$)} & $\mathbf{99.5^{+0.3}_{-0.7}}$ \% & $\mathbf{31.0}$ \\
            \bottomrule
        \end{tabular}
        \caption{2D maze.}
        \label{tab:maze_results}
    \end{subtable}
    \hfill
    \begin{subtable}[t]{0.48\textwidth}
        \centering
        \scriptsize
        \setlength{\tabcolsep}{3pt}
        \begin{tabular}{l|cc}
            \toprule
            Training Algo. & Success Rate $\uparrow$ & \makecell{Smoothness $\downarrow$\\(Eq.~\eqref{eq:smoothness})} \\
            \midrule
            Data Filtering & $46.0^{+3.1}_{-3.1}$ \% & $\mathbf{3.9}$ \\
            \makecell[l]{Co-train\\($\alpha^*=0.091$)} & $59.9^{+3.1}_{-3.1}$ \% & $42.7$ \\
            \makecell[l]{Ambient\\($\sigma_{t_{\min}^*}=0.025$)} & $\mathbf{65.9^{+2.9}_{-3.0}}$ \% & $31.4$ \\
            \bottomrule
        \end{tabular}
        \caption{7-DoF neural motion planning.}
        \label{tab:nmp_results}
    \end{subtable}
\end{table}

\begin{figure}[!t]
    \centering
    \begin{subfigure}[t]{0.48\textwidth}
        \centering
        \includegraphics[width=\linewidth]{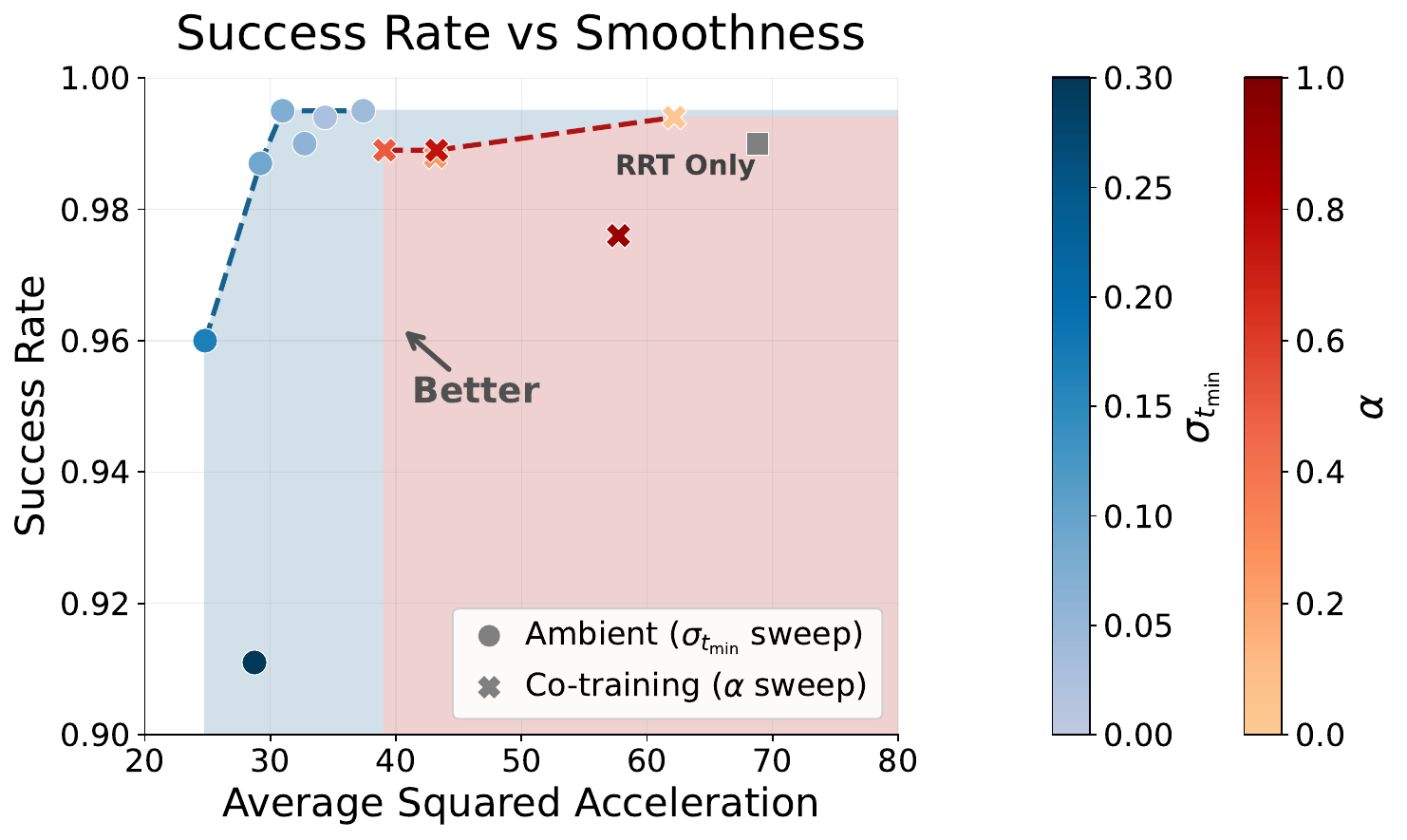}
        \caption{2D maze.}
        \label{fig:maze_pareto}
    \end{subfigure}
    \hfill
    \begin{subfigure}[t]{0.48\textwidth}
        \centering
        \includegraphics[width=\linewidth]{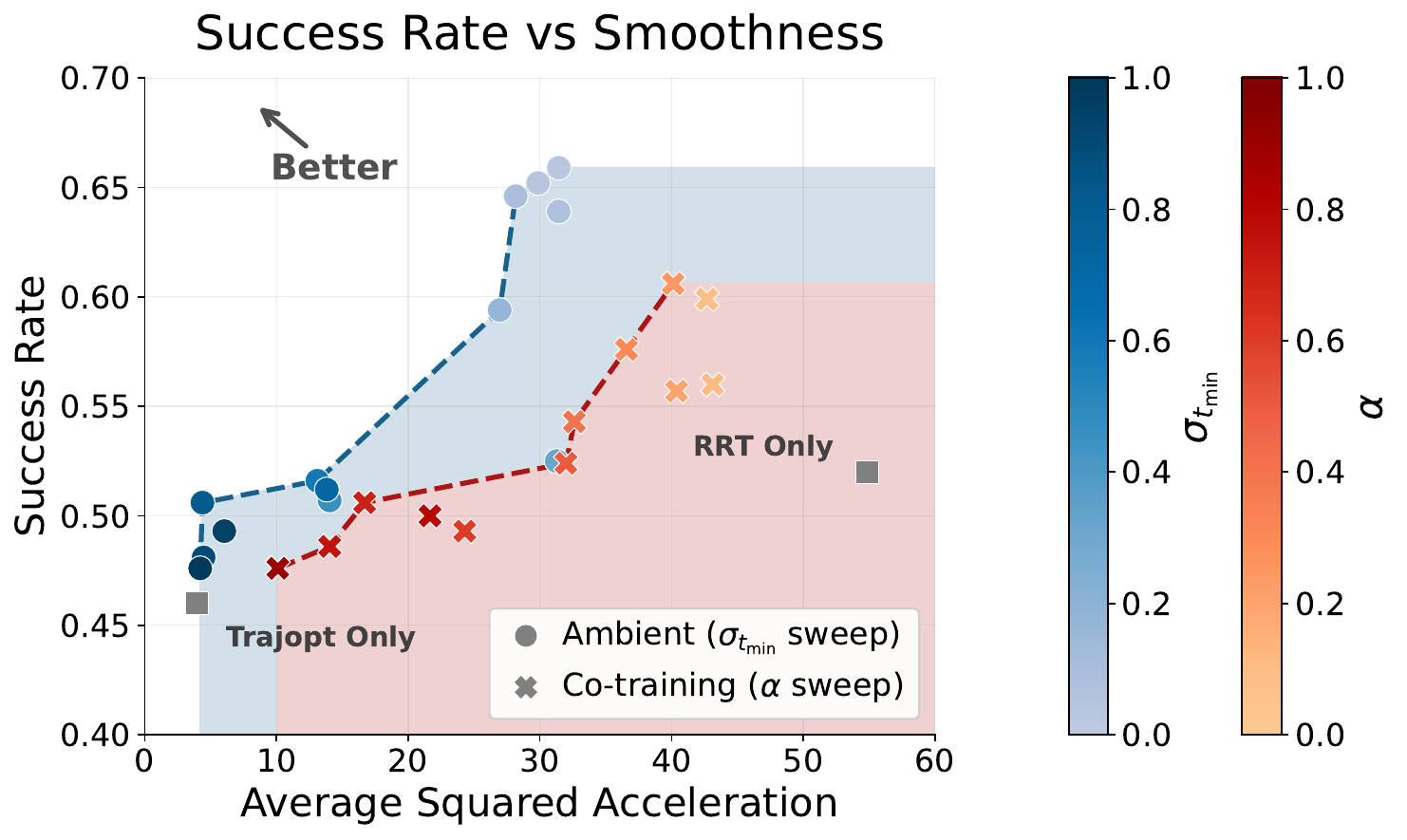}
        \caption{7-DoF neural motion planning.}
        \label{fig:nmp_pareto}
    \end{subfigure}
    \caption{\textbf{The ``smoothness vs success rate'' Pareto frontier for Ambient (blue) dominates co-training (red)} in both the (a) maze and (b) neural motion planning experiments. This trade-off is controlled by $\sigma_{t_{\min}}$ for Ambient and $\alpha$ for co-training.}
    \label{fig:pareto_combined}
\end{figure}

The policy trained with data filtering is smooth, but performs poorly (Table \ref{tab:maze_results}). Co-training and Ambient both perform well ($99.0\%+$ success), but the Ambient Diffusion Policy is 2$\times$ smoother. This is because RRT's jittery behavior is a local property; thus, excluding it from training at low diffusion times prevents the policy from learning its non-smooth behavior. Figure~\ref{fig:more_maze_rollouts} (Appendix~\ref{sec:appendix:maze}) visualizes the rollouts. Lastly, training on $\mathcal D_q$ trades smoothness for better data coverage and success rates. Ambient's Pareto frontier for this trade-off strictly dominates the co-training baseline (Figure~\ref{fig:maze_pareto}).

\textbf{Ablation:} We claim that $p_t\approx q_t$ for $t > t_{\min}$. If this were true, then we should be able to train an effective policy without using $\mathcal{D}_p$ for $t>t_{\min}$. To test this, we train a policy that exclusively trains on $\mathcal{D}_q$ for $t \in (t_{\min}, T]$ and $\mathcal{D}_p$ for $t \in [0, t_{\min}]$. Remarkably, it matches the performance of the Ambient Diffusion Policy, with a success rate of $99.0\%$ and a squared acceleration of $29.9$.

\subsection{Neural Motion Planning}
\label{sec:controlled_experiments:nmp}

We extend the maze experiment to neural motion planning~\cite{dalal2023imitatingtaskmotionplanning, qureshi2019motionplanningnetworks, carvalho2024motionplanningdiffusionlearning} with a 7-DoF robot arm in a cluttered environment (Figure~\ref{fig:teaser}b). The setup and metrics are analogous to the 2D maze, except that start and goal configurations are sampled around objects and shelves, and the planner predicts the entire plan open-loop. Appendix Table~\ref{tab:maze_vs_nmp} compares the two setups. The main results in Table~\ref{tab:nmp_results} and Figure~\ref{fig:nmp_pareto} mirror the 2D maze experiments: Ambient achieves the highest success rate while remaining substantially smoother than co-training. The full experimental details and ablations are in Appendix~\ref{sec:appendix:nmp}.

\textbf{Implications for Neural Motion Planning (NMP).} Most neural motion planners~\cite{dalal2024neuralmpgeneralistneural, qureshi2019motionplanningnetworks} are trained on large datasets of sampling-based trajectories (akin to $\mathcal D_q$) since they are inexpensive to generate. Unfortunately, sampling-based trajectories are non-smooth~\cite{LaValle1998RapidlyexploringRT}; thus, these approaches choose to prioritize data scale over quality. Our results suggest that an NMP trained with Ambient can enjoy the best of both worlds: augmenting the existing sampling-based datasets with a small amount of high-quality data could yield planners that are both general and smooth.

\subsection{Sim-to-Real Distribution Shift}
\label{sec:controlled_experiments:sim2real}

We evaluate Ambient Diffusion Policy on sim-and-real co-training using the planar pushing setup from \cite{wei2025empirical} (Figure~\ref{fig:teaser}c), a canonical task in robot imitation learning. Our experiments use the \textit{sim-and-target} problem from \citet{wei2025empirical} as a controlled proxy for the sim-and-real problem.

$\mathcal{D}_p$ consists of 50 teleoperated demonstrations in the \textit{target} environment; $\mathcal{D}_q$ consists of 2000 trajectories generated by a motion planner \cite{graesdal2024tightconvexrelaxationscontactrich} in the \textit{simulated} environment. We use the same datasets from \citet{wei2025empirical} to enable direct comparisons with their method. A policy rollout is successful if it pushes the T to the target pose within a time limit.

\begin{table}[!t]
    \centering
    \caption{\textbf{Main planar pushing results (200 trials).} All Ambient variations outperform co-training~\cite{wei2025empirical}.}
    \label{tab:sim_and_real_results}
    \scriptsize
    \setlength{\tabcolsep}{4pt}
    \renewcommand{\arraystretch}{1.7}
    \resizebox{\textwidth}{!}{%
    \begin{tabular}{l|ccccc}
        \toprule
        Training Algo. & Data Filtering & Co-training~\cite{wei2025empirical} & \makecell{Ambient\\($t_{\min}$ per dataset)} & \makecell{Ambient\\($t_{\min}$ per datapoint)} & \makecell{Ambient + Locality\\($\sigma_{t_{\max}}=0.025$)} \\
        \midrule
        Success Rate $\uparrow$ & $56.5^{+7.0}_{-7.2}$ \% & $84.5^{+4.7}_{-5.8}$ \% & $87.0^{+4.3}_{-5.5}$ \% & $\mathbf{93.5^{+3.0}_{-4.4}}$ \% & $92.0^{+3.4}_{-4.7}$ \% \\
        \bottomrule
    \end{tabular}}
\end{table}

Table~\ref{tab:sim_and_real_results} reports the success rate for three variations of Ambient Diffusion Policy that annotate $t_{\min}$ at two granularities: \textit{per dataset} (with a parameter sweep) and \textit{per datapoint} (with the classifier approach). We explore intermediate granularities, scaling laws, and more ablations in Appendix~\ref{sec:appendix:sim_and_real}. All Ambient variations outperform the best co-trained policy~\cite{wei2025empirical}. Annotating $t_{\min}$ per datapoint performs the best because even within a single dataset, the utility of each datapoint can vary. As before, Ambient learns from suboptimal simulation data by leveraging the diffusion hierarchy. $\mathcal {D}_p$ and $\mathcal{D}_q$ share the same high-level strategy; thus, $\mathcal D_q$ is safe to use at high noise. Locality ($t_{\max}>0$) does not help since $\mathcal {D}_p$ and $\mathcal{D}_q$ exhibit different low-level contact dynamics.

\subsection{Task Mismatch}
\label{sec:controlled_experiments:bin_sorting}

\begin{table}[!t]
    \centering
    \caption{\textbf{Main block sorting results ($\sim$800 blocks).} Only Ambient Diffusion Policy (Locality) performs well on all metrics.}
    \label{tab:bin_sorting_results}
    \scriptsize
    \newcommand{\rowstrut}{\rule[-5pt]{0pt}{14pt}}%
    \begin{tabular*}{\textwidth}{@{\extracolsep{\fill}}l|cccc@{}}
        \toprule
        \makecell{Training Algo.} & \makecell{Data Filtering} & \makecell{$\mathcal{D}_q$ Only} & \makecell{Co-train\\($\alpha^*=0.9$)} & \makecell{Locality\\($\sigma_{t_{\max}^*}=0.46$)} \\
        \midrule
        \rowstrut Logic $\uparrow$        & $\mathbf{98.6^{+0.8}_{-1.5}}$ \% & $3.0^{+1.6}_{-1.2}$ \% & $26.0^{+3.4}_{-3.2}$ \% & $98.2^{+0.8}_{-1.2}$ \% \\
        \rowstrut Motion $\uparrow$       & $61.9^{+3.4}_{-3.5}$ \% & $83.0^{+2.5}_{-2.8}$ \% & $87.2^{+2.2}_{-2.5}$ \% & $\mathbf{95.0^{+1.4}_{-1.7}}$ \% \\
        \midrule
        \rowstrut Success Rate $\uparrow$ & $61.0^{+3.4}_{-3.5}$ \% & $2.5^{+1.3}_{-1.0}$ \% & $22.7^{+3.1}_{-2.9}$ \% & $\mathbf{93.3^{+1.6}_{-2.0}}$ \% \\
        \bottomrule
    \end{tabular*}
\end{table}

Ambient can leverage ``locality'' to learn local action primitives (e.g., grasping) from data collected on a different task. Consider the block sorting task in Figure~\ref{fig:teaser}d. The robot must sort red and blue blocks into the left and right bins, respectively. This requires two distinct skills: \textit{motion} (grasping and placing blocks) and \textit{logic} (selecting the correct bin). We design a metric to evaluate each skill:

\begin{itemize}[leftmargin=2em, topsep=1pt, itemsep=1pt, parsep=0pt]
    \item \textbf{Motion metric:} $\frac{\text{\# blocks placed}}{\text{\# total blocks}}$. This metric is independent of logical reasoning because the numerator counts any placed block, regardless of logical correctness.
    \item \textbf{Logic metric:} $\frac{\text{\# blocks placed correctly}}{\text{\# blocks placed}}$. This metric is independent of pick-and-place ability because the denominator only counts successfully placed blocks.
\end{itemize}
The overall success rate is $\frac{\text{\# blocks placed correctly}}{\text{\# total blocks}}$, the product of the two metrics. $\mathcal{D}_p$ contains 50 demonstrations with the correct sorting. $\mathcal{D}_q$ contains 200 demonstrations with the opposite sorting. To isolate the effect of locality, we set $t_{\min}=0$ and sweep $t_{\max}$ for $\mathcal D_q$.

Table \ref{tab:bin_sorting_results} presents the results. Co-training performs poorly because training on $\mathcal{D}_q$ at all diffusion times teaches the policy both the useful motion primitives and the incorrect logic. Ambient Diffusion Policy solves this problem by restricting $\mathcal{D}_q$ to $t\in[0, t_{\max})$. As illustrated in Figures~\ref{fig:visualize_backward} and~\ref{fig:bin_sorting_combined}, the policy learns motion primitives at low diffusion times without attending to the (incorrect) task-level structure of the demonstrations. This results in near-perfect scores on both metrics simultaneously.

\begin{figure}[!t]
    \centering
    \includegraphics[width=0.49\linewidth]{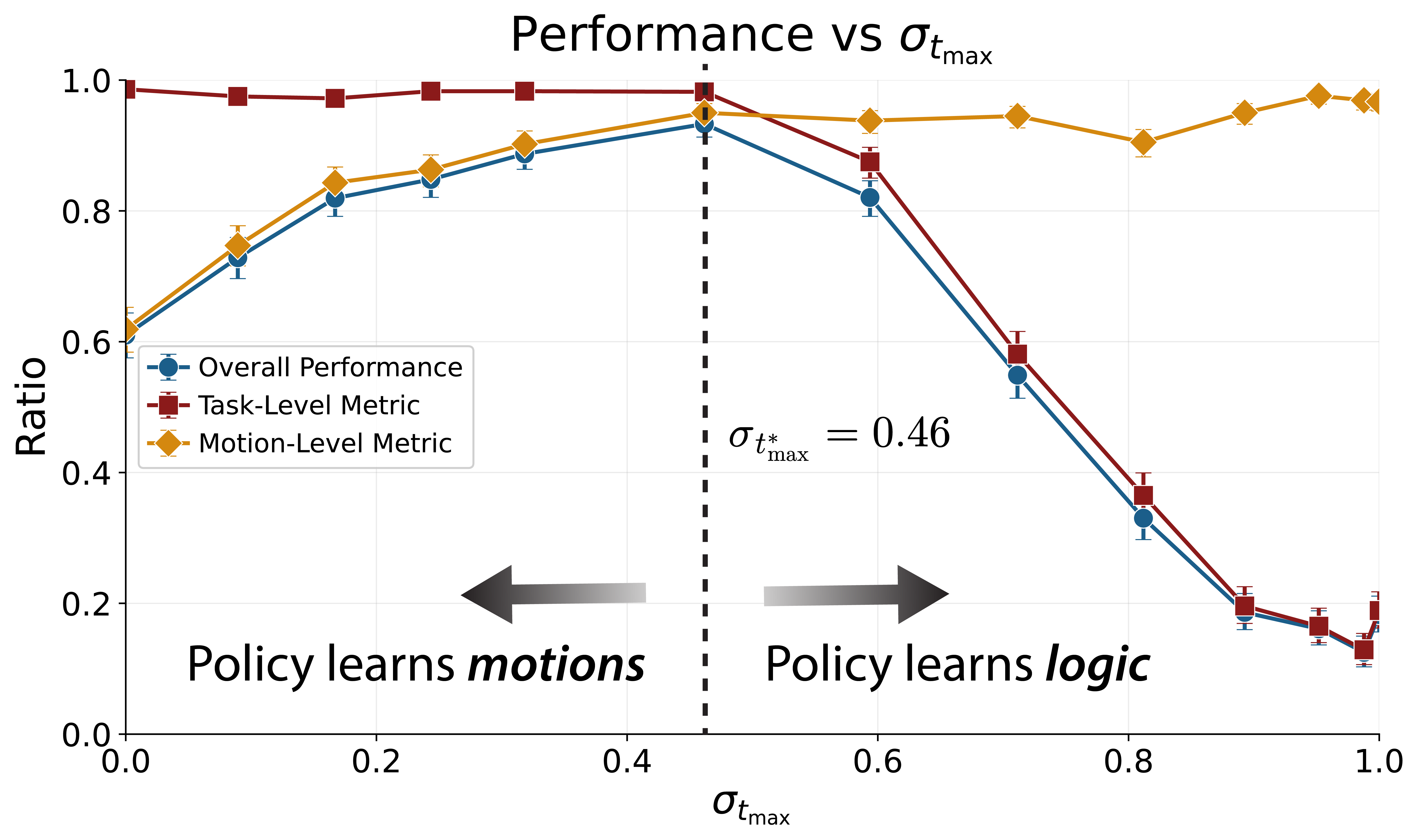}\hfill
    \includegraphics[width=0.49\linewidth]{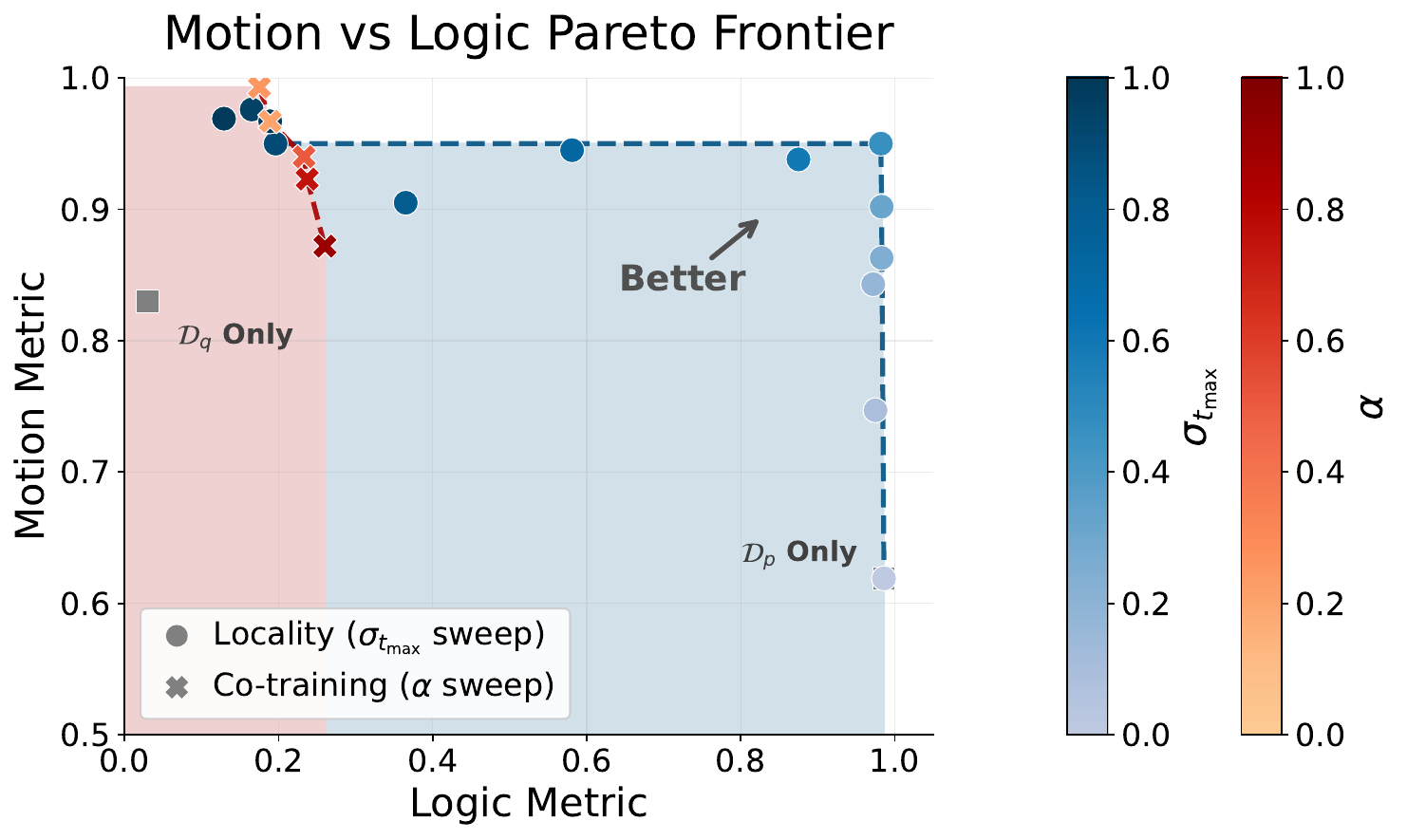}
    \caption{\textbf{Left:} \textbf{Performance metrics vs $\sigma_{t_{\max}}.$} The logic metric plateaus before $\sigma_{t_{\max}^*}=0.46$ and deteriorates rapidly afterwards; the opposite is true for the motion metric. Thus, the policy learns \textit{local} motion-level features for $t\in[0, t_{\max}^*)$ and \textit{global} logic-level features for $t\in(t_{\max}^*, T]$. \textbf{Right:} Unlike co-training, Ambient achieves a near-optimal trade-off between logic and motion metrics.}
    \label{fig:bin_sorting_combined}
\end{figure}

\textbf{Ablations:} We present two ablations in Appendix~\ref{sec:appendix:block_sorting}. \textbf{1)} we add a one-hot encoding indicating the sorting direction to the policies. Ambient still outperforms the co-training. \textbf{2)} we train a policy that exclusively uses $\mathcal{D}_q$ for $t \in [0, t_{\max})$ and $\mathcal{D}_p$ for $t \in[t_{\max}, T]$. This is the \textit{locality}-version of the ablation in the 2D maze experiment. The resulting policy achieves a logic and motion score of $97.9\%$ and $93.8\%$, reinforcing our claim that action diffusion is hierarchical.

\subsection{Finetuning Comparison}
\label{sec:controlled_experiments:finetuning}
\begin{figure}[!t]
    \centering
    \includegraphics[width=\linewidth]{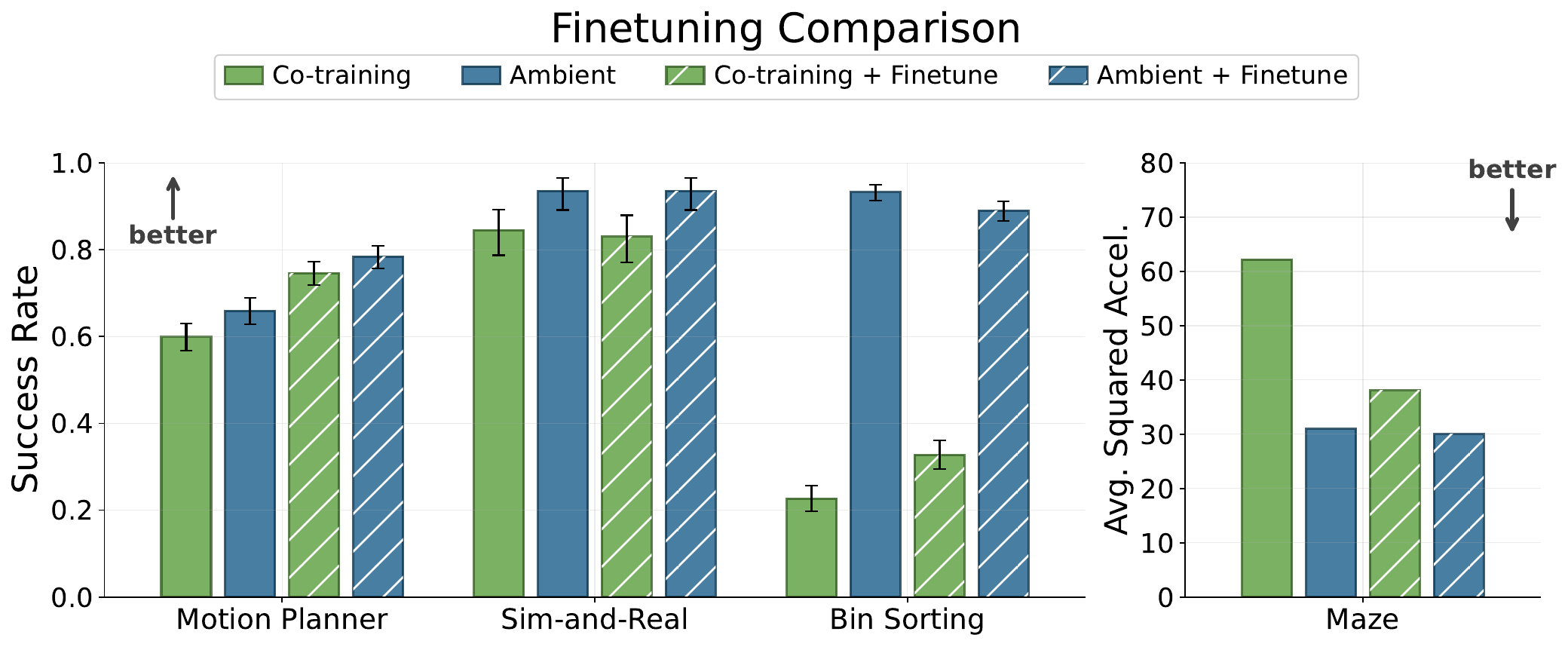}
    \caption{\textbf{Left:} success rate of policies finetuned from different base policies. \textbf{Right:} smoothness of finetuned policies in the maze experiment. We compare smoothness instead of success rate since all policies achieved 99.0\%+ success rate.}
    \label{fig:finetuning_comparison}
\end{figure}
Finally, we compare against the finetuning baseline. For each experiment, we finetune the best co-trained policy and the best Ambient policy on $\mathcal D_p$; implementation details are in Appendix~\ref{sec:appendix:finetuning_implementation}.

Figure~\ref{fig:finetuning_comparison} suggests three findings. First, finetuning an Ambient base policy consistently outperforms finetuning a co-trained base policy. Second, the Ambient \textit{base} policy can often outperform the \textit{finetuned} co-trained policy. Third, finetuning an Ambient base policy does not always help. We hypothesize that the Ambient policies already ignore undesirable features in the suboptimal data.

\section{Scaling Experiments: Open X-Embodiment}
\label{sec:oxe_experiments}
Having shown the generality of Ambient Diffusion Policy, we now scale to Open X-Embodiment (OXE)~\cite{open_x_embodiment_rt_x_2023}. OXE is a large collection of real-world data with heterogeneous quality and unstructured distribution shifts. Our goal is to extract useful learning signal from as much of this ``suboptimal'' data as possible. The experiments evaluate Ambient Diffusion Policy on two real-world tasks.

\textbf{1) Table Cleaning} (Figure~\ref{fig:teaser}e): the robot must open the drawer, place objects inside, and close the drawer. The diversity of the test-time objects (unseen during training) makes this challenging. Performance is measured with the task completion rubric in Table~\ref{tab:table_cleaning_rubric}.

\textbf{2) Tower Building} (Figure~\ref{fig:teaser}f): the robot must stack blocks in alternating directions and colors as high as possible. This task requires precise manipulation since placement errors compound as the tower grows. Performance is measured by the number of blocks placed before toppling the tower.

The training procedure for both tasks is identical. $\mathcal D_p$ consists of a small number of demonstrations for each task. $\mathcal D_q$ is one of two subsets of the Open X-Embodiment (OXE) dataset: \textbf{1) Magic Soup++ (MS++)}, 27 datasets from OXE used by OpenVLA \cite{kim2024openvla}, or \textbf{2) Custom OXE (COXE)}, 48 datasets from OXE that contain MS++ as a subset. $t_{\min}$ and $t_{\max}$ are annotated according to Phase 1 of the method. We note that COXE was curated to be as inclusive as possible; details are in Appendix~\ref{sec:appendix:oxe}.

Lastly, for both co-training and Ambient, we re-weight each dataset's sampling probability at every diffusion time. Let $w_i$ denote the weight for dataset $i$, and $S_t$ denote the indices of datasets that are permissible at diffusion time $t$. The sampling probability of dataset $i$ at diffusion time $t$ is
\begin{equation}
    \tilde{w}_t^{(i)} = \begin{cases}
        \dfrac{w_i}{\sum_{j\in S_t} w_j} & \text{if } i \in S_t \\
        \quad \quad 0 & \text{otherwise}
    \end{cases}.
\label{eq:reweighting}
\end{equation}
In co-training, $S_t$ contains all datasets for all $t\in[0,T]$; in Ambient, $S_t$ depends on each dataset's $t_{\min}$ and $t_{\max}$ (see Figures~\ref{fig:weighting_visualization} and~\ref{fig:tower_building_weighting_visualization}). Re-weighting marginally improves the Ambient policies but is not required; in contrast, it is \textit{necessary} for co-training (see Section~\ref{sec:oxe_experiments:ablations}).

\subsection{Main Results}
\label{sec:oxe_experiments:main_results}

\textbf{Table Cleaning ($\mathcal D_p$ with 50 demos):}
Data filtering achieves $68.2\%$ task completion; co-training adds just 2-3\% and plateaus when $\mathcal D_q$ scales from MS++ to COXE. On the other hand, the best Ambient Diffusion Policy outperforms co-training by $12\%$ and improves when $\mathcal D_q$ scales from MS++ to COXE. Adding locality ($t_{\max} > 0$) appears to provide further gains.

\textbf{Table Cleaning ($\mathcal{D}_p$ with 150 demos):}
When $\mathcal D_p$ is large, data filtering improves to $80.1\%$, and training on suboptimal data provides less relative value. However, for many real-world tasks, collecting enough data in $\mathcal D_p$ could be prohibitively expensive \cite{zhao2024alohaunleashedsimplerecipe}. Nonetheless, Ambient continues to outperform co-training by up to $10\%$. Adding locality did not improve performance in this setting. We hypothesize that when $\mathcal{D}_p$ is large, the policy can learn the relevant motion primitives from $\mathcal{D}_p$, making data from $\mathcal{D}_q$ unnecessary or even harmful with the wrong choice of $t_{\max}$. This is consistent with the classifier annotation method, which would have assigned $t_{\max}=0$ to nearly all datasets.

\textbf{Tower Building ($\mathcal{D}_p$ with 35 demos):} On average, the Ambient Diffusion Policies build up to $84\%$ and $33\%$ taller than the data filtering and co-training baselines, respectively (Figure~\ref{fig:oxe_results}). Including $\mathcal{D}_q$ at low diffusion times appears to improve performance.

\begin{figure*}
    \centering
    \includegraphics[width=\linewidth]{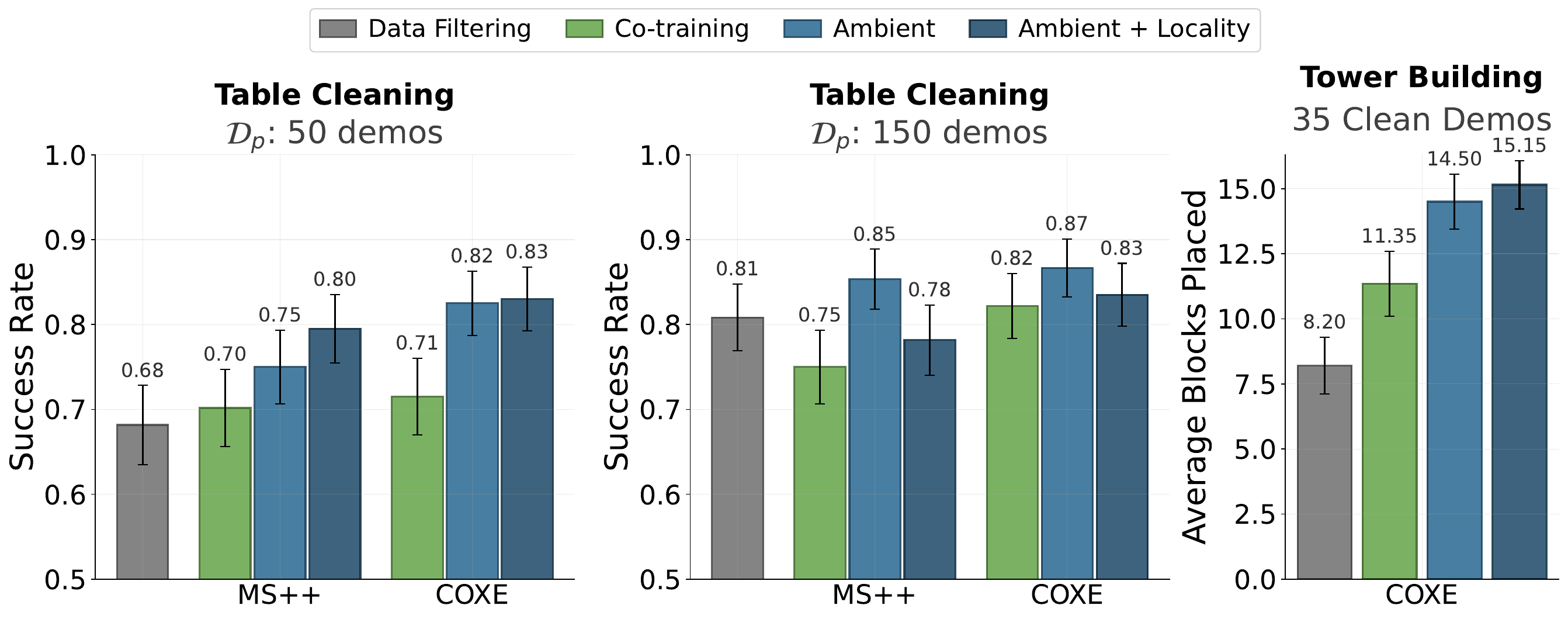}
    \caption{\textbf{When scaled to OXE, Ambient Diffusion Policy outperforms data filtering and co-training} by up to 15\% on table cleaning and 84\% on tower building (20 trials per policy).}
    \label{fig:oxe_results}
\end{figure*}

\subsection{Ablations}
\label{sec:oxe_experiments:ablations}

\begin{figure}[!t]
    \centering
    \includegraphics[width=\linewidth]{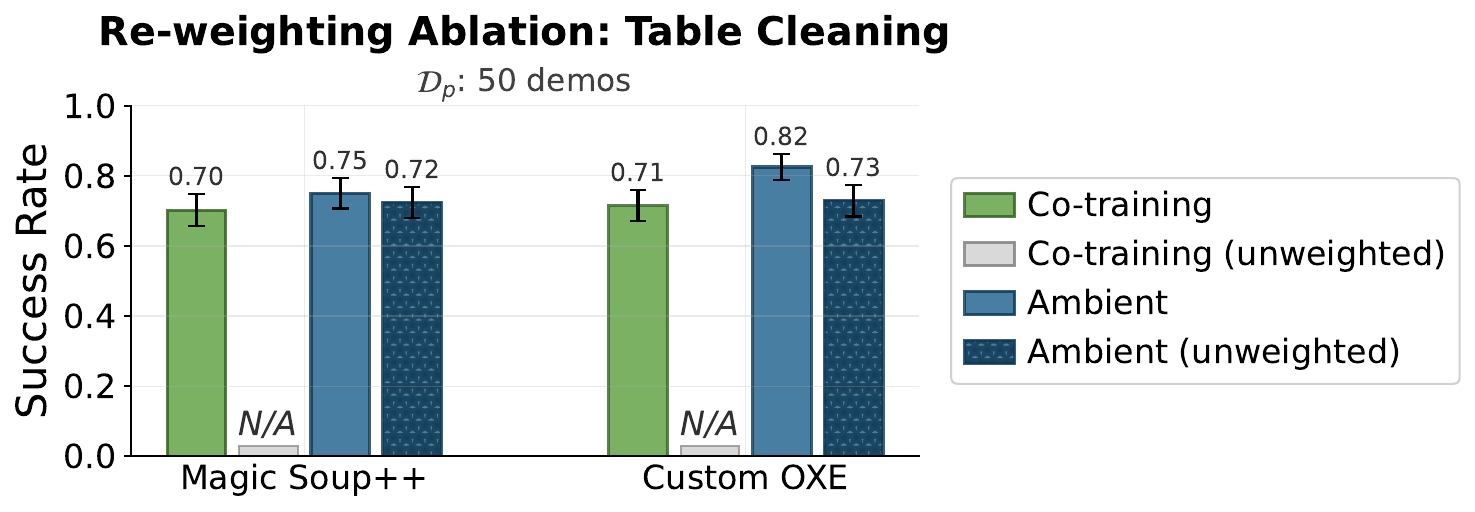}
    \caption{\textbf{Ambient is significantly less sensitive to dataset re-weighting than co-training.} The Ambient Diffusion Policy performed at most 9\% worse without re-weighting. The co-trained policies were too dangerous to evaluate without re-weighting.}
    \label{fig:table_cleaning_weighting_ablation}
\end{figure}

\textbf{Dataset Re-weighting:} Figure~\ref{fig:table_cleaning_weighting_ablation} ablates the effect of re-weighting in Eq.~\eqref{eq:reweighting} on co-training and Ambient. Re-weighting improves policy performance on the table cleaning task by $3$--$9\%$. On the other hand, re-weighting is \textit{necessary} for co-training: the unweighted co-trained policies were too unsafe to evaluate on hardware. This contrast highlights another practical advantage of Ambient: the training algorithm is inherently less sensitive to dataset weights because $\mathcal{D}_q$ is only used when $p_t \approx q_t$. Co-training uses $\mathcal{D}_q$ at all diffusion times, making the weights critical and fragile.

\textbf{Additional Ablations:} We present two further ablations in Appendix~\ref{sec:appendix:oxe_ablations}. \textbf{1)} Finetuning the OXE policies on $\mathcal{D}_p$ produced no statistically significant change in performance (Appendix~\ref{sec:appendix:finetuning_oxe}), consistent with the controlled experiments (Section~\ref{sec:controlled_experiments:finetuning}). \textbf{2)} On table cleaning, the Ambient Diffusion Policies require 25-40\% fewer grasps and less time per object than their co-trained counterparts.

\section{Limitations and Future Work}
\label{sec:limitations}
Ambient Diffusion Policy does not yet handle distribution shifts in the observation space. We explore two initial attempts in Appendix~\ref{sec:appendix:obs_shift}. The classifier-based annotation method for $t_{\min}$ and $t_{\max}$ is theoretically grounded, but does not always outperform a (costly) hyperparameter sweep. Better annotation methods could improve performance. In addition, the theory in Section~\ref{sec:theory} is limited to Gaussians; extending the results to a broader class of distributions could provide further insights. Lastly, our method is more computationally expensive than data filtering since it trains on more data, but in most cases, its benefits are worthwhile.

Ambient Diffusion Policy requires the user to partition their data into $\mathcal{D}_p$ and $\mathcal{D}_q$. The algorithm itself poses no restrictions on this partitioning, but it is an important design decision for downstream performance. This paper defines $\mathcal{D}_p$ as data from the target task and environment, but the right choice depends on the goal. For task-specific training or finetuning, our definition is appropriate; for pretraining a generalist policy, prior work suggests that $\mathcal{D}_p$ should contain diverse and high-quality demonstrations~\cite{hu2025datascalinglawsimitation, belkhale2023dataqualityimitationlearning}.
The former suggests that we could expand our finetuning dataset to include suboptimal data. The latter requires a more principled understanding of data quality in robotics \cite{belkhale2023dataqualityimitationlearning, mandlekar2021matterslearningofflinehuman}.
Both settings are exciting directions for future work.

\section{Conclusion}
\label{sec:conclusion}

High-quality robot data is scarce, making it essential to learn from suboptimal data sources. But even as data collection scales, suboptimal data will continue to grow alongside high-quality data. Thus, the problem of learning from suboptimal data sources is not a temporary artifact of today's data scarcity, but rather a fundamental and persistent challenge in robot imitation learning. The dominant approach for addressing this problem is to filter the worst data and co-train on the rest via re-weighting; but filtering is wasteful, and co-training teaches a policy the meaningful and the harmful parts of suboptimal distributions. In this paper, we propose Ambient Diffusion Policy, a principled method for training on suboptimal demonstrations.

Neighboring fields have recognized the power of noise-dependent data usage \cite{daras2025ambient, proteins}. Our key experimental and theoretical insight is that this training paradigm is also well-suited for robotics. Our method makes no assumptions about the nature of the action suboptimality, outperforms existing baselines, and greatly expands the set of useful data sources for robot imitation learning. With Ambient Diffusion Policy, the question shifts from \textit{which} data sources should be used, to \textit{when} each should be used in the diffusion process.

\clearpage
\acknowledgments{We thank Pablo Parrilo, Asuman Özdağlar, Abhinav Agarwal, Cole Becker, Ananth Kumar Srinivasan, Sarthak Kaingade, Max Wilcoxson, and the Robot Locomotion Group for fruitful discussions. We thank 
John Marangola for help with the hardware setup. 

This work was supported by the National Science Foundation Graduate Research Fellowship Program under Grant No. 2141064, and the Natural Sciences and Engineering Research Council of Canada CGS D-587703. Any opinions, findings, and conclusions or recommendations expressed in this material are those of the author(s) and do not necessarily reflect the views of the National Science Foundation. We thank MIT Supercloud \cite{reuther2018interactive} for providing computational resources.
}

\bibliography{references}

\clearpage
\appendix
\label{sec:appendix}

\addtocontents{toc}{\protect\setcounter{tocdepth}{2}}
\renewcommand{\contentsname}{Appendix: Table of Contents}
\tableofcontents
\clearpage

\section{Ambient Diffusion: Theoretical Foundations}
\label{sec:appendix:theory}

This appendix provides the proofs of the theoretical results stated in Section~\ref{sec:theory}: the spectral power law implies both \textit{fast contraction} through noise (Theorem~\ref{th:power_law_implies_contraction}) and \textit{locality} (Theorem~\ref{th:locality}) of the optimal denoiser. We then formally justify the classifier-based $t_{\min}$ annotation method from Phase 1 of our algorithm (Theorem~\ref{th:approximate-classifier-threshold}). The Spectral Power Law (Definition~\ref{def:spectral_power_law}) and the general TV-contraction bound of Eq.~\eqref{eq:tv_contraction_bound} are stated in Section~\ref{sec:theory}.

\paragraph{Preliminaries.}
Given distributions $P, Q$, we recall the following information-theoretic divergences, used throughout this appendix:
\begin{align*}
    \mathrm{d}_{\mathrm{TV}}(P, Q) &= \mathbb{E}_{Q}\!\left|1 - \frac{dP}{dQ}\right|, \\
    \chi^2(P \| Q) &= \mathbb{E}_{Q}\!\left(\frac{dP}{dQ}\right)^2 - 1, \\
    \mathrm{LC}(P, Q) &= 2 \chi^2\!\left(P \,\big\|\, \tfrac{1}{2}P + \tfrac{1}{2}Q\right).
\end{align*}
Pinsker's inequality gives $\mathrm{d}_{\mathrm{TV}}(P, Q)^2 \leq \tfrac{1}{2}\mathrm{KL}(P, Q)$, and the LeCam inequality gives $\mathrm{d}_{\mathrm{TV}}(P, Q)^2 \leq \tfrac{1}{2}\mathrm{LC}(P, Q)$. We refer the reader to~\cite{polyanskiy2025information} for further background.

\subsection{Spectral Power Law implies fast contraction through noise}

\begin{proof}[Proof of Theorem~\ref{th:power_law_implies_contraction}]
    We start by making the observation that since the Discrete Fourier Transform (DFT) is an invertible operation, it holds that:
    \begin{gather}
        \mathrm{d}_{\mathrm{KL}}(p_t \| q_t) = \mathrm{d}_{\mathrm{KL}}(\mathcal F \# p_t \| \mathcal F \# q_t).
    \label{eq:kl_does_not_change}
    \end{gather}
    We denote with $F$ the DFT-matrix. Multiplying with $F$ vectors that are distributed according to a zero-mean Gaussian gives another zero-mean Gaussian distribution with covariance:
    \begin{gather}
        \Sigma_{\mathcal F \# p_t} = \mathbb E_{x \sim p_t}\left[ (F x) (F x)^* \right] \\
        = F \mathbb E_{x \sim p_t}[xx^T] F^* \\
        = F \Sigma_{p_t} F^*.
    \end{gather}

    We have that $\Sigma_{p_t} = \Sigma_{p} + \sigma^2_t I$, where $\Sigma_p$ is a circulant matrix (stationarity assumption). Hence, $\Sigma_{p_t}$ is also a circulant matrix, and thus it can be diagonalized as follows:
    \begin{gather}
        \Sigma_{p_t} = F^* (\Lambda_p + \sigma^2_t I)F.
    \end{gather}
    Hence,
        \begin{gather}
        \Sigma_{\mathcal F \# p_t} = F F^* (\Lambda_p + \sigma^2_t I)F F^* \\
        = \Lambda_p + \sigma^2_t I.
    \end{gather}
    Similarly, $\Sigma_{\mathcal F \# q_t} = \Lambda_q + \sigma^2_t I$.

    Using the formula for the KL distance between two zero-mean, diagonal, multivariate Gaussians, we get:
    \begin{gather}
        \mathrm{d}_{\mathrm{KL}}(p_t \| q_t) \leq \frac{1}{2}\sum_{i=0}^{N-1}\frac{(\Lambda_p)_{i, i} + \sigma_t^2}{(\Lambda_q)_{i, i} + \sigma_t^2} - \mathrm{ln}\left(\frac{(\Lambda_p)_{i, i} + \sigma_t^2}{(\Lambda_q)_{i, i} + \sigma_t^2} \right) - 1 \\
        = \frac{1}{2}\sum_{f=0}^{N-1}\frac{S_p(f) + \sigma_t^2}{S_q(f) + \sigma_t^2} - \mathrm{ln}\left(\frac{S_p(f) + \sigma_t^2}{S_q(f) + \sigma_t^2} \right) - 1 \\
        \leq \sum_{f > f^\star}^{\lfloor N/2 \rfloor}\frac{S_p(f) + \sigma_t^2}{S_q(f) + \sigma_t^2} - \mathrm{ln}\left(\frac{S_p(f) + \sigma_t^2}{S_q(f) + \sigma_t^2} \right) - 1,
        \label{eq:polynomial_bound_incoming}
    \end{gather}
    where in the last inequality we used the fact that the spectrum is symmetric around the Nyquist frequency and that $p$ and $q$ agree on all frequencies up to $f^\star$.

    Let's consider the function: $g(r) = r - \ln r - 1$. We will start by showing that $g(r) \leq (r-1)^2$ for all $r\in [1/2, \infty)$.

    Let 
    \begin{gather}
       h(r) = g(r) - (r-1)^2 \\
       = r  - \ln r - 1 -r^2 +2r -1 \\
       = -r^2  + 3r - \ln r -2.
    \end{gather}
    We hence have:
    \begin{gather}
        h'(r) = -2r + 3 - \frac{1}{r} \\
        = \frac{-2r^2 + 3r - 1}{r}.
    \end{gather}
    The determinant of the nominator is $\Delta = 1$ and there are two points that make $h'(r) = 0$, which are:
    \begin{gather}
        r_{1} = \frac{1}{2}, \quad r_2 = 1.
    \end{gather}
    We have that $h'(r)$ is negative for $r > 1$ and for $r \in (0, 1/2)$. Hence, in $[1/2, \infty)$ the maximum of $h(r)$ is $h(1) = 0$. Hence, the function $h(r)$ is $\leq 0$ for all $r \in [1/2, \infty)$.

    We now want to get some bounds on the quantity $\frac{S_p(f) + \sigma_t^2}{S_q(f) + \sigma_t^2}$. First, note that:
    \begin{gather}
        \frac{S_p(f) + \sigma_t^2}{S_q(f) + \sigma_t^2} \geq \frac{\sigma_t^2}{\max(S_p(f), S_q(f)) + \sigma_t^2}.
    \end{gather}
    For all $\lfloor N/2 \rfloor \geq f > f^\star$, we further have (by Power-law tail assumption) that:
    \begin{gather}
        \max(S_p(f), S_q(f)) \le C f^{-\alpha},
    \end{gather}
    which gives that:
    \begin{gather}
         \frac{S_p(f) + \sigma_t^2}{S_q(f) + \sigma_t^2} \geq \frac{\sigma_t^2}{Cf^{-\alpha} + \sigma_t^2}.
         \label{eq:partial_bound}
    \end{gather}

    By assumption, we have that:
    \begin{gather}
        \sigma_t^2 \ge  C \frac{1}{(f^\star)^{\alpha}}.
    \end{gather}
    But:
    \begin{gather}
        \frac{C}{f^\alpha} < \frac{C}{(f^\star)^\alpha}, \quad \forall f \text{ such that } f^\star < f \leq \lfloor N/2 \rfloor.
    \end{gather}
    Hence,
    \begin{gather}
          Cf^{-\alpha} < \sigma_t^2.
    \end{gather}
    Plugging this into Eq.~\eqref{eq:partial_bound}, we get:
    \begin{gather}
         \frac{S_p(f) + \sigma_t^2}{S_q(f) + \sigma_t^2} \geq 1/2.
         \label{eq:bound_on_r}
    \end{gather}

    By combining Eq.~\ref{eq:bound_on_r} and the fact that $g(r) \leq (r-1)^2$ for all $r\in [1/2, \infty)$, we can write Eq.~\eqref{eq:polynomial_bound_incoming} as follows:
    \begin{gather}
        \mathrm{d}_{\mathrm{KL}}(p_t \| q_t) \leq \sum_{f > f^\star}^{\lfloor N/2 \rfloor}\left(\frac{S_p(f) - S_q(f)}{S_q(f) + \sigma_t^2}\right)^2 \\
         \leq \sum_{f > f^\star}^{\lfloor N/2 \rfloor}\left(\frac{2 \max(S_p(f), S_q(f))}{\sigma_t^2}\right)^2 \\
         \leq  \sum_{f > f^\star}^{\lfloor N/2 \rfloor} 4 \frac{C^2f^{-2\alpha}}{\sigma_t^4} \\
         = \frac{4C^2}{\sigma_t^4} \sum_{f > f^\star}^{\lfloor N/2 \rfloor} \left(\frac{1}{f}\right)^{2\alpha} \\
         \leq \frac{4C^2}{\sigma_t^4} \int_{f^\star}^{\infty} \left(\frac{1}{f}\right)^{2\alpha} \mathrm{d}f \\
         = \frac{4C^2}{\sigma_t^4 (2\alpha - 1) (f^\star)^{2\alpha - 1}}. 
    \end{gather}

    The TV distance bound follows from a direct application of Pinsker's inequality.
\end{proof}

\subsection{Spectral Power Law implies locality}
\begin{proof}[Proof of Theorem~\ref{th:locality}]
    Since $X_0$ is distributed according to a Gaussian distribution, the MMSE optimal denoiser is a linear function, given as:
    \begin{gather}
        \mathbb E[X_0 | X_t=x] = \Sigma ( \Sigma + \sigma_t^2I)^{-1}x.
    \end{gather}

    From the stationarity assumption, $\Sigma$ is a circulant matrix and so is the matrix $\Sigma + \sigma_t^2I$. The inverse of a circulant matrix is another circulant matrix, and multiplying two circulant matrices gives another circulant matrix. Hence, the whole matrix $ \Sigma ( \Sigma + \sigma_t^2I)^{-1}$ is circulant.

    Multiplication with a circulant matrix can be written as a circulant convolution; hence, we can express the optimal denoiser in the following equivalent way:
    \begin{gather}
        (\mathbb E[X_0 | X_t=x])_i = \sum_{j=0}^{N-1} k_t((i - j) \mod N) x_j,
    \end{gather}
    where $k_t$ is the first column of the matrix $\Sigma ( \Sigma + \sigma_t^2I)^{-1}$.
    Let us now analyze how the prediction of the optimal denoiser changes when, instead of feeding the full input, we input a masked version $M_ix$ of the input where $M_i$ is a circulant mask.

    We have that:
    \begin{gather}
        \mathrm{error}_i = \bigg|\sum_{j=0}^{N-1} k_t((i - j) \mod N) x_j \nonumber \\ - \sum_{j=0}^{N-1} k_t((i - j) \mod N) M_ix_j\bigg| \\
        = \bigg|\sum_{j: \mathrm{dist}(i, j) > L} k_t((i - j) \mod N) x_j\bigg|, \\
        \leq ||x||_\infty  \sum_{j: \mathrm{dist}(i, j) > L}\bigg| k_t((i - j) \mod N) \bigg|,
    \end{gather}
    where $ \mathrm{dist}(i, j) = \mathrm{min}(|i-j|, N - |i-j|)$. Let $m = (i-j) \mod N$. First note that $ L + 1 \leq m \leq  \lfloor N/2 \rfloor$. Second, note that for each value $m$, there are two indices $j$ that can give this value. Hence, we can rewrite the sum as:
    \begin{gather}
         \mathrm{error}_i \leq ||x||_\infty \sum_{m=L+1}^{\lfloor N/2 \rfloor} |k_t(m)| + |k_t(N-m)|.
    \end{gather}

    We now need to analyze what the kernel $k_t$ is. We have defined this vector as the first column of the matrix $\Sigma ( \Sigma + \sigma_t^2I)^{-1}$. Because the matrix is circulant, we have that $k_t(m) = k_t(N-m)$. Hence,
    \begin{gather}
         \mathrm{error}_i \leq 2||x||_\infty \sum_{m=L+1}^{\lfloor N/2 \rfloor} |k_t(m)|.
    \end{gather}    
    By definition:
    \begin{gather}
        k_t(m) = (\Sigma ( \Sigma + \sigma_t^2I)^{-1} e_1)(m).
    \end{gather}
    
    The matrix $\Sigma ( \Sigma + \sigma_t^2I)^{-1}$ is circulant, and hence it is diagonalized by the DFT. Hence, we have:
    \begin{gather}
        k_t(m) = (F^* W F e_1)(m),
    \end{gather}
    where $W$ is the diagonal matrix with diagonal entries $W_{f,f} = \frac{S(f)}{S(f) + \sigma_t^2}$. 

    We observe now that $k_t(m)$ is exactly the Inverse Discrete Fourier Transform of the Wiener filter. Hence, $k_t(m)$ can be equivalently written as:
    \begin{gather}
        k_t(m) = \frac{1}{N} \sum_{f=0}^{N-1} W_{f, f}e^{i \frac{2\pi f m}{N}}.
    \end{gather}
    Because the original signal $X_0$ is real-valued, its power spectrum and the resulting Wiener filter $W_f = W_{f,f}$ are symmetric around the Nyquist frequency, meaning $W_{N-f} = W_f$. This conjugate symmetry perfectly cancels the imaginary sine components, allowing us to write the kernel entirely in terms of real cosines:
    \begin{gather}
        k_t(m) = \frac{1}{N} \sum_{f=0}^{N-1} W_f \cos\left(\frac{2\pi f m}{N}\right).
    \end{gather}

    To extract the spatial decay of this kernel as $m$ increases, let $\omega_m = \frac{2\pi m}{N}$. We multiply both sides of the equation by the phase-shifting factor $2(1 - \cos \omega_m)$:
    \begin{gather}
        2(1 - \cos \omega_m) k_t(m) \nonumber \\
        = \frac{1}{N} \sum_{f=0}^{N-1} W_f \left[ 2 \cos(f \omega_m) - 2 \cos(f \omega_m) \cos(\omega_m) \right].
    \end{gather}
    Applying the standard trigonometric identity $2 \cos(A)\cos(B) = \cos(A+B) + \cos(A-B)$ to the second term gives:
    \begin{gather}
        2(1 - \cos \omega_m) k_t(m) \nonumber \\ = \frac{1}{N} \sum_{f=0}^{N-1} W_f \Big[ 2 \cos(f \omega_m) - \cos((f+1)\omega_m) \nonumber \\ - \cos((f-1)\omega_m) \Big].
    \end{gather}
    Because the discrete spectrum is periodic ($W_N = W_0$) and the cosine function is periodic over $N$, we can shift the indices of summation for the $(f+1)$ and $(f-1)$ terms to group them by the common $\cos(f \omega_m)$ factor. This perfectly regroups the sum into the central second difference $\Delta^2 W_f = W_{f+1} - 2W_f + W_{f-1}$:
    \begin{gather}
        2(1 - \cos \omega_m) k_t(m) = - \frac{1}{N} \sum_{f=0}^{N-1} (\Delta^2 W_f) \cos(f \omega_m).
    \end{gather}

    We now take the absolute value of both sides. For the left side, we apply the half-angle trigonometric identity $2(1 - \cos \omega_m) = 4 \sin^2\left(\frac{\omega_m}{2}\right) = 4 \sin^2\left(\frac{\pi m}{N}\right)$. Since $|\cos(f \omega_m)| \le 1$, it drops out of the right side:
    \begin{gather}
        4 \sin^2\left(\frac{\pi m}{N}\right) |k_t(m)| \le \frac{1}{N} \sum_{f=0}^{N-1} |\Delta^2 W_f|.
    \end{gather}
    
    Crucially, the assumption that $S(f)$ follows a spectral power law dictates the structural behavior of this Wiener filter. For high frequencies $f \ge 1$, the power law $S(f) = C f^{-\alpha}$ ensures that $W_f$ monotonically and smoothly decays to zero. We can bound the discrete slope of this filter by analyzing its continuous extension $W(f) = \frac{C}{C + \sigma_t^2 f^\alpha}$. Its derivative is:
    \begin{gather}
        W'(f) = \frac{-\alpha C \sigma_t^2 f^{\alpha-1}}{(C + \sigma_t^2 f^\alpha)^2}.
    \end{gather}
    By the Arithmetic Mean - Geometric Mean inequality, $(C + \sigma_t^2 f^\alpha)^2 \ge 4 C \sigma_t^2 f^\alpha$. Substituting this into the derivative gives a strict bound on the slope:
    \begin{gather}
        |W'(f)| \le \frac{\alpha C \sigma_t^2 f^{\alpha-1}}{4 C \sigma_t^2 f^\alpha} = \frac{\alpha}{4f} \le \frac{\alpha}{4}.
    \end{gather}
    The sequence of discrete first differences $\Delta W_f$ samples this continuous slope. Over the positive half of the spectrum, the slope starts near zero, reaches a maximum magnitude bounded by $\alpha/4$, and gradually returns to zero. Because the function is convex in the tail, the second differences maintain a constant sign, allowing the absolute sum to telescope perfectly. The total absolute second variation for the positive half of the spectrum is therefore exactly bounded by twice the maximum slope, $2(\alpha/4) = \alpha/2$. Due to the conjugate symmetry of the spectrum, the negative frequencies contribute an identical amount. 
    
    Therefore, the total absolute second variation over the entire spectrum is strictly bounded by the power-law exponent:
    \begin{gather}
        \sum_{f=0}^{N-1} |\Delta^2 W_f| \le \alpha.
    \end{gather}
    Substituting this variation bound back into our inequality yields:
    \begin{gather}
        4 \sin^2\left(\frac{\pi m}{N}\right) |k_t(m)| \le \frac{\alpha}{N}.
    \end{gather}

    For the spatial domain $m \le \lfloor N/2 \rfloor$, the argument $\frac{\pi m}{N}$ falls within the interval $[0, \pi/2]$. In this domain, we can apply the linear lower bound for the sine function, $\sin(x) \ge \frac{2x}{\pi}$, which gives $\sin^2\left(\frac{\pi m}{N}\right) \ge \frac{4 m^2}{N^2}$. Substituting this yields:
    \begin{gather}
        4 \left(\frac{4 m^2}{N^2}\right) |k_t(m)| \le \frac{\alpha}{N} \implies |k_t(m)| \le \frac{\alpha N}{16 m^2}.
    \end{gather}

    We have thus proven that the optimal denoiser kernel decays spatially at a rate of $\mathcal{O}(1/m^2)$. Returning to the error of the masked denoiser, we substitute this polynomial decay bound into the error sum:
    \begin{gather}
        \mathrm{error}_i \leq 2\|x\|_\infty \sum_{m=L+1}^{\lfloor N/2 \rfloor} |k_t(m)| \nonumber \\ \leq \frac{\alpha N \|x\|_\infty}{8} \sum_{m=L+1}^{\lfloor N/2 \rfloor} \frac{1}{m^2}.
    \end{gather}
    We can upper-bound this remaining discrete sum by the corresponding continuous integral over the spatial tail:
    \begin{gather}
        \sum_{m=L+1}^{\lfloor N/2 \rfloor} \frac{1}{m^2} \le \int_{L}^{\infty} \frac{1}{y^2} \mathrm{d}y = \frac{1}{L}.
    \end{gather}
    Hence, the approximation error of the masked denoiser is strictly bounded by:
    \begin{gather}
        \mathrm{error}_i \le \frac{\alpha N \|x\|_\infty}{8 L}.
    \end{gather}
    This demonstrates that as the mask size $L$ increases, the error drops off linearly, mathematically proving that the spectral power law forces the MMSE optimal denoiser to be spatially local.
\end{proof}

\subsection{Theoretical Justification of the Classifier-Based Annotation}
\label{sec:appendix:classifier_theory}

We prove that the classifier-based $t_{\min}$ annotation method introduced in Phase 1 of Section~\ref{sec:method} comes with a formal guarantee: suppose the classifier, $f_\theta$,\footnote{Section~\ref{sec:method} uses $c_\phi$ to denote the classifier. We switched notation to $f_\theta$ for the proofs since they use $c$ to denote a Bernoulli random variable.} is $\varepsilon$-optimal and the threshold is $\tau$. Then annotating $t_{\min}$ using the method described in Section~\ref{sec:method} bounds the LeCam distance between the noisy distributions $p_t$ and $q_t$ (Theorem~\ref{th:approximate-classifier-threshold}). Simply put, it guarantees that $p_t$ and $q_t$ are similar (i.e. contracted) for $t > t_{\min}$.

\paragraph{Classifier threshold implies distributional closeness.}
Consider the population cross-entropy loss
\begin{equation*}
    \mathcal{L}_{\mathrm{CE}}(f) = -\mathbb{E}_{c, x}\!\left[(1-c) \log (1-f(x)) + c \log f(x)\right],
\end{equation*}
where we first sample $c \sim \mathrm{Bern}(1/2)$ and then sample $x \sim P$ if $c = 1$ or $x \sim Q$ otherwise. Let $M = \tfrac{1}{2} P + \tfrac{1}{2} Q$ denote the mixture. The optimal classifier is
\begin{equation}
    f^* = \arg\min_f \mathcal{L}_{\mathrm{CE}}(f), \qquad f^*(x) = \frac{dP}{dP+dQ}(x) = \frac{1}{2}\frac{dP}{dM}(x).
    \label{eq:optimal-classifier}
\end{equation}

\begin{theorem}[Optimal classifier threshold implies distributional closeness]
Let $f^*$ be the optimal classifier in Eq.~\eqref{eq:optimal-classifier}. Then
\begin{equation*}
    \mathbb{E}_Q f^*(x) = \frac{1}{2} - \frac{1}{2}\mathrm{LC}(P, Q).
\end{equation*}
In particular, $\mathbb{E}_Q f^*(x) \geq \tfrac{1}{2}-\tau$ implies $\mathrm{LC}(P, Q) \leq 2\tau$.
\label{th:optimal-classifier-threshold}
\end{theorem}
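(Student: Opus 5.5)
The plan is a direct change-of-measure computation with respect to the mixture $M = \tfrac12 P + \tfrac12 Q$. Both $P$ and $Q$ are absolutely continuous with respect to $M$, so the density ratio $r := \frac{dP}{dM}$ exists and takes values in $[0,2]$. By Eq.~\eqref{eq:optimal-classifier} the optimal classifier is $f^* = \tfrac12 r$. The key identity linking the two densities is
\[
\frac{dQ}{dM} = 2 - r,
\]
which follows from $dP + dQ = 2\,dM$.

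The first step rewrites the expectation under $Q$ as an expectation under $M$:
\[
\mathbb{E}_Q f^*(x) = \mathbb{E}_M\!\left[\tfrac12 r\,(2-r)\right] = \mathbb{E}_M[r] - \tfrac12\,\mathbb{E}_M[r^2].
\]
The second step uses $\mathbb{E}_M[r] = 1$, since $r$ is a density ratio. It also uses $\mathbb{E}_M[r^2] = 1 + \chi^2(P\,\|\,M)$, which is the definition of $\chi^2$ from the Preliminaries with $Q$ replaced by $M$. Together these give
\[
\mathbb{E}_Q f^*(x) = 1 - \tfrac12\bigl(1 + \chi^2(P\,\|\,M)\bigr) = \tfrac12 - \tfrac12\,\chi^2(P\,\|\,M).
\]

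This is exactly the claimed identity, with the LeCam quantity in the statement read as the mixture chi-square $\chi^2(P\,\|\,\tfrac12 P + \tfrac12 Q)$. This normalization is the one under which the statement's coefficient $\tfrac12$ is correct. If the Preliminaries' factor of $2$ is kept, the same computation yields the coefficient $\tfrac14$ instead. So the one place requiring care is fixing the normalization of $\mathrm{LC}$ consistently with the statement, and I would state explicitly in the proof that $\mathrm{LC}(P,Q)$ here denotes $\chi^2(P\,\|\,M)$.

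The "in particular" clause is then immediate: $\mathbb{E}_Q f^* \ge \tfrac12 - \tau$ rearranges to $\tfrac12\,\mathrm{LC}(P,Q) \le \tau$, that is, $\mathrm{LC}(P,Q) \le 2\tau$.

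Two routine checks remain, and neither is a real obstacle. First, $r \in [0,2]$ $M$-a.e., so every expectation above is finite and splitting $\mathbb{E}_M[r(2-r)]$ into two terms is legitimate. Second, $f^*$ is indeed the cross-entropy minimizer, which holds by pointwise minimization of the conditional log-loss; the theorem takes this as given via Eq.~\eqref{eq:optimal-classifier}.
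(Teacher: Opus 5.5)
Your computation is correct and follows the same strategy as the paper: both express $\mathbb{E}_Q f^*$ through the second moment of $dP/dM$. The paper gets there via the polarization identity $dP\,dQ=\tfrac12[(dP+dQ)^2-dP^2-dQ^2]$ plus the symmetry $\chi^2(P\|M)=\chi^2(Q\|M)$, whereas your substitution $dQ/dM=2-r$ gets there in one line.

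Your normalization caveat is also right, and it locates a slip in the paper's own proof. Substituting $\int\frac{dP^2}{dP+dQ}=\tfrac12(\chi^2(P\|M)+1)$ and its $Q$-analogue into $1-\tfrac12\int\frac{dP^2}{dP+dQ}-\tfrac12\int\frac{dQ^2}{dP+dQ}$ gives $\tfrac12-\tfrac14\chi^2(P\|M)-\tfrac14\chi^2(Q\|M)=\tfrac12-\tfrac12\chi^2(P\|M)$. The paper instead writes $\tfrac12-\tfrac12\chi^2(P\|M)-\tfrac12\chi^2(Q\|M)$. So with the Preliminaries' definition $\mathrm{LC}=2\chi^2(P\|M)$, the statement as printed is false. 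For mutually singular $P,Q$ one has $\mathbb{E}_Q f^*=0$, while $\tfrac12-\tfrac12\mathrm{LC}=-\tfrac12$. The correct forms are $\mathbb{E}_Q f^*=\tfrac12-\tfrac14\mathrm{LC}(P,Q)$ and $\mathrm{LC}(P,Q)\le 4\tau$.

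Of your two repairs, correcting the coefficient is preferable to redefining $\mathrm{LC}$. The Preliminaries' inequality $\mathrm{d}_{\mathrm{TV}}^2\le\tfrac12\mathrm{LC}$ (with the usual factor $\tfrac12$ in $\mathrm{d}_{\mathrm{TV}}$) is what turns this theorem into a closeness guarantee. That inequality is calibrated to the factor-$2$ normalization and fails if $\mathrm{LC}:=\chi^2(P\|M)$; again take $P\perp Q$, where both sides give $1\le\tfrac12$. Either way, the usable conclusion is $\mathrm{d}_{\mathrm{TV}}(P,Q)^2\le\chi^2(P\|M)\le 2\tau$. The constants in Theorem~\ref{th:approximate-classifier-threshold} likewise become $4\tau+4\sqrt{\varepsilon}$ in the factor-$2$ normalization.
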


\begin{proof}
We compute
\begin{equation*}
    \mathbb{E}_{Q} f^* = \int \frac{dP \, dQ}{dP+dQ} = \frac{1}{2}\int \frac{(dP+dQ)^2 - dP^2 - dQ^2}{dP+dQ}.
\end{equation*}
Then
\begin{equation*}
    \mathbb{E}_{Q} f^* = 1 - \frac{1}{2}\int \frac{dP^2}{dP+dQ} - \frac{1}{2}\int \frac{dQ^2}{dP+dQ}.
\end{equation*}
Noting $\int \frac{dP^2}{dP+dQ} = \tfrac{1}{2}(\chi^2(P \| M) + 1)$ gives
\begin{equation*}
    \mathbb{E}_{Q} f^* = \frac{1}{2} - \frac{1}{2}\chi^2(P \| M) - \frac{1}{2}\chi^2(Q \| M).
\end{equation*}
Using $\mathrm{LC}(P, Q) = 2\chi^2(P \| M) = 2\chi^2(Q \| M)$, we conclude
\begin{equation*}
    \mathbb{E}_{Q} f^* = \frac{1}{2} - \frac{1}{2}\mathrm{LC}(P, Q).
\end{equation*}
The threshold condition $\mathbb{E}_Q f^* \geq \tfrac{1}{2} - \tau$ then immediately gives $\mathrm{LC}(P, Q) \leq 2\tau$.
\end{proof}

We now extend this result to approximate classifiers trained on the cross-entropy loss.

\begin{theorem}[Approximate classifier threshold implies distributional closeness]
Suppose $f_\theta : \mathbb{R}^d \to [0, 1]$ is an $\varepsilon$-optimal classifier, i.e.\ $\mathcal{L}_{\mathrm{CE}}(f_\theta) \leq \mathcal{L}_{\mathrm{CE}}(f^*) + \varepsilon$. Then $\mathbb{E}_{Q} f_\theta(x) \geq \tfrac{1}{2} - \tau$ implies $\mathrm{LC}(P, Q) \leq 2\tau + 2\sqrt{\varepsilon}$.
\label{th:approximate-classifier-threshold}
\end{theorem}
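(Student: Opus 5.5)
The plan is to reduce to Theorem~\ref{th:optimal-classifier-threshold}. It is enough to show that an $\varepsilon$-optimal classifier has nearly the same mean under $Q$ as $f^*$:
\[
\bigl|\mathbb{E}_Q f_\theta - \mathbb{E}_Q f^*\bigr| \le \sqrt{\varepsilon}.
\]
Granting this, $\mathbb{E}_Q f^* \ge \mathbb{E}_Q f_\theta - \sqrt{\varepsilon} \ge \tfrac12 - \tau - \sqrt{\varepsilon}$. The identity $\mathbb{E}_Q f^* = \tfrac12 - \tfrac12\mathrm{LC}(P,Q)$ then gives $\mathrm{LC}(P,Q) \le 2\tau + 2\sqrt{\varepsilon}$, which is the claim.

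The first step writes the excess cross-entropy as an expected KL divergence between Bernoulli laws. Conditioning on $x$, the mixture $M$ is the marginal of $x$, and given $x$ the label satisfies $c \mid x \sim \mathrm{Bern}(f^*(x))$. Hence
\[
\mathcal{L}_{\mathrm{CE}}(f) - \mathcal{L}_{\mathrm{CE}}(f^*) = \mathbb{E}_{x\sim M}\, \mathrm{KL}\bigl(\mathrm{Bern}(f^*(x)) \,\|\, \mathrm{Bern}(f(x))\bigr).
\]
This is the standard decomposition of cross-entropy into entropy plus KL. By assumption, this quantity is at most $\varepsilon$.

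The second step applies Pinsker pointwise. For Bernoulli distributions, $\mathrm{KL}(\mathrm{Bern}(a)\|\mathrm{Bern}(b)) \ge 2(a-b)^2$. Integrating over $M$ and then applying Jensen gives
\[
\mathbb{E}_M |f_\theta - f^*| \le \sqrt{\mathbb{E}_M (f_\theta-f^*)^2} \le \sqrt{\varepsilon/2}.
\]

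The third step, which I expect to be the main obstacle, transfers this bound from $M$ to $Q$. The naive change of measure uses $dQ \le 2\, dM$, since $dQ/dM = 2(1-f^*) \le 2$. That yields $\mathbb{E}_Q|f_\theta - f^*| \le 2\sqrt{\varepsilon/2} = \sqrt{2\varepsilon}$, and the final bound would become $2\tau + 2\sqrt{2}\sqrt{\varepsilon}$, off by a factor $\sqrt{2}$. To recover the stated constant, I would keep the weight inside the second-moment bound:
\[
\mathbb{E}_Q|f_\theta-f^*| = \mathbb{E}_M\bigl[2(1-f^*)\,|f_\theta-f^*|\bigr] \le 2\sqrt{\mathbb{E}_M (1-f^*)^2}\,\sqrt{\mathbb{E}_M (f_\theta-f^*)^2}.
\]
The question is then whether $\mathbb{E}_M(1-f^*)^2 \le 1/2$ holds. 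This fails in general, for example when $P$ and $Q$ are nearly disjoint. So as a fallback I would use a sharper pointwise inequality, namely the Bernoulli KL lower bound in the $\chi^2$-type form $(a-b)^2/\max(\cdot)$, or I would accept the factor $\sqrt{2}$ (equivalently, read $\varepsilon$-optimality as $\varepsilon/2$ after Pinsker's constant). Whichever version works, the structure of the argument stays the same: decompose the loss as a KL, apply Pinsker, change measure from $M$ to $Q$, and finish with Theorem~\ref{th:optimal-classifier-threshold}.
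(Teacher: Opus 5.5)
Your skeleton is the paper's: Bernoulli-KL decomposition of the excess loss, Pinsker, change of measure from $M$ to $Q$, then Theorem~\ref{th:optimal-classifier-threshold}. As written, though, it does not prove the stated constant. One fallback is unspecified (a ``sharper pointwise inequality''). The other concedes the weaker bound $2\tau + 2\sqrt{2\varepsilon}$, and reinterpreting $\varepsilon$ changes the hypothesis. The gap is self-inflicted: the step you abandoned is valid, and your counterexample to it is wrong. Since $1-f^* = \frac{dQ}{dP+dQ} \le 1$ and $dM = \tfrac12(dP+dQ)$,
\[
\mathbb{E}_M (1-f^*)^2 = \frac12 \int \frac{dQ^2}{dP+dQ} \le \frac12 \int dQ = \frac12 ,
\]
with equality exactly when $P$ and $Q$ are mutually singular. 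Disjoint supports are therefore the extremal case, not a violation. Your Cauchy--Schwarz display then gives $\mathbb{E}_Q|f_\theta - f^*| \le 2\cdot\tfrac{1}{\sqrt2}\cdot\sqrt{\varepsilon/2} = \sqrt{\varepsilon}$, and the argument closes.

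The paper's proof sidesteps this by changing measure before taking any square root. Because $dQ/dM = 2(1-f^*) \le 2$ and the Bernoulli KL is nonnegative, $\mathbb{E}_Q\, \mathrm{KL}(\mathrm{Bern}(f^*)\,\|\,\mathrm{Bern}(f_\theta)) \le 2\varepsilon$. Pinsker then gives $\mathbb{E}_Q (f_\theta - f^*)^2 \le \varepsilon$, and Jensen under $Q$ gives $\mathbb{E}_Q|f_\theta - f^*| \le \sqrt{\varepsilon}$. Your naive route lost $\sqrt 2$ only because it applied Jensen under $M$ and then paid the density-ratio factor $2$ outside the square root rather than inside it.

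One caveat applies equally to your proof and the paper's. Redoing the algebra of Theorem~\ref{th:optimal-classifier-threshold} gives $\mathbb{E}_Q f^* = \tfrac12 - \tfrac12\chi^2(P\|M)$. Under the normalization $\mathrm{LC} = 2\chi^2(P\|M)$ stated in the preliminaries, this is $\tfrac12 - \tfrac14\mathrm{LC}(P,Q)$; for mutually singular $P,Q$ the stated identity would give the impossible value $\mathbb{E}_Q f^* = -\tfrac12$. So the bound $2\tau + 2\sqrt{\varepsilon}$ is correct for $\chi^2(P\|M)$, the standard Le Cam divergence $\tfrac12\int \frac{(dP-dQ)^2}{dP+dQ}$. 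With the paper's extra factor of $2$ it becomes $4\tau + 4\sqrt{\varepsilon}$.
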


\begin{proof}
The cross-entropy loss admits the decomposition
\begin{equation*}
    \mathcal{L}_{\mathrm{CE}}(f) = \mathbb{E}_{x \sim M}\, d(f^*(x) \| f(x)) + \text{const},
\end{equation*}
where $d(p \| q) = \mathrm{KL}(\mathrm{Bern}(p) \| \mathrm{Bern}(q))$ is the binary KL divergence. The $\varepsilon$-accuracy condition implies $\mathbb{E}_{x \sim M} d(f^*(x) \| f(x)) \leq \varepsilon$, and by non-negativity $\mathbb{E}_{x \sim Q} d(f^*(x) \| f(x)) \leq 2\varepsilon$. Applying Pinsker's inequality and $\mathrm{d}_{\mathrm{TV}}(p, q) = |p - q|$ for Bernoulli variables yields
\begin{equation*}
    \mathbb{E}_{x \sim Q}(f_\theta(x) - f^*(x))^2 \leq \varepsilon.
\end{equation*}
By Jensen, $\mathbb{E}_{x \sim Q} f^*(x) \geq \mathbb{E}_{x \sim Q} f_\theta(x) - \sqrt{\varepsilon}$, so
\begin{equation*}
    \mathbb{E}_{x \sim Q} f^*(x) \geq \frac{1}{2} - \tau - \sqrt{\varepsilon}.
\end{equation*}
Applying Theorem~\ref{th:optimal-classifier-threshold} recovers the claim.
\end{proof}

\section{Variance-Preserving vs Variance Exploding Diffusion Processes}
\label{sec:appendix:vp_vs_ve}

The paper presents Ambient Diffusion Policy from the variance-exploding perspective of diffusion~\cite{song2021scorebasedgenerativemodelingstochastic} to lighten notation. Our implementation builds off the original Diffusion Policy~\cite{chi2024diffusionpolicyvisuomotorpolicy} work, which uses a variance-preserving implementation. We first show that the two perspectives are equivalent up to a change of variables.

The \textit{variance exploding} (VE) and \textit{variance preserving} (VP) forward processes are defined in Eq.~\eqref{eq:ve} and~\eqref{eq:vp} respectively \cite{song2021scorebasedgenerativemodelingstochastic}:
\begin{align}
    X_t &= X_0 + \sigma_{\mathrm{VE}}(t)\, Z, \label{eq:ve} \\
    X_t &= \sqrt{1 - \sigma_{\mathrm{VP}}^2(t)}\, X_0 + \sigma_{\mathrm{VP}}(t)\, Z, \label{eq:vp}
\end{align}
where $Z \sim \mathcal{N}(0, I)$, $\sigma_{\mathrm{VE}}(t)$ is an unbounded increasing function with $\sigma_{\mathrm{VE}}(0) = 0$, and $\sigma_{\mathrm{VP}}(t)$ is an increasing function with $\sigma_{\mathrm{VP}}(0) = 0$ and $\sigma_{\mathrm{VP}}(T) = 1$. Without loss of generality, we assume the data is normalized so that $\mathrm{Var}[X_0] = 1$.

\begin{proposition}
The VP and VE processes are equivalent up to a time-dependent rescaling of space.
\end{proposition}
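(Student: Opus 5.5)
The plan is to construct an explicit change of variables mapping VP marginals to VE marginals, and then to check that the denoisers and the losses correspond under this map. Starting from Eq.~\eqref{eq:vp}, I will divide by the signal coefficient $s(t) := \sqrt{1-\sigma_{\mathrm{VP}}^2(t)}$, which is strictly positive for $t<T$. This gives
\begin{equation*}
    \frac{X_t^{\mathrm{VP}}}{s(t)} = X_0 + \frac{\sigma_{\mathrm{VP}}(t)}{s(t)}\, Z .
\end{equation*}
This has exactly the form of Eq.~\eqref{eq:ve} once we define $\sigma_{\mathrm{VE}}(t) := \sigma_{\mathrm{VP}}(t)/\sqrt{1-\sigma_{\mathrm{VP}}^2(t)}$. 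I will then verify that this $\sigma_{\mathrm{VE}}$ is admissible: it vanishes at $t=0$, it is strictly increasing because $x\mapsto x/\sqrt{1-x^2}$ is increasing on $[0,1)$, and it diverges as $\sigma_{\mathrm{VP}}\to 1$. The map is also invertible, via $\sigma_{\mathrm{VP}} = \sigma_{\mathrm{VE}}/\sqrt{1+\sigma_{\mathrm{VE}}^2}$ and $s = 1/\sqrt{1+\sigma_{\mathrm{VE}}^2}$. So every VE schedule arises from some VP schedule and vice versa.

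The second step lifts this equality of marginals to an equality of the objects used in training. The map $x\mapsto x/s(t)$ is a bijection for each fixed $t$, so conditioning on $X_t^{\mathrm{VP}}=x$ is the same as conditioning on $X_t^{\mathrm{VE}}=x/s(t)$. Hence the optimal denoisers satisfy
\begin{equation*}
    \mathbb{E}[X_0\mid X_t^{\mathrm{VP}}=x,O] = \mathbb{E}[X_0\mid X_t^{\mathrm{VE}}=x/s(t),O].
\end{equation*}
It follows that a VP denoiser $h^{\mathrm{VP}}(x,O,t)$ can be written as $h^{\mathrm{VE}}(x/s(t),O,t)$, and conversely. The $x_0$-prediction loss of Eq.~\eqref{eq:diffusion_loss} then takes the same value under both parametrizations, so the two families have the same minimizers. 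I will also note that the joint law of the forward path $(X_t)_t$ corresponds under the same rescaling, as long as both processes are driven by the same $Z$. This is exactly the marginal-level statement that training uses.

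For sampling, I will either appeal to the SDE/ODE equivalence of \cite{song2021scorebasedgenerativemodelingstochastic} or argue directly: DDIM-type updates expressed through $h$ commute with the rescaling. The main obstacle is the endpoint $t=T$, where $s(T)=0$ and the rescaling degenerates ($\sigma_{\mathrm{VE}}\to\infty$). I would handle this by restricting to $t\in[0,T)$, or by taking $T$ to be the time at which $\sigma_{\mathrm{VP}}$ is close to but below $1$, which matches practice. I would then note that the intervals $[0,t_{\max})\cup(t_{\min},T]$ transfer directly through the monotone reparametrization. The normalization $\mathrm{Var}[X_0]=1$ is not needed for the equivalence itself; it only makes the VP terminal distribution approximately $\mathcal N(0,I)$.
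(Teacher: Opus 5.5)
Your proposal is correct and uses the same change of variables as the paper. The paper starts from a VE schedule, sets $c(t)=\sqrt{1+\sigma_{\mathrm{VE}}^2(t)}$ and $\sigma_{\mathrm{VP}}=\sigma_{\mathrm{VE}}/c(t)$, and checks $X_t^{\mathrm{VE}}/c(t)=X_t^{\mathrm{VP}}$ along with equal SNR; you run the same map in reverse by dividing the VP variable by $s(t)=\sqrt{1-\sigma_{\mathrm{VP}}^2(t)}=1/c(t)$, and your extra material (schedule admissibility, the $t=T$ degeneracy, and the denoiser correspondence obtained by conditioning on a bijection rather than through scores and Tweedie as in the paper's later proposition) goes beyond what this statement needs but is sound.
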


\begin{proof}
Given a VE schedule $\sigma_{\mathrm{VE}}(t)$, define
\begin{equation}
    \sigma_{\mathrm{VP}}(t) := \frac{\sigma_{\mathrm{VE}}(t)}{\sqrt{1 + \sigma_{\mathrm{VE}}^2(t)}}, \qquad c(t) := \sqrt{1 + \sigma_{\mathrm{VE}}^2(t)}.
    \label{eq:rescaling}
\end{equation}
Under this mapping,
\begin{equation}
    \mathrm{SNR}_{\mathrm{VE}}(t) = \frac{1}{\sigma_{\mathrm{VE}}^2(t)} = \frac{1 - \sigma_{\mathrm{VP}}^2(t)}{\sigma_{\mathrm{VP}}^2(t)} = \mathrm{SNR}_{\mathrm{VP}}(t),
\end{equation}
so both processes have the same SNR at every $t$. The rescaled VE variable, $\tilde X_t$, satisfies
\begin{align}
    \widetilde{X}_t := \frac{X_t^{\mathrm{VE}}}{c(t)} = X_t^{\mathrm{VP}}.
\end{align}
Thus, the VP process is the VE process in normalized coordinates $x^\mathrm{VE}_t \mapsto x^\mathrm{VE}_t/c(t) =x^\mathrm{VP}_t$.
\end{proof}

\begin{proposition}
The score functions of the VP and VE processes are related by:
\begin{equation}
    s^{\mathrm{VP}}(\widetilde{x}_t, t) = c(t) \cdot s^{\mathrm{VE}}\!\left(c(t)\,\widetilde{x}_t,\, t\right).
\end{equation}
\end{proposition}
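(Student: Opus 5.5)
The plan is to relate the two marginal densities by a change of variables and then differentiate. By the previous Proposition, for each fixed $t$ we have $X_t^{\mathrm{VP}} = X_t^{\mathrm{VE}}/c(t)$, where $c(t) = \sqrt{1+\sigma_{\mathrm{VE}}^2(t)} > 0$ is a deterministic scalar. So the VP marginal at time $t$ is the pushforward of the VE marginal under the linear map $x \mapsto x/c(t)$. Writing $p^{\mathrm{VE}}_t$ and $p^{\mathrm{VP}}_t$ for the two marginal densities on $\mathbb{R}^d$, the change-of-variables formula gives
\begin{equation*}
    p^{\mathrm{VP}}_t(\widetilde{x}) = c(t)^d \, p^{\mathrm{VE}}_t\big(c(t)\,\widetilde{x}\big).
\end{equation*}
Both densities are smooth and strictly positive because each is a convolution with a nondegenerate Gaussian for $t>0$, so taking logarithms is legitimate.

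Next, we take logarithms: $\log p^{\mathrm{VP}}_t(\widetilde{x}) = d \log c(t) + \log p^{\mathrm{VE}}_t(c(t)\widetilde{x})$. We then take the gradient in $\widetilde{x}$. The term $d\log c(t)$ does not depend on $\widetilde{x}$ and vanishes. By the chain rule, the second term contributes the Jacobian $c(t) I$ times $\nabla \log p^{\mathrm{VE}}_t$ evaluated at $c(t)\widetilde{x}$. This gives exactly
\begin{equation*}
    s^{\mathrm{VP}}(\widetilde{x}, t) = c(t)\, s^{\mathrm{VE}}\big(c(t)\widetilde{x}, t\big).
\end{equation*}
The argument is the same whether or not we condition on the observation $O$, since $c(t)$ does not depend on it.

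There is no real obstacle here. The only point to handle with care is conventions. The relation uses the exact $c(t)$ of Eq.~\eqref{eq:rescaling}, and the two processes must be indexed by the same $t$ under that mapping. Keeping these fixed makes the Jacobian factor $c(t)^d$ and the chain-rule factor $c(t)$ come out with the stated direction and no stray inverse.

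As an optional sanity check, we can verify the result in the Gaussian case $X_0 \sim \mathcal{N}(0,1)$. There $s^{\mathrm{VE}}(x,t) = -x/(1+\sigma_{\mathrm{VE}}^2)$ and $s^{\mathrm{VP}}(\widetilde{x},t) = -\widetilde{x}$, which agree with the formula.
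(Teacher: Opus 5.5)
Your proposal is correct and follows essentially the same route as the paper's proof. Both use the change of variables $p^{\mathrm{VP}}_t(\widetilde{x}) = c(t)^d\, p^{\mathrm{VE}}_t(c(t)\widetilde{x})$, take the log-gradient so that $d\log c(t)$ drops out, and apply the chain rule to pick up the factor $c(t)$; your positivity remark (justifying the logarithm) and the Gaussian sanity check are correct additions the paper omits.
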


\begin{proof}
Let $p^{\mathrm{VP}}_t$ and $p^{\mathrm{VE}}_t$ denote the marginal densities of $X_t^{\mathrm{VP}}$ and $X_t^{\mathrm{VE}}$ respectively. Given the change of variables $x_t^{\mathrm{VE}} = c(t)\,x_t^{\mathrm{VP}}$, we have that $p^{\mathrm{VP}}_t(x_t^{\mathrm{VP}}) =c(t)^d\,p^{\mathrm{VE}}_t(c(t)\,x_t^{\mathrm{VP}})$, where $d$ is the dimension.
Taking the gradient with respect to $x_t^{\mathrm{VP}}$:
\begin{align}
    s^{\mathrm{VP}}(x_t^{\mathrm{VP}}, t) 
    &= \nabla_{x_t^{\mathrm{VP}}} \log p^{\mathrm{VP}}_t(x_t^{\mathrm{VP}}) \nonumber \\
    &= \nabla_{x_t^{\mathrm{VP}}} \log p^{\mathrm{VE}}_t(c(t)\,x_t^{\mathrm{VP}}) + \cancelto{0}{\nabla_{x_t^{\mathrm{VP}}} \log c(t)^d}\nonumber \\
    &= c(t) \cdot s^{\mathrm{VE}}\!\left(c(t)\,x_t^{\mathrm{VP}},\, t\right),
\end{align}
where the last line follows from the chain rule.
\end{proof}

\begin{proposition}
Let $h^*_{\mathrm{VE}}$ and $h^*_{\mathrm{VP}}$ denote the optimal denoisers for the VE and VP processes respectively. Then:
\begin{equation}
    h^*_{\mathrm{VP}}(x^\mathrm{VP}_t, t) = h^*_{\mathrm{VE}}\!\left(c(t)\,x^\mathrm{VP}_t,\, t\right).
\end{equation}
\end{proposition}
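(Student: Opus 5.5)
The plan is to identify both optimal denoisers as conditional expectations and show that they condition on the same information. As recalled in Section~\ref{sec:background:diffusion}, the minimizer of the denoising loss is the posterior mean. So we have $h^*_{\mathrm{VE}}(y,t) = \mathbb{E}[X_0 \mid X^{\mathrm{VE}}_t = y]$ and $h^*_{\mathrm{VP}}(x,t) = \mathbb{E}[X_0 \mid X^{\mathrm{VP}}_t = x]$. Both targets are the clean sample $X_0$, with no rescaling, because the VP process in Eq.~\eqref{eq:vp} still predicts $X_0$.

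The first step reuses the first proposition of this appendix. With $\sigma_{\mathrm{VP}}$ and $c(t)$ as in Eq.~\eqref{eq:rescaling}, we have $X^{\mathrm{VP}}_t = X^{\mathrm{VE}}_t / c(t)$ as random variables, coupled through the same $X_0$ and $Z$. This holds because
\[
X^{\mathrm{VE}}_t / c(t) = \frac{1}{\sqrt{1+\sigma_{\mathrm{VE}}^2}}\, X_0 + \frac{\sigma_{\mathrm{VE}}}{\sqrt{1+\sigma_{\mathrm{VE}}^2}}\, Z,
\]
and $\sqrt{1-\sigma_{\mathrm{VP}}^2} = 1/\sqrt{1+\sigma_{\mathrm{VE}}^2}$.

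The second step uses that $c(t) > 0$, so the map $y \mapsto y/c(t)$ is a bijection of $\mathbb{R}^d$. Hence $X^{\mathrm{VP}}_t$ and $X^{\mathrm{VE}}_t$ generate the same $\sigma$-algebra, and the event $\{X^{\mathrm{VP}}_t = x\}$ coincides with $\{X^{\mathrm{VE}}_t = c(t)x\}$. This gives $\mathbb{E}[X_0 \mid X^{\mathrm{VP}}_t = x] = \mathbb{E}[X_0 \mid X^{\mathrm{VE}}_t = c(t)x]$, which is the claim.

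The only delicate point is conditioning on a measure-zero event. To make the second step rigorous, I would argue through densities. By Bayes' rule,
\[
h^*_{\mathrm{VP}}(x,t) = \frac{\int x_0\, p_0(x_0)\, \phi_{\mathrm{VP}}(x \mid x_0)\, dx_0}{p^{\mathrm{VP}}_t(x)} .
\]
The Gaussian kernel satisfies $\phi_{\mathrm{VP}}(x \mid x_0) = c(t)^d \phi_{\mathrm{VE}}(c(t)x \mid x_0)$, which is the same Jacobian identity used in the score proposition. The factor $c(t)^d$ then cancels between numerator and denominator, leaving $h^*_{\mathrm{VE}}(c(t)x, t)$.

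As a cross-check, one can also go through Tweedie's formula together with the score relation from the previous proposition:
\[
h^*_{\mathrm{VP}} = \frac{x + \sigma_{\mathrm{VP}}^2 s^{\mathrm{VP}}}{\sqrt{1-\sigma_{\mathrm{VP}}^2}} \quad\text{and}\quad h^*_{\mathrm{VE}}(y) = y + \sigma_{\mathrm{VE}}^2 s^{\mathrm{VE}}(y).
\]
Substituting $y = c\,x$ and $s^{\mathrm{VP}} = c\, s^{\mathrm{VE}}$, and using $1/\sqrt{1-\sigma_{\mathrm{VP}}^2} = c$, the two expressions match after simplifying with $c^2\sigma_{\mathrm{VP}}^2 = \sigma_{\mathrm{VE}}^2$.
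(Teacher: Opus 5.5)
Your proof is correct. Your main argument differs from the paper's, while your ``cross-check'' paragraph is essentially the paper's proof.

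\textbf{The paper's route.} The paper argues only through Tweedie's formula. It writes $h^*_{\mathrm{VE}}$ and $h^*_{\mathrm{VP}}$ in terms of their scores and substitutes $s^{\mathrm{VP}}(x,t) = c(t)\,s^{\mathrm{VE}}(c(t)x,t)$ from the preceding proposition. The final algebra uses $1/\sqrt{1-\sigma_{\mathrm{VP}}^2} = c$ and $c^2\sigma_{\mathrm{VP}}^2 = \sigma_{\mathrm{VE}}^2$. The paper only says ``substituting \ldots\ yields the desired result,'' and your cross-check makes that step explicit.

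\textbf{Your route.} You work directly with posterior means. Since $X^{\mathrm{VP}}_t = X^{\mathrm{VE}}_t/c(t)$ pathwise under the same coupling of $(X_0,Z)$, and $y \mapsto y/c(t)$ is invertible, conditioning on either variable carries the same information. You then make the pointwise identity rigorous with Bayes' rule and the kernel identity $\phi_{\mathrm{VP}}(x\mid x_0) = c(t)^d\,\phi_{\mathrm{VE}}(c(t)x\mid x_0)$. The Jacobian factor cancels between numerator and denominator, and this formula also fixes the canonical everywhere-defined version of the conditional expectation.

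\textbf{What each route buys.} Your route never uses scores or Tweedie's formula. It shows the proposition is just invariance of a conditional expectation under an invertible reparametrization of the conditioning variable. The same one-line argument gives the analogous relation for other prediction targets, e.g.\ $\mathbb{E}[Z \mid X^{\mathrm{VP}}_t = x] = \mathbb{E}[Z \mid X^{\mathrm{VE}}_t = c(t)x]$ for $\epsilon$-prediction. The paper's route is shorter given what it has already proved. It also chains the appendix's three propositions (rescaling $\Rightarrow$ scores $\Rightarrow$ denoisers), so the denoiser relation reads as a corollary of the score relation.
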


\begin{proof}
By Tweedie's formula, the optimal denoisers are related to the scores via:
\begin{align}
    h^*_{\mathrm{VE}}(x_t^{\mathrm{VE}}, t) &= x_t^{\mathrm{VE}} + \sigma_{\mathrm{VE}}^2(t)\, s^{\mathrm{VE}}(x_t^{\mathrm{VE}}, t), \label{eq:tweedie_ve}\\
    h^*_{\mathrm{VP}}(x_t^{\mathrm{VP}}, t) &= \frac{x_t^{\mathrm{VP}} + \sigma_{\mathrm{VP}}^2(t)\, s^{\mathrm{VP}}(x_t^{\mathrm{VP}}, t)}{\sqrt{1 - \sigma_{\mathrm{VP}}^2(t)}}. \label{eq:tweedie_vp}
\end{align}
Substituting the score correspondence from Proposition 2 and the definition of $c(t)$ in \eqref{eq:rescaling} yields the desired result.
\end{proof}

Therefore, any result derived in one framework translates to the other via $c(t) = \sqrt{1+\sigma_{\mathrm{VE}}^2(t)}$.

\section{VP Ambient Loss Derivation}
\label{sec:appendix:ambient_loss}
In the original Ambient Diffusion work, \citet{daras2024consistent} propose an alternative loss function, known as the Ambient loss. Unlike the denoising loss from DDPM~\cite{ho2020denoisingdiffusionprobabilisticmodels}, the Ambient loss assumes no access to the clean samples, $x_0$, and instead uses $x_{t_{\min}}$ as the prediction target. Nonetheless, it is minimized by the same denoiser as the DDPM loss. In the image domain, the Ambient loss has been shown to improve generalization. \citet{daras2024consistent} derive the Ambient loss for VE diffusion. In this section, we derive it for VP diffusion.

\begin{proposition}
The $\epsilon$-prediction VP Ambient loss for $t > t_{\min}$ is given by
\begin{equation}
    \mathcal{L}(\theta) = \mathbb{E}\!\left[\left\lVert \epsilon_\theta(x_t, t) - \frac{\sigma_t(1-\sigma_{t_{\min}}^2)}{\sigma_t^2 - \sigma_{t_{\min}}^2}\,x_t + \frac{\sigma_t\sqrt{(1-\sigma_t^2)(1-\sigma_{t_{\min}}^2)}}{\sigma_t^2 - \sigma_{t_{\min}}^2}\,x_{t_{\min}} \right\rVert_2^2\right],
    \label{eq:vp_ambient_denoiser_loss}
\end{equation}
and minimized by $\epsilon_\theta^* = \mathbb{E}[Z \mid X_t=x_t]$.
\end{proposition}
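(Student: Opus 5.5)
The plan is to redo the VE derivation of \citet{daras2024consistent} in VP coordinates. First I express $x_t$ in terms of $x_{t_{\min}}$, then I write the target $\mathbb{E}[Z \mid X_t]$ as a conditional expectation of a quantity that depends only on $(x_t, x_{t_{\min}})$.

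\textbf{Step 1: write the forward transition.} In VP form, $x_s = \sqrt{1-\sigma_s^2}\,x_0 + \sigma_s Z_s$ with $\bar\alpha_s := 1-\sigma_s^2$. The Markov transition from $t_{\min}$ to $t$ is
\[
x_t = \sqrt{\bar\alpha_t/\bar\alpha_{t_{\min}}}\,x_{t_{\min}} + \sqrt{\sigma_t^2 - \sigma_{t_{\min}}^2\,\bar\alpha_t/\bar\alpha_{t_{\min}}}\;W .
\]
The variance here simplifies to $(\sigma_t^2-\sigma_{t_{\min}}^2)/(1-\sigma_{t_{\min}}^2)$, since $1-\bar\alpha_t/\bar\alpha_{t_{\min}} = (\sigma_t^2-\sigma_{t_{\min}}^2)/(1-\sigma_{t_{\min}}^2)$. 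This is the source of the denominators $\sigma_t^2-\sigma_{t_{\min}}^2$ in the loss.

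\textbf{Step 2: relate the two targets.} Let $Z$ be the total noise in $x_t = \sqrt{\bar\alpha_t}\,x_0 + \sigma_t Z$, so $\sigma_t Z = x_t - \sqrt{\bar\alpha_t}\,x_0$. By Tweedie's formula applied to both the VP marginal and the transition kernel, the score of $p_t$ at $x_t$ equals the conditional expectation of the transition score:
\[
\nabla\log p_t(x_t) = \mathbb{E}\!\left[\nabla_{x_t}\log p(x_t\mid x_{t_{\min}}) \,\middle|\, x_t\right].
\]
Since $\mathbb{E}[Z\mid x_t] = -\sigma_t\nabla\log p_t(x_t)$, we get
\[
\mathbb{E}[Z\mid x_t] = \sigma_t\,\frac{1-\sigma_{t_{\min}}^2}{\sigma_t^2-\sigma_{t_{\min}}^2}\,\mathbb{E}\!\left[x_t - \sqrt{\bar\alpha_t/\bar\alpha_{t_{\min}}}\,x_{t_{\min}} \,\middle|\, x_t\right].
\]
Expanding gives exactly the coefficients in Eq.~\eqref{eq:vp_ambient_denoiser_loss}:
\[
\frac{\sigma_t(1-\sigma_{t_{\min}}^2)}{\sigma_t^2-\sigma_{t_{\min}}^2}\,x_t
\quad\text{and}\quad
-\frac{\sigma_t\sqrt{(1-\sigma_t^2)(1-\sigma_{t_{\min}}^2)}}{\sigma_t^2-\sigma_{t_{\min}}^2}\,x_{t_{\min}},
\]
using $\sqrt{\bar\alpha_t/\bar\alpha_{t_{\min}}}\,(1-\sigma_{t_{\min}}^2) = \sqrt{(1-\sigma_t^2)(1-\sigma_{t_{\min}}^2)}$.

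\textbf{Step 3: identify the minimizer.} Call the target inside the norm $Y(x_t,x_{t_{\min}})$. The loss is $\mathbb{E}\|\epsilon_\theta(x_t,t) - Y\|^2$. By the standard $L^2$-projection argument, it is minimized by $\mathbb{E}[Y\mid x_t]$, which equals $\mathbb{E}[Z\mid X_t=x_t]$ by Step 2.

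\textbf{Main obstacle.} The delicate part is Step 2: justifying the conditional-score identity (interchanging gradient and integral) and tracking the VP scalings carefully. As a cross-check, I would instead work directly in VE coordinates using the change of variables $c(t)$ from the earlier propositions, apply the known VE Ambient result, and map back.
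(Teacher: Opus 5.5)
Your proposal is correct and follows essentially the paper's route. Both arguments write $\mathbb{E}[Z\mid X_t]$ as an affine combination of $x_t$ and $\mathbb{E}[X_{t_{\min}}\mid X_t]$ and then use the $L^2$-projection property. Your target $Y$ is exactly $(x_{t_{\min}}-\beta_t x_t)/\alpha_t$ in the paper's notation, so your loss is the paper's $g_\theta$-loss divided by $\alpha_t^2$.

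The only real difference is how the affine relation is obtained. The paper cites the VP double Tweedie identity and passes through $\mathbb{E}[X_0\mid X_t]$. You derive the relation directly from the transition-kernel score identity, which makes the argument self-contained and shows explicitly where $t>t_{\min}$ (a nondegenerate Gaussian kernel) is needed.
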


\begin{proof}
Let $X_t$ and $Z$ be the random variables from the forward process in VP diffusion, Eq.~\eqref{eq:vp}. Then:
\begin{equation}
    \mathbb{E}[Z \mid X_t = x_t] = \frac{x_t - \sqrt{1-\sigma_t^2}\, \mathbb{E}[X_0 \mid X_t=x_t]}{\sigma_t}.
    \label{eq:vp_denoiser}
\end{equation}
By the double Tweedie identity from \citet{daras2024consistent}:
\begin{equation}
    \mathbb{E}[X_0 \mid X_t=x_t] = \frac{\sigma_t^2\sqrt{1-\sigma_{t_{\min}}^2}}{\sigma_t^2-\sigma_{t_{\min}}^2} \mathbb{E}[X_{t_{\min}} \mid X_t = x_t] - \frac{\sigma_{t_{\min}}^2 \sqrt{1-\sigma_t^2}}{\sigma_t^2 - \sigma_{t_{\min}}^2}x_t.
    \label{eq:double_tweedies}
\end{equation}
Substituting Eq.~\eqref{eq:double_tweedies} into Eq.~\eqref{eq:vp_denoiser} and rearranging gives:
\begin{align}
    \mathbb{E}[X_{t_{\min}} \mid X_t=x_t] &= \alpha_t\, \mathbb{E}[Z \mid X_t=x_t] + \beta_t\, x_t, \label{eq:alpha_beta_denoiser}\\
    \alpha_t &:= -\frac{\sigma_t^2-\sigma_{t_{\min}}^2}{\sigma_t \sqrt{(1-\sigma_t^2)(1-\sigma_{t_{\min}}^2)}}, \label{eq:alpha_t}\\
    \beta_t &:= \frac{1-\sigma_{t_{\min}}^2}{\sqrt{(1-\sigma_t^2)(1-\sigma_{t_{\min}}^2)}}. \label{eq:beta_t}
\end{align}
Define $g_\theta(x_t, t) := \alpha_t\,\epsilon_\theta(x_t, t) + \beta_t\, x_t$. Since $t > t_{\min}$, we have $\alpha_t \neq 0$ and the map $\epsilon_\theta \mapsto g_\theta$ is a bijection. Thus, minimizing
\begin{equation}
    \mathbb{E}\!\left[\lVert g_\theta(x_t,t) - x_{t_{\min}}\rVert^2\right]
    \label{eq:g_theta_loss_eps}
\end{equation}
over $g_\theta$ is equivalent to minimizing over $\epsilon_\theta$. The minimizer of Eq.~\eqref{eq:g_theta_loss_eps} is $g_\theta^* = \mathbb{E}[X_{t_{\min}} \mid X_t=x_t]$, which by Eq.~\eqref{eq:alpha_beta_denoiser} implies $\epsilon_\theta^* = \mathbb{E}[Z \mid X_t=x_t]$. Rescaling the loss in Eq.~\eqref{eq:g_theta_loss_eps} by $\alpha_t^2$ yields:
\begin{equation}
    \mathbb{E}\!\left[\left\lVert \epsilon_\theta(x_t,t) + \frac{\beta_t}{\alpha_t}x_t - \frac{1}{\alpha_t}x_{t_{\min}}\right\rVert_2^2\right],
\end{equation}
which expands to the loss in the proposition.
\end{proof}
\begin{proposition}
The $x_0$-prediction VP Ambient loss for $t > t_{\min}$ is given by
\begin{equation}
    \mathbb{E}\!\left[\left\lVert h_\theta(x_t, t) + \frac{\sigma_{t_{\min}}^2\sqrt{1-\sigma_{t}^2}}{\sigma_t^2-\sigma_{t_{\min}}^2}x_{t} - \frac{\sigma_{t}^2\sqrt{1-\sigma_{t_{\min}}^2}}{\sigma_t^2-\sigma_{t_{\min}}^2} x_{t_{\min}} \right\rVert_2^2\right],
    \label{eq:vp_ambient_sampler_loss}
\end{equation}
and minimized by $h_\theta^* = \mathbb{E}[X_0 \mid X_t=x_t]$.
\end{proposition}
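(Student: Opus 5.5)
The plan mirrors the proof of the $\epsilon$-prediction proposition just above. The clean-data estimate is already supplied by the double Tweedie identity, Eq.~\eqref{eq:double_tweedies}: it writes $\mathbb{E}[X_0\mid X_t=x_t]$ as an affine function of $\mathbb{E}[X_{t_{\min}}\mid X_t=x_t]$ and $x_t$. For $t>t_{\min}$ we have $\sigma_t^2-\sigma_{t_{\min}}^2>0$, so the coefficient
\[
a_t:=\frac{\sigma_t^2\sqrt{1-\sigma_{t_{\min}}^2}}{\sigma_t^2-\sigma_{t_{\min}}^2}
\]
is nonzero, provided $\sigma_{t_{\min}}<1$ (a degenerate endpoint we exclude, or treat by continuity). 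Let $b_t:=\sigma_{t_{\min}}^2\sqrt{1-\sigma_t^2}/(\sigma_t^2-\sigma_{t_{\min}}^2)$, so that Eq.~\eqref{eq:double_tweedies} reads $\mathbb{E}[X_0\mid X_t=x_t]=a_t\,\mathbb{E}[X_{t_{\min}}\mid X_t=x_t]-b_t x_t$.

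First, define the reparametrization $g_\theta(x_t,t):=\bigl(h_\theta(x_t,t)+b_t x_t\bigr)/a_t$. For fixed $t$ this is an invertible affine map from $h_\theta$ to $g_\theta$, so minimizing any objective over $h_\theta$ is equivalent to minimizing it over $g_\theta$.

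Second, consider the regression loss $\mathbb{E}\|g_\theta(x_t,t)-x_{t_{\min}}\|^2$. This uses only the pair $(x_t,x_{t_{\min}})$, obtained by further noising $x_{t_{\min}}$, and never the clean sample. By the standard $L^2$ projection fact, its minimizer is $g^*_\theta=\mathbb{E}[X_{t_{\min}}\mid X_t=x_t]$. Applying Eq.~\eqref{eq:double_tweedies} to this minimizer gives $h^*_\theta=a_t g^*_\theta-b_t x_t=\mathbb{E}[X_0\mid X_t=x_t]$.

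Third, rescale the loss by $a_t^2>0$, which leaves the minimizer unchanged. This yields $\mathbb{E}\|h_\theta+b_t x_t-a_t x_{t_{\min}}\|^2$, which is exactly Eq.~\eqref{eq:vp_ambient_sampler_loss} once $a_t$ and $b_t$ are substituted.

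The only step needing care is whether Eq.~\eqref{eq:double_tweedies} is correct in the VP parametrization of Eq.~\eqref{eq:vp}. If I did not want to rely on it, I would verify it directly, as follows. In VP, $X_t=\gamma X_{t_{\min}}+sZ'$ with $\gamma=\sqrt{(1-\sigma_t^2)/(1-\sigma_{t_{\min}}^2)}$ and $s^2=\sigma_t^2-\gamma^2\sigma_{t_{\min}}^2$. Applying Tweedie's formula at both levels expresses $\mathbb{E}[X_0\mid X_t]$ and $\mathbb{E}[X_{t_{\min}}\mid X_t]$ through the same score $\nabla\log p_t$. Eliminating the score then gives the affine relation, and matching coefficients recovers $a_t$ and $b_t$. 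This is routine algebra, so the main obstacle is bookkeeping rather than any conceptual difficulty.
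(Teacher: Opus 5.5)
Your proposal is correct and follows essentially the same route as the paper's proof. Your $g_\theta=(h_\theta+b_t x_t)/a_t$ is exactly the paper's $\tilde g_\theta=\tilde\alpha_t h_\theta+\tilde\beta_t x_t$, since $a_t=1/\tilde\alpha_t$ and $b_t=\tilde\beta_t/\tilde\alpha_t$, and the $L^2$-projection step and the rescaling by $a_t^2=1/\tilde\alpha_t^2$ are likewise the same. Your optional check of the double Tweedie identity, applying Tweedie's formula at both noise levels with $\gamma^2=(1-\sigma_t^2)/(1-\sigma_{t_{\min}}^2)$ and $s^2=(\sigma_t^2-\sigma_{t_{\min}}^2)/(1-\sigma_{t_{\min}}^2)$, does go through and reproduces exactly these coefficients.
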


\begin{proof}
Isolating $\mathbb{E}[X_{t_{\min}} \mid X_t = x_t]$ in the double Tweedie identity Eq.~\eqref{eq:double_tweedies} gives:
\begin{align}
    \mathbb{E}[X_{t_{\min}}\mid X_t=x_t] &= \tilde{\alpha}_t\, \mathbb{E}[X_0 \mid X_t=x_t] + \tilde{\beta}_t\, x_t, \label{eq:alpha_beta_denoiser_x0}\\
    \tilde{\alpha}_t &:= \frac{\sigma_t^2 - \sigma_{t_{\min}}^2}{\sigma_t^2\sqrt{1-\sigma_{t_{\min}}^2}}, \quad
    \tilde{\beta}_t := \frac{\sigma_{t_{\min}}^2\sqrt{1-\sigma_t^2}}{\sigma_t^2\sqrt{1-\sigma_{t_{\min}}^2}}.
\end{align}
Define $\tilde g_\theta(x_t, t) := \tilde{\alpha}_t\, h_\theta(x_t, t) + \tilde{\beta}_t\, x_t$. Since $t > t_{\min}$, then $\tilde{\alpha}_t \neq 0$ and the map $h_\theta \mapsto \tilde g_\theta$ is a bijection. Thus, minimizing
\begin{equation}
    \mathbb{E}\!\left[\lVert \tilde g_\theta(x_t,t) - x_{t_{\min}}\rVert^2\right]
    \label{eq:g_theta_loss_x0}
\end{equation}
over $\tilde g_\theta$ is equivalent to minimizing over $h_\theta$. The minimizer of Eq.~\eqref{eq:g_theta_loss_x0} is $\tilde g_\theta^* = \mathbb{E}[X_{t_{\min}} \mid X_t=x_t]$, which by Eq.~\eqref{eq:alpha_beta_denoiser_x0} implies $h_\theta^* = \mathbb{E}[X_0 \mid X_t=x_t]$. Rescaling the loss in Eq.~\eqref{eq:g_theta_loss_x0} by $\tilde{\alpha}_t^2$ yields:
\begin{equation}
    \mathbb{E}\!\left[\left\lVert h_\theta(x_t,t) + \frac{\tilde{\beta}_t}{\tilde{\alpha}_t}x_t - \frac{1}{\tilde{\alpha}_t}x_{t_{\min}}\right\rVert_2^2\right],
\end{equation}
which expands to the loss in the proposition.
\end{proof}

\begin{remark}
Both losses reduce to the canonical MSE diffusion loss \cite{ho2020denoisingdiffusionprobabilisticmodels} when $t_{\min}=0$.
\end{remark}

\begin{remark}
For both the $\epsilon$- and $x_0$-prediction losses, the rescaling factor, $1/\alpha_t^2$ (resp. $1/\tilde{\alpha}_t^2$), diverges as $t \rightarrow t_{\min}$. This can cause training instability. Following \cite{daras2025ambient}, we introduce a buffer $B$: a sample from $\mathcal{D}_q$ is only used at diffusion times satisfying $t > t_{\min}$ and $1/\alpha_t^2 < B$ (resp. $1/\tilde{\alpha}_t^2 < B$). The effect of $B$ is studied in Figure~\ref{fig:planar_pushing_ablations}\subref{fig:perf_vs_buffer}.
\end{remark}

\section{Characterization of Robot Action Data and Action Diffusion}
\label{sec:appendix:action_diffusion}

\textbf{Spectral Power Law:} Figures~\ref{fig:spectral_power_law} and~\ref{fig:psd_all} visualize the spectral power law for several robot datasets. These datasets include human teleop vs algorithmically generated actions, absolute vs delta actions, end effector vs joint space actions, different parameterizations of SE(3) end-effector poses, and data from different tasks. Remarkably, a spectral power law existed in all the datasets we tested. Table~\ref{tab:psd_datasets} outlines the characteristics of each dataset in more detail.

The average PSD for each dataset was computed as follows. Given a dataset, we normalize the range of each action dimension to [-1,1]. For OXE, we instead normalized the 2\% and 98\% quartiles to [-1,1]. We do not normalize the rotation dimensions if they are represented in $\mathbb R^9$ format since these values already lie within [0,1] by construction. All the datasets were collected at 10 Hz; however, action frequencies in OXE vary. To ensure consistency, we set the frequency of OXE to 10 Hz by downsampling higher-frequency data and artificially speeding up lower-frequency data. We chose an action horizon of 100. For each action dimension, we compute the periodogram along the time axis. Finally, we average these periodograms across all action chunks in the dataset. Our plots exclude the DC component, which is often close to 0 due to the range normalization. These plots use a log-log scale; a power law, $f^{-\alpha}$, manifests as a straight line of slope $-\alpha$.

We fix the action frequency and horizon for consistency across datasets, but remark that the spectral power law emerges regardless of these choices. Concurrent work performed the analysis using the discrete cosine transform and obtained similar results \cite{zhang2026hydradp3frequencyawarerightsizing3d}.

\begin{figure*}[!t]
    \centering
    \begin{subfigure}[b]{0.32\textwidth}
        \includegraphics[width=\linewidth]{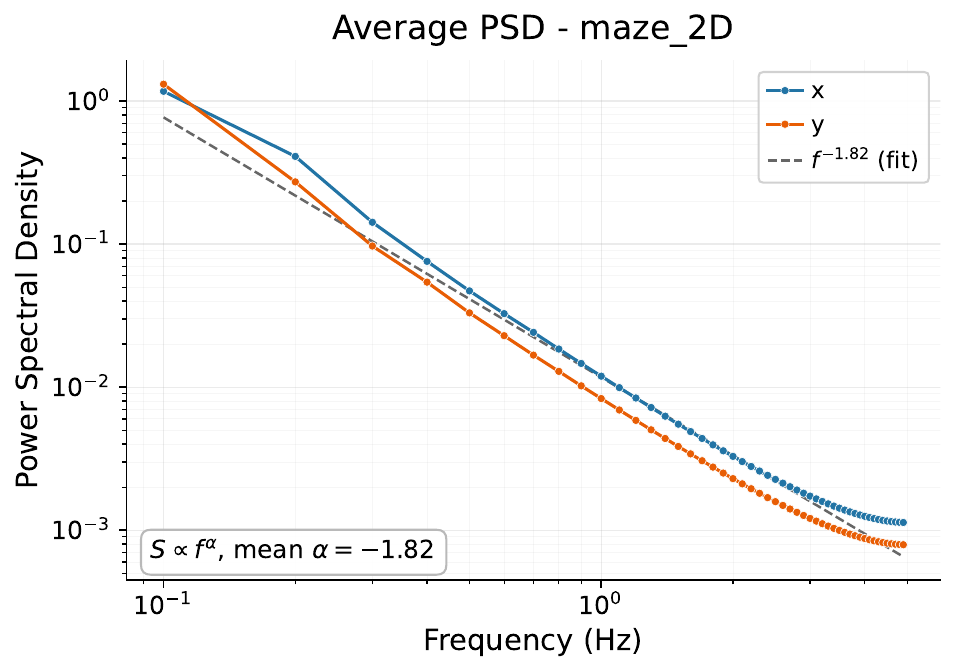}
        \caption{$\mathcal D_p$ from Section~\ref{sec:controlled_experiments:motion_planning}}
        \label{fig:maze_2d_psd}
    \end{subfigure}
    \hfill
    \begin{subfigure}[b]{0.32\textwidth}
        \includegraphics[width=\linewidth]{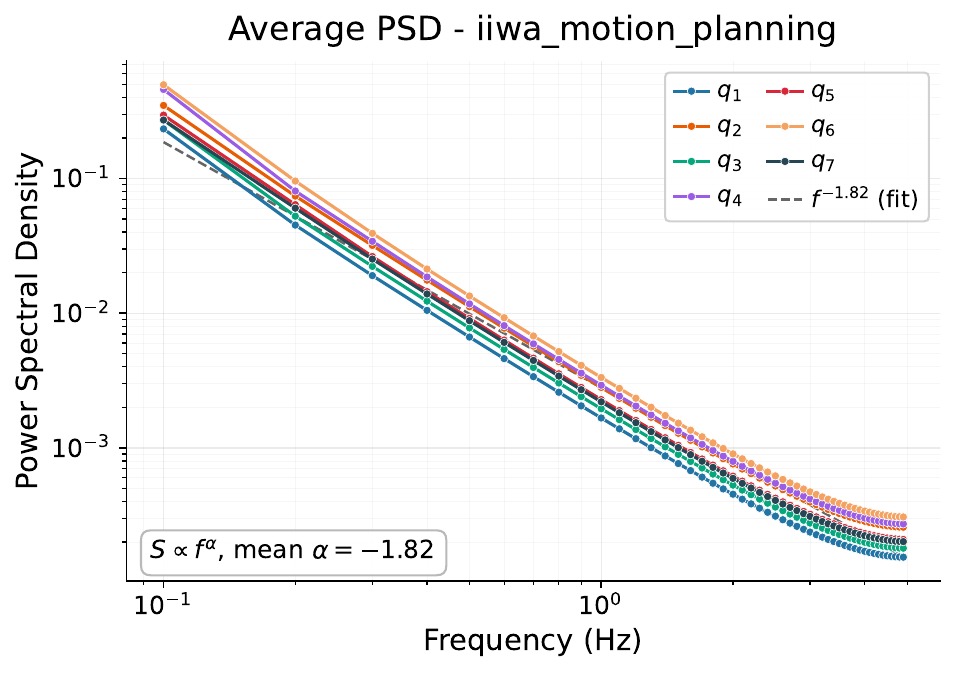}
        \caption{$\mathcal D_p$ from Appendix~\ref{sec:appendix:nmp}}
        \label{fig:nmp_psd}
    \end{subfigure}
    \hfill
    \begin{subfigure}[b]{0.32\textwidth}
        \includegraphics[width=\linewidth]{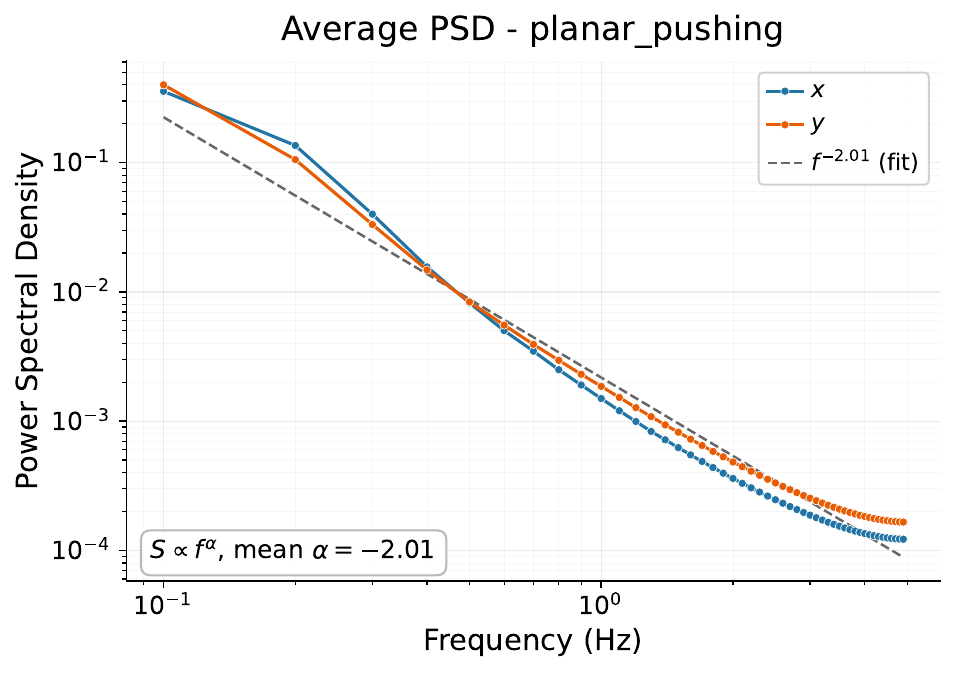}
        \caption{$\mathcal D_p$ from Section~\ref{sec:controlled_experiments:sim2real}}
        \label{fig:planar_pushing_psd}
    \end{subfigure}

    \vspace{0.5em}

    \begin{subfigure}[b]{0.32\textwidth}
        \includegraphics[width=\linewidth]{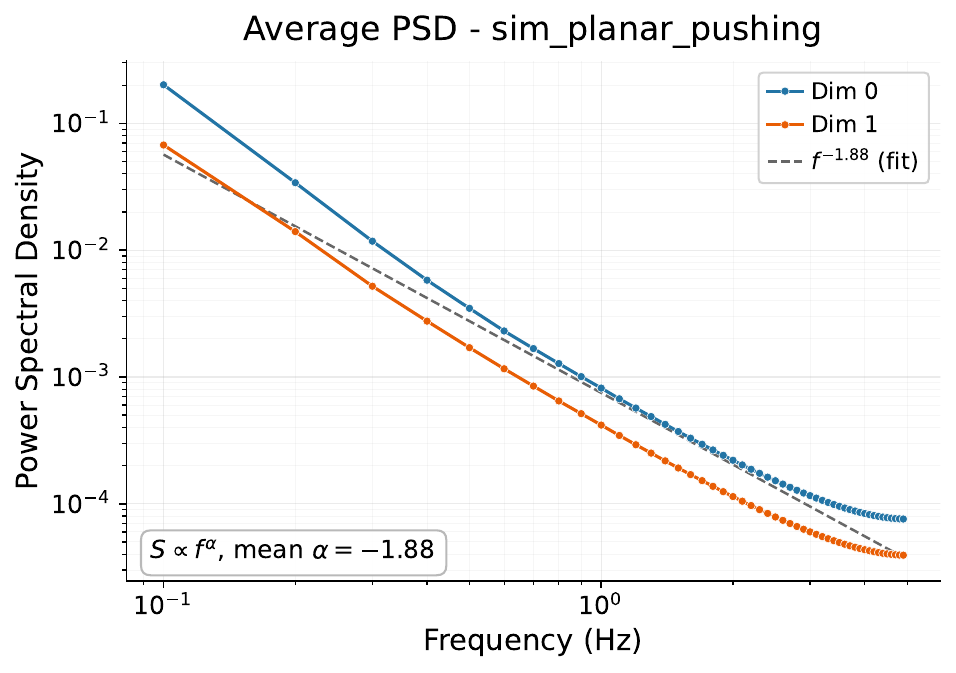}
        \caption{$\mathcal D_q$ from Section~\ref{sec:controlled_experiments:sim2real}}
        \label{fig:sim_planar_pushing_psd}
    \end{subfigure}
    \hfill
    \begin{subfigure}[b]{0.32\textwidth}
        \includegraphics[width=\linewidth]{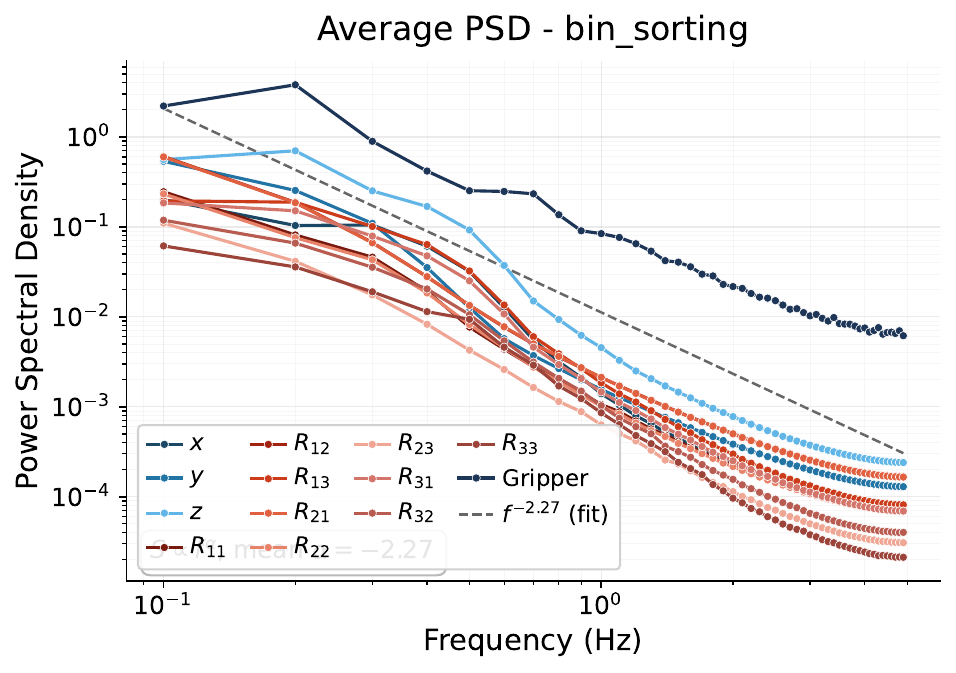}
        \caption{$\mathcal D_p$ from Section~\ref{sec:controlled_experiments:bin_sorting}}
        \label{fig:bin_sorting_psd}
    \end{subfigure}
    \hfill
    \begin{subfigure}[b]{0.32\textwidth}
        \includegraphics[width=\linewidth]{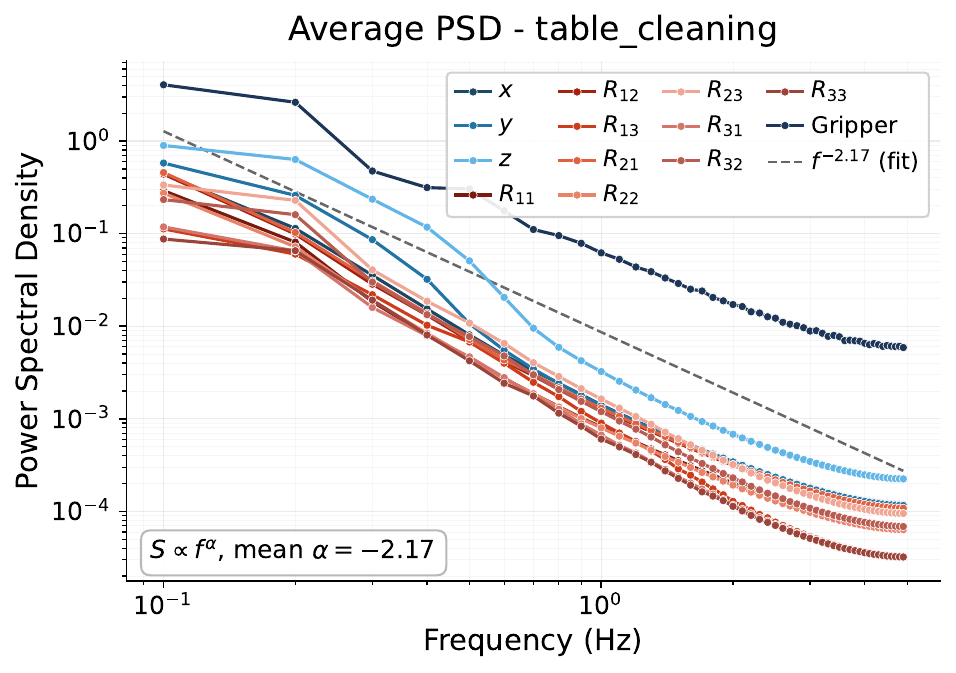}
        \caption{$\mathcal D_p$ from Section~\ref{sec:oxe_experiments}}
        \label{fig:table_cleaning_psd}
    \end{subfigure}
    \caption{\textbf{A visualization of the average PSD for a diverse range of robot datasets; a spectral power law exists in all of them.} Table~\ref{tab:psd_datasets} characterizes the datasets shown in (a)-(f) as well as the OXE dataset shown in Figure~\ref{fig:spectral_power_law}.}
    \label{fig:psd_all}
\end{figure*}

\begin{table}[!t]
\caption{Characteristics of the datasets analyzed via power spectral density (PSD) in Figures~\ref{fig:spectral_power_law} and~\ref{fig:psd_all}. Despite their differences, all datasets exhibit a spectral power law.}
\label{tab:psd_datasets}
\centering
\resizebox{\columnwidth}{!}{%
\begin{tabular}{clcccccc}
\toprule
\textbf{Fig.} & \textbf{Dataset} & \textbf{Source} & \textbf{Generation} & \textbf{Action Space} & \textbf{Action Type} & \textbf{Rotation Rep.} & $\bar{\alpha}^\dagger$ \\
\midrule
Fig.~\ref{fig:spectral_power_law} & COXE~\cite{open_x_embodiment_rt_x_2023} & $\mathcal{D}_q$ (Sec.~\ref{sec:oxe_experiments}) & Mixed & Task space & Delta & Axis-angle & -1.47 \\
\midrule
Fig.~\ref{fig:maze_2d_psd} & GCS~\cite{marcucci2022motionplanningobstaclesconvex} & $\mathcal{D}_p$ (Sec.~\ref{sec:controlled_experiments:motion_planning}) & Algorithmic & $x$-$y$ & Absolute & N/A & -1.82 \\
Fig.~\ref{fig:nmp_psd} & TrajOpt. & $\mathcal{D}_p$ (App.~\ref{sec:appendix:nmp}) & Algorithmic & Joint space & Absolute & N/A & -1.82 \\
Fig.~\ref{fig:planar_pushing_psd} & Planar pushing \cite{wei2025empirical} & $\mathcal{D}_p$ (Sec.~\ref{sec:controlled_experiments:sim2real}) & Human teleop & $x$-$y$ & Absolute & N/A & -2.01 \\
Fig.~\ref{fig:sim_planar_pushing_psd} & \citet{graesdal2024tightconvexrelaxationscontactrich} & $\mathcal{D}_q$ (Sec.~\ref{sec:controlled_experiments:sim2real}) & Algorithmic & $x$-$y$ & Absolute & N/A & -1.88 \\
Fig.~\ref{fig:bin_sorting_psd} & Block sorting & $\mathcal{D}_p$ (Sec.~\ref{sec:controlled_experiments:bin_sorting}) & Human teleop & Task space & Absolute & $\mathbb{R}^9$ & -2.27 \\
Fig.~\ref{fig:table_cleaning_psd} & Table cleaning & $\mathcal{D}_p$ (Sec.~\ref{sec:oxe_experiments}) & Human teleop & Task space & Absolute & $\mathbb{R}^9$ & -2.17 \\
\bottomrule
\end{tabular}%
}\\[0.5em]
\parbox{\columnwidth}{\footnotesize $\dagger$ $\bar{\alpha}$: mean fitted exponent across action dimensions, where $S(f) \propto f^{\alpha}$}
\end{table}

\textbf{Region Classification Experiment (from Section~\ref{sec:robot_data:spectral_power_law}):} We test how well the global plan in $A_0$ can be recovered from $A_t$ using a learned classifier. We train the classifier on 5000 trajectories generated by GCS trajectory optimization~\cite{marcucci2022motionplanningobstaclesconvex} in the 2D maze.
GCS decomposes the planning problem into discrete decisions (which regions of space to move through) and continuous decisions (the precise trajectory to follow within each region). Thus, the set of regions a trajectory passes through acts as a proxy for the global planning decisions, as it captures the topological structure of the maze.

Concretely, we train a classifier $c_\theta: \mathcal{A} \times \mathbb{R} \rightarrow \mathbb{R}^{20}$ that takes as input $(A_t, t)$ and predicts the probability that $A_0$ passes through each of the 20 convex regions\footnote{We intentionally exclude the start and goal conditioning from the classifier since those are sufficient for predicting the global plan.}. Figure~\ref{fig:maze_2d_plan_visualization} shows an example trajectory and its corresponding region labels. Figure~\ref{fig:maze_2d_region_accuracy} plots the average prediction accuracy at each noise level on a held-out test dataset: it saturates quickly as the noise decreases. This implies that nearly all the global planning information is diffused early in the backward process. Altogether, this experiment supports our claim that action diffusion exhibits a global-to-local hierarchy.

\begin{figure*}
    \centering
    \begin{subfigure}[b]{0.24\textwidth}
        \centering
        \includegraphics[width=\linewidth]{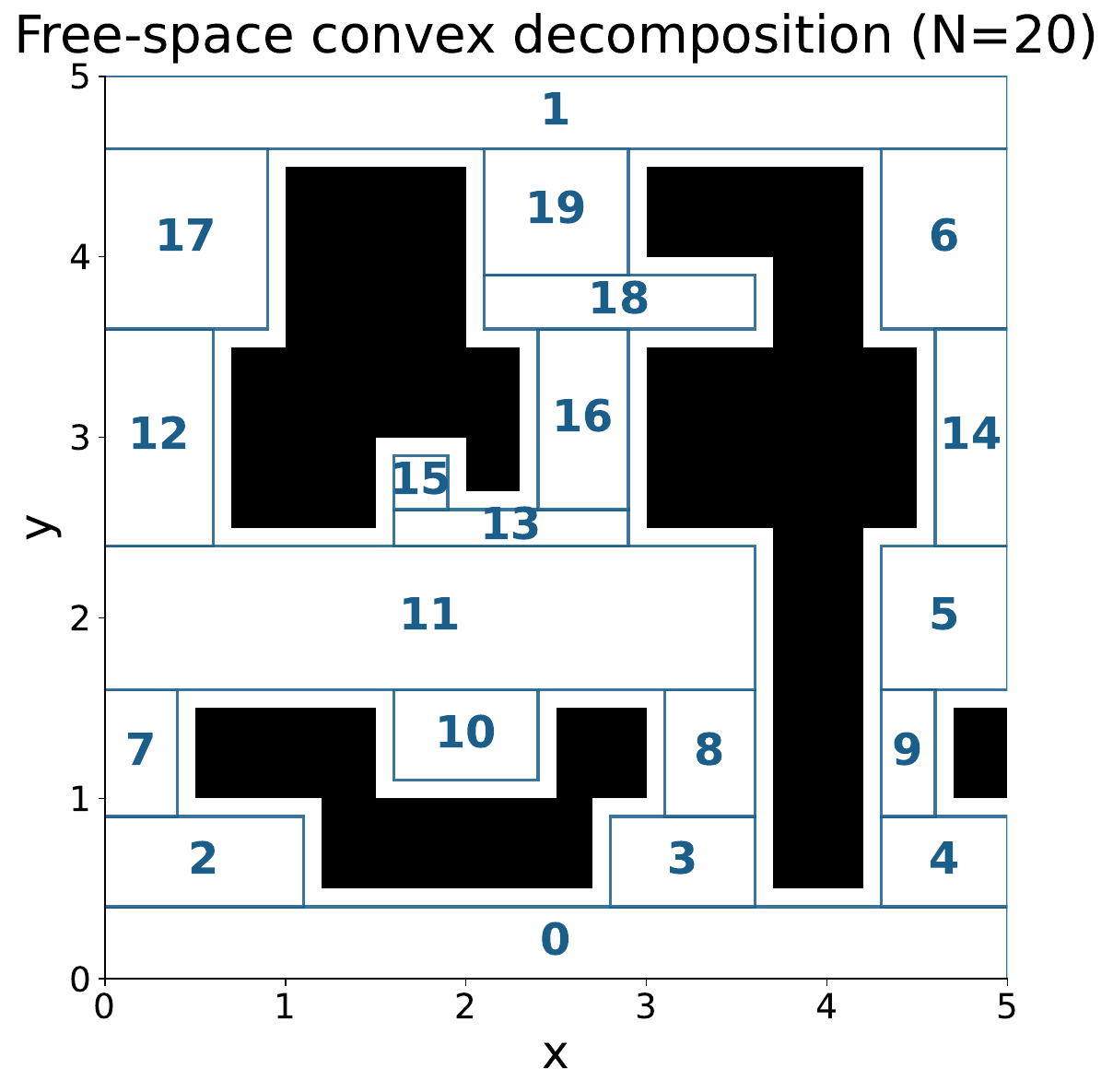}
        \caption{Convex decomposition of the collision-free space.}
        \label{fig:maze_2d_regions}
    \end{subfigure}
    \hfill
    \begin{subfigure}[b]{0.28\textwidth}
        \centering
        \includegraphics[width=\linewidth]{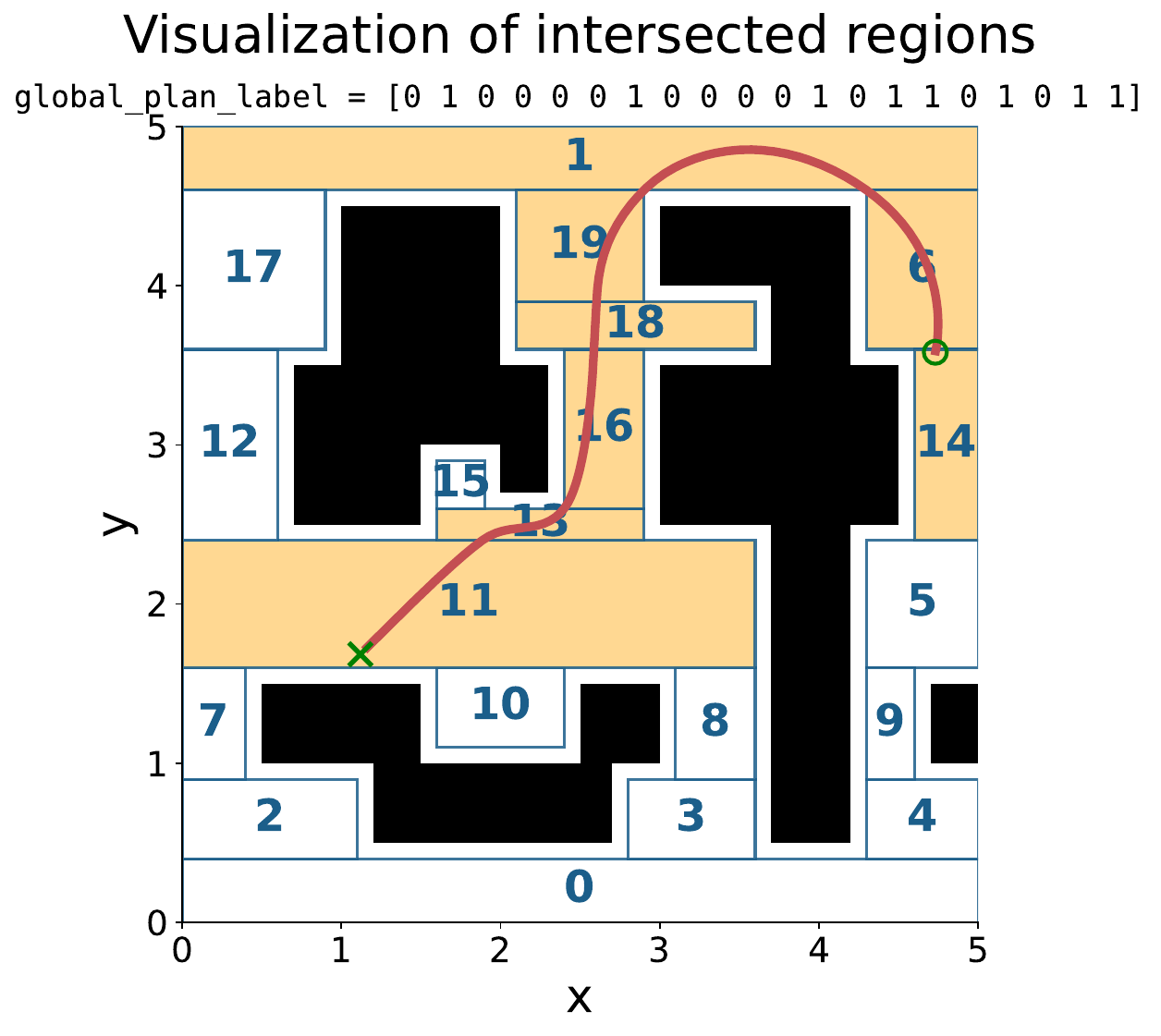}
        \caption{Visualization of the discrete decisions in GCS.}
        \label{fig:maze_2d_plan_visualization}
    \end{subfigure}
    \hfill
    \begin{subfigure}[b]{0.45\textwidth}
        \centering
        \includegraphics[width=\linewidth]{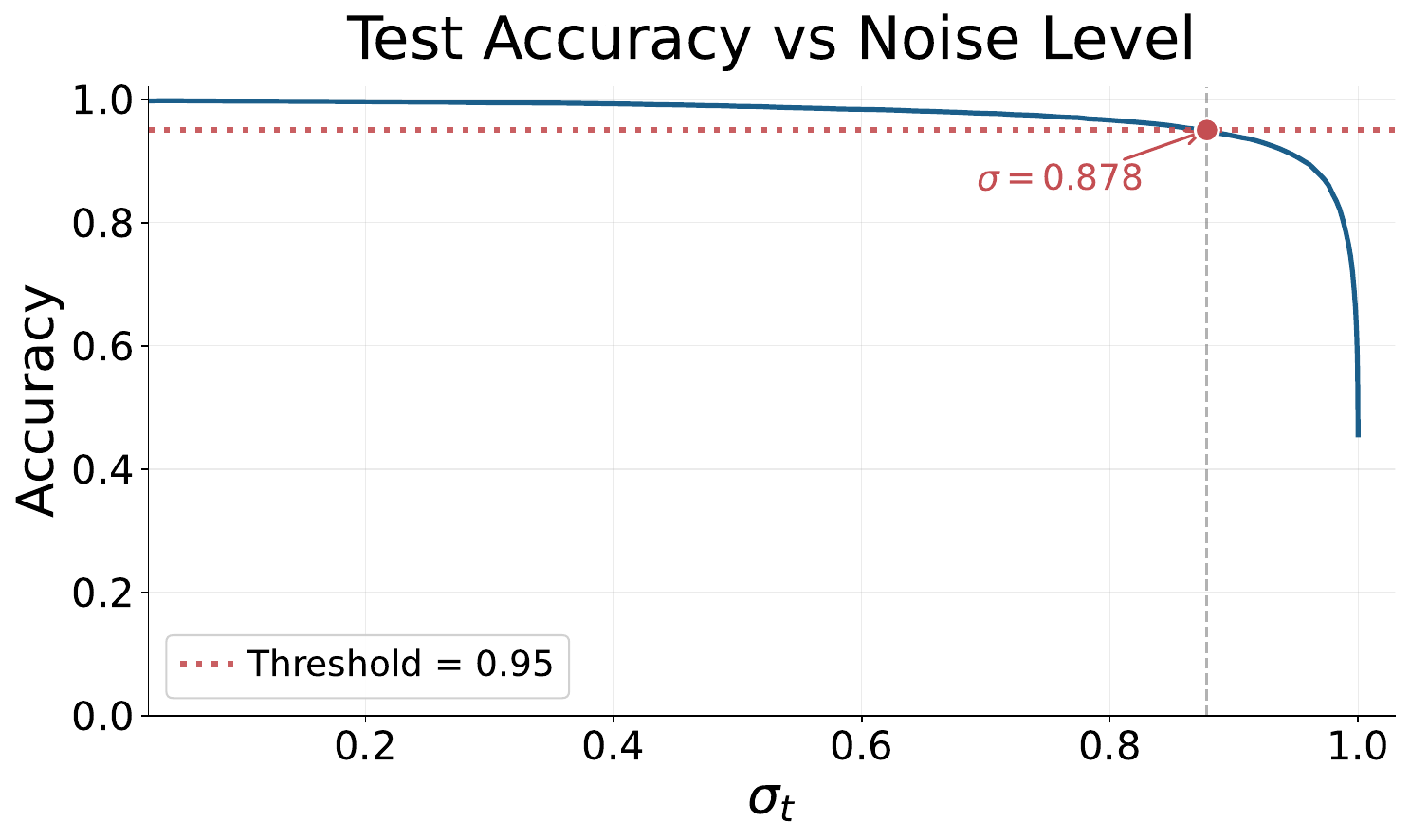}
        \caption{Prediction accuracy for the plan at different noise levels.}
        \label{fig:maze_2d_region_accuracy}
    \end{subfigure}
    \caption{\textbf{The backward process commits to a global plan at high noise.} By $\sigma_t = 0.878$, the global plan in $A_0$ can already be recovered from $A_t$ with 95\% accuracy. Accuracy plateaus afterward. This indicates that the start of the backward process performs global planning. Figure~\ref{fig:visualize_backward} shows that the remainder of the backward process is devoted to local trajectory refinement.}
    \label{fig:maze_2d_region_classifier}
\end{figure*}

\textbf{Related Work:} Decomposing action generation hierarchically is a classic approach for solving problems in robotics \cite{garrett2020integratedtaskmotionplanning}. Recent works have begun incorporating this idea into Diffusion Policy. Generally, these approaches factorize action generation into a high and low level policy~\cite{pi05, nvidia2025gr00tn1openfoundation}; however, this factorization is often handcrafted and task-specific~\cite{wang2025articubotlearninguniversalarticulated, zhao2025generalizablehierarchicalskilllearning, gireesh2026hdflowhierarchicaldiffusionflowplanning}. Our results show that a global-to-local action hierarchy emerges naturally from the spectral structure of the data. Thus, we \textit{hypothesize} that with sufficient data, an end-to-end policy could potentially learn the ``correct'' hierarchy in the diffusion process without manual design.

\textbf{Locality (Sensitivity Field from Figure~\ref{fig:locality}):} Let $h_\theta^{(8)}$ denote an $x_0$-predictor's estimate of the 8th clean action and let $A_t^{(i)}$ denote the $i$-th action in the noisy action chunk $A_t$. For each diffusion time $t$, the sensitivity field is defined as:
\begin{equation}
    S_t(i) := \mathbb{E}_{(O,A_0)\sim\mathcal{D}}\!\left[\left\lVert \frac{\partial h_\theta^{(8)}(A_t, O, t)}{\partial A_t^{(i)}}\right\rVert_F\right], \quad i \in [0, 15].
    \label{eq:sensitivity_field}
\end{equation}
We normalize each sensitivity field so that $\max_i S_t(i) = 1$ at each diffusion time. Intuitively, $S_t(i)$ measures the sensitivity of the 8th action estimate to the $i$-th noisy action at diffusion time $t$.

\section{Pseudocode and Possible Extensions}
\label{sec:appendix:implementation_details}

\subsection{Ambient Diffusion Policy Pseudocode}
\label{sec:appendix:pseudocode}
Given $\mathcal D_p$, $\mathcal D_q$, and annotations from Phase 1 in the Method section, we train an Ambient Diffusion Policy following Algorithm~\ref{alg:adp}. At each step, we sample a batch of diffusion times $t^{(i)} \stackrel{\text{i.i.d.}}{\sim} \mathcal{U}[0, T]$ and then draw admissible samples from $\mathcal{D}_p \cup \mathcal{D}_q$. We update the policy parameters, $\theta$, to minimize the loss in \textsc{ComputeLoss}.

\textsc{ComputeLoss} executes the forward process and can use two choices for $\mathcal{L}$. The first is the standard denoising loss, $\mathcal L_\text{denoising}$~\cite{ho2020denoisingdiffusionprobabilisticmodels}. The second is the \emph{Ambient loss}, $\mathcal L_\text{ambient}$, discussed in Appendix~\ref{sec:appendix:ambient_loss}.\footnote{The Ambient loss was only used in the maze experiments (see Table~\ref{tab:experiments_overview}). We tested the Ambient loss on the experiments in the OXE experiments and found it made no statistically significant difference; thus, we continued using the regular denoising loss for simplicity.} We present two versions of \textsc{ComputeLoss} for variance-exploding (Algorithm~\ref{alg:adp_loss}) and variance-preserving diffusion processes (Algorithm~\ref{alg:adp_loss_vp}).

\begin{algorithm}
\caption{Ambient Diffusion Policy: Training}
\label{alg:adp}
\begin{algorithmic}[1]
\Require $\mathcal{D}_p$, $\mathcal{D}_q$ with $(t_{\min}^{(i)}, t_{\max}^{(i)})$, $\mathcal{L} \in \{\mathcal{L}_\text{ambient}, \mathcal{L}_\text{denoising}\}$, noise schedule $\sigma(t)$, max time $T$, batch size $B$, policy parameters $\theta$
\Repeat
    \State Sample $t^{(i)} \stackrel{\text{i.i.d.}}{\sim} \mathcal{U}[0, T]$ for $i = 1, \ldots, B$
    \State Sample batch $\{(O^{(i)}, A_0^{(i)})\}_{i=1}^{B}$ s.t. $t^{(i)} \in [0, t_{\max}^{(i)}) \cup (t_{\min}^{(i)}, T]$
    \State $\ell \gets \tfrac{1}{B}\sum_{i=1}^{B} \textsc{ComputeLoss}\left(\mathcal{L}, A_0^{(i)}, O^{(i)}, t^{(i)}, t_{\max}^{(i)}, t_{\min}^{(i)}\right)$
    \State Update $\theta$ on $\ell$
\Until{converged}
\State \Return $\pi_\theta$
\end{algorithmic}
\end{algorithm}

\begin{algorithm}
\caption{\textsc{ComputeLoss} (variance-exploding)}
\label{alg:adp_loss}
\begin{algorithmic}[1]
\Require $\mathcal{L}$, $A_0$, $O$, $t$, $t_{\max}$, $t_{\min}$
\If{$t \in [0, t_{\max})$ \textbf{or} $\mathcal{L} = \mathcal{L}_{\text{denoising}}$}
    \State $A_t \gets A_0 + \sigma_t\, Z$, \quad $Z \sim \mathcal{N}(0, I)$
    \State \Return $\mathcal{L}_{\text{denoising}}(A_t, A_0, O, t)$ \Comment{standard DDPM loss~\cite{ho2020denoisingdiffusionprobabilisticmodels}}
\Else \Comment{$t\in(t_{\min}, T]$ and $\mathcal{L} = \mathcal{L}_{\text{ambient}}^\text{VE}$}
    \State Sample $Z_1, Z_2 \stackrel{\text{i.i.d.}}{\sim} \mathcal{N}(0, I)$
    \State $A_{t_{\min}} \gets A_0 + \sigma_{t_{\min}}\, Z_1$
    \State $A_t \gets A_{t_{\min}} + \sqrt{\sigma_t^2 - \sigma_{t_{\min}}^2}\ Z_2$
    \State \Return $\mathcal{L}_{\text{ambient}}^\text{VE}(A_t, A_{t_{\min}}, O, t)$ \Comment{VE Ambient loss~\cite{daras2024consistent}}
\EndIf
\end{algorithmic}
\end{algorithm}

\begin{algorithm}
\caption{\textsc{ComputeLoss} (variance-preserving)}
\label{alg:adp_loss_vp}
\begin{algorithmic}[1]
\Require $\mathcal{L}$, $A_0$, $O$, $t$, $t_{\max}$, $t_{\min}$
\If{$t \in [0, t_{\max})$ \textbf{or} $\mathcal{L} = \mathcal{L}_{\text{denoising}}$}
    \State $A_t \gets \sqrt{1 - \sigma_t^2} A_0 + \sigma_t Z$, \quad $Z \sim \mathcal{N}(0, I)$
    \State \Return $\mathcal{L}_{\text{denoising}}(A_t, A_0, O, t)$ \Comment{standard DDPM loss~\cite{ho2020denoisingdiffusionprobabilisticmodels}}
\Else \Comment{$t\in(t_{\min}, T]$ and $\mathcal{L} = \mathcal{L}_{\text{ambient}}^\text{VP}$}
    \State Sample $Z_1, Z_2 \stackrel{\text{i.i.d.}}{\sim} \mathcal{N}(0, I)$
    \State $A_{t_{\min}} \gets \sqrt{1 - \sigma_{t_{\min}}^2} A_0 + \sigma_{t_{\min}} Z_1$
    \State $A_t \gets \sqrt{\frac{1 - \sigma_t^2}{1 - \sigma_{t_{\min}}^2}} A_{t_{\min}} + \sqrt{\frac{\sigma_t^2 - \sigma_{t_{\min}}^2}{1 - \sigma_{t_{\min}}^2}} Z_2$
    \State \Return $\mathcal{L}_{\text{ambient}}^\text{VP}(A_t, A_{t_{\min}}, O, t)$ \Comment{Eq.~\eqref{eq:vp_ambient_denoiser_loss} ($\epsilon$-pred) or Eq.~\eqref{eq:vp_ambient_sampler_loss} ($x_0$-pred)}
\EndIf
\end{algorithmic}
\end{algorithm}

\begin{figure}[!t]
    \centering
    \begin{minipage}{0.63\linewidth}
        \centering
        \includegraphics[width=\linewidth]{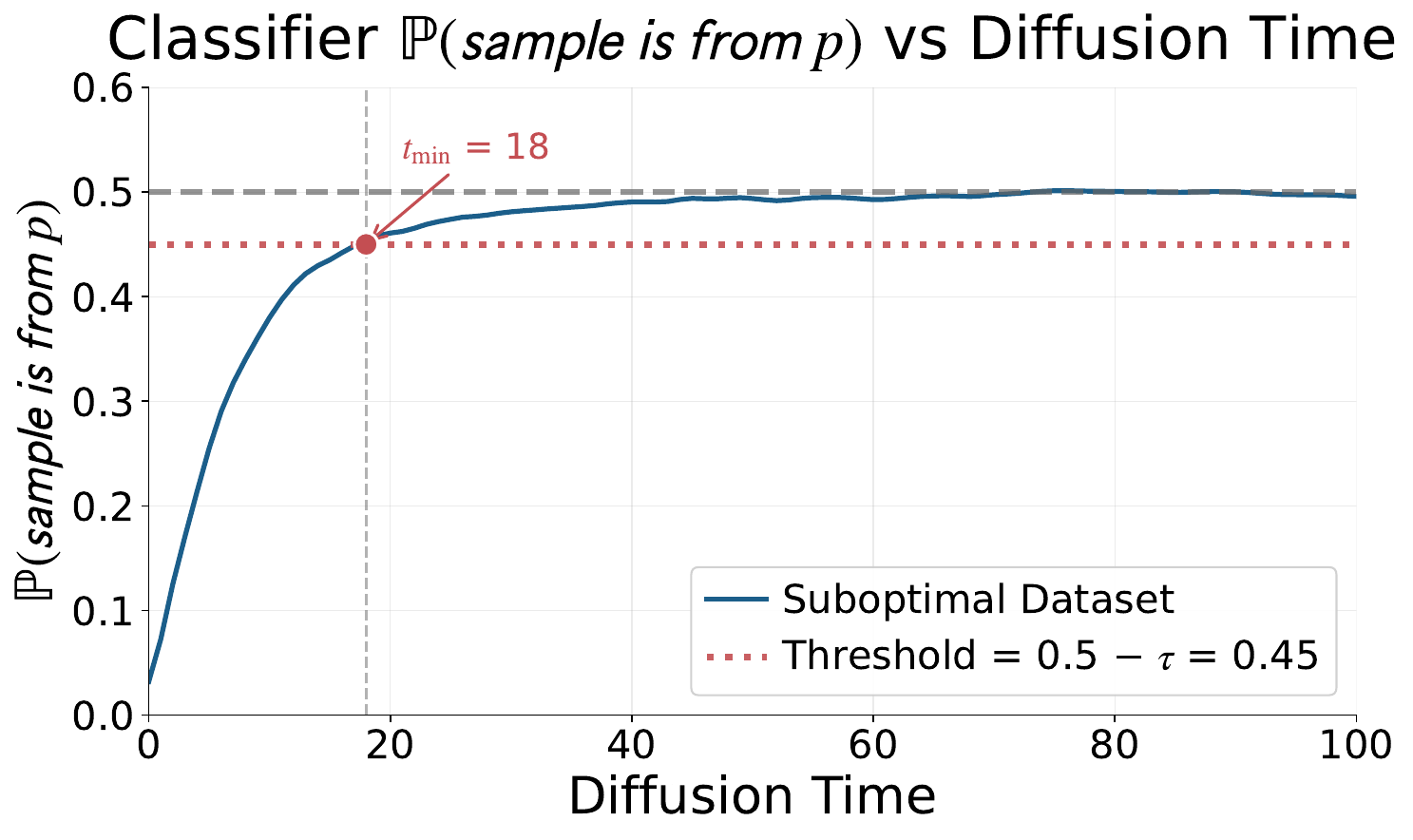}
    \end{minipage}\hfill
    \begin{minipage}{0.35\linewidth}
        \caption{\textbf{Average classifier output for suboptimal samples in the 2D maze dataset (Section~\ref{sec:controlled_experiments:motion_planning}) at different diffusion times.} As $t$ increases, $p_t$ and $q_t$ become harder to distinguish. The dotted line is the threshold $0.5-\tau$, which is crossed for $t_{\min}=18$.}
        \label{fig:classifier_accuracy}
    \end{minipage}
\end{figure}

\subsection{Diffusion Policy Implementation Details}
\label{sec:appendix:diffusion_policy}
Unless otherwise noted, policies are trained to predict the next 16 actions at 10 Hz with a fixed horizon for observations and actions. We execute the first 8 actions and predict new actions in an MPC-like fashion. Policies are conditioned on the 2 most recent observations. We consider a fixed horizon for observations and actions and assume that the learned policy is \textit{time-invariant}, i.e. the conditional action distribution $\pi(\cdot \mid O)$ depends only on the most recent observations and not on the time $t$. The per-experiment observation and action spaces are summarized in Tables~\ref{tab:experiments_overview} and~\ref{tab:experiments_overview_oxe}.

We use the U-Net architecture from \cite{chi2024diffusionpolicyvisuomotorpolicy}, but add the option for conditioning on a goal. When a policy is image-conditioned, we use the ResNet18 backbone for the vision encoder. We train the policy end-to-end with a cosine learning rate schedule. The noise schedule $\sigma(t)$ is a cosine schedule with $T=100$ diffusion steps. We apply EMA to the training loop. Sampling is performed online using 10 steps of DDIM \cite{song2022denoisingdiffusionimplicitmodels}. Predicted actions are linearly interpolated into piecewise linear trajectories and passed through a first-order low-pass filter before being sent to the robot.

\subsection{Future Direction: Classifier-Based Rejection Sampling}
\label{sec:appendix:rejection_sampling}
This section proposes a similar algorithm to Ambient Diffusion Policy based on rejection sampling. We provide theoretical foundations for this method, but do not experimentally verify it. This is an exciting direction for future work.

Recall that we trained a classifier that outputs the probability that a noisy action chunk came from $p_t$ (as opposed to $q_t$). Algorithm~\ref{alg:rejection-sampling-classifier} samples (approximately) from $P$ given only access to samples from the mixture $M = \tfrac{1}{2}P + \tfrac{1}{2}Q$ and the classifier. Proposition~\ref{prop:rejection-sampling-guarantee} bounds the total variation between the sampled distribution and $P$. By using the classifier for rejection sampling, we can train denoisers for $P$ despite leveraging data from an auxiliary source $Q$. In the proofs, we use $f_\theta$ to denote the classifier instead of $c_\phi$ from Section~\ref{sec:method}.

\begin{algorithm}[H]
\caption{Rejection sampling via classifier}
\label{alg:rejection-sampling-classifier}
\begin{algorithmic}[1]
\Require Noisy classifier $f_\theta(x, \sigma)$, mixture distribution $M = \tfrac{1}{2}P + \tfrac{1}{2}Q$, noise distribution $\rho_\sigma$
\State Sample $\sigma \sim \rho_\sigma$
\While{True}
    \State Sample $X \sim M$
    \State $X_\sigma \gets \textsc{AddNoise}(X, \sigma)$
    \State Sample $c \sim \mathrm{Bern}(f_\theta(X_\sigma, \sigma))$
    \If{$c = 1$}
        \State \Return $(\sigma, X_\sigma)$
    \EndIf
\EndWhile
\end{algorithmic}
\end{algorithm}

\begin{proposition}[Rejection sampling guarantee]
Suppose $f_\theta$ is uniformly $\varepsilon$-optimal across $\sigma$:
\begin{equation*}
    \mathcal{L}_{\mathrm{CE}}(f \mid \sigma) \leq \mathcal{L}_{\mathrm{CE}}(f^* \mid \sigma) + \varepsilon, \qquad \forall \sigma.
\end{equation*}
For any $\sigma \geq 0$, let $P_\sigma$ be the noisy distribution of $X \sim P$, and let $\hat P_\sigma$ be the distribution of $X_\sigma$ output by Algorithm~\ref{alg:rejection-sampling-classifier}. Then
\begin{equation*}
    \mathrm{d}_{\mathrm{TV}}(\hat P_\sigma, P_\sigma) \leq O(\sqrt{\varepsilon}).
\end{equation*}
\label{prop:rejection-sampling-guarantee}
\end{proposition}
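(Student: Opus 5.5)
We fix $\sigma$ and drop it from the notation, writing $P = P_\sigma$, $Q = Q_\sigma$, $M = \tfrac12 P + \tfrac12 Q$, $f = f_\theta(\cdot,\sigma)$ and $f^*$ for the optimal classifier of Eq.~\eqref{eq:optimal-classifier}. Adding noise after drawing $X\sim M$ just means sampling $X_\sigma$ from the noisy mixture, which equals $\tfrac12 P_\sigma + \tfrac12 Q_\sigma$. Algorithm~\ref{alg:rejection-sampling-classifier} is then a plain rejection sampler: each round draws $x\sim M$ and accepts with probability $f(x)$. So the output density with respect to $M$ is
\[
\frac{d\hat P}{dM}(x) = \frac{f(x)}{Z}, \qquad Z := \mathbb{E}_M f .
\]
For the optimal classifier, $f^* = \tfrac12 \tfrac{dP}{dM}$ and $\mathbb{E}_M f^* = \tfrac12$, so $f^*/\mathbb{E}_M f^* = dP/dM$. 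In other words, the exact sampler returns $P$. The proof therefore reduces to a perturbation bound for the normalized ratio $f/Z$ around $f^*/Z^*$, with $Z^* = \tfrac12$.

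Next I write $\mathrm{d}_{\mathrm{TV}}(\hat P, P) = \tfrac12\,\mathbb{E}_M\big|f/Z - 2f^*\big|$, using the $\tfrac12 L^1$ convention (the constant is irrelevant for an $O(\cdot)$ statement). I split it with the triangle inequality:
\[
\mathbb{E}_M\Big|\frac{f}{Z} - 2f^*\Big| \le \frac{1}{Z}\,\mathbb{E}_M|f - f^*| + \Big|\frac1Z - 2\Big|\,\mathbb{E}_M f^* .
\]
Both terms are controlled by $\mathbb{E}_M|f-f^*|$. The first is immediate. For the second, $\mathbb{E}_M f^* = \tfrac12$ and $|Z - \tfrac12| \le \mathbb{E}_M|f-f^*|$.

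The key input comes from the proof of Theorem~\ref{th:approximate-classifier-threshold}, applied conditionally on $\sigma$ using the uniform hypothesis. The cross-entropy excess equals $\mathbb{E}_M\, d(f^*\|f)$, so it is at most $\varepsilon$. Pinsker for Bernoullis then gives $\mathbb{E}_M (f-f^*)^2 \le \varepsilon/2$, and Jensen gives $\mathbb{E}_M|f-f^*| \le \sqrt{\varepsilon/2}$.

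The remaining obstacle is the normalizer $Z$ in the denominator. I need $Z$ bounded away from zero, and $Z \ge \tfrac12 - \sqrt{\varepsilon/2}$ gives this. For $\varepsilon \le \varepsilon_0$ (say $\sqrt{\varepsilon/2}\le 1/4$) we get $Z\ge 1/4$, so both terms are $O(\sqrt\varepsilon)$ with explicit constants: the first is at most $4\sqrt{\varepsilon/2}$, and the second is at most $\tfrac12 \cdot |2Z-1|/Z \le 4\sqrt{\varepsilon/2}$. For $\varepsilon > \varepsilon_0$ the bound holds trivially, since TV is at most 1, after enlarging the constant. The same lower bound on $Z$ shows the while-loop terminates almost surely, with expected $1/Z \le 4$ iterations, so $\hat P$ is well defined. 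Finally, since the bound is uniform in $\sigma$, it also holds for the joint output $(\sigma, X_\sigma)$ after averaging over $\sigma\sim\rho_\sigma$.
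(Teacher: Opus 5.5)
Your proof is correct and follows essentially the same route as the paper: you write the output density as $m f_\theta/Z$, bound $\delta=\mathbb{E}_M|f_\theta-f^*|\le\sqrt{\varepsilon/2}$ via Pinsker on the cross-entropy gap followed by Jensen, and split the TV with the triangle inequality into a $\delta/(2Z)$ term and a normalizer term controlled by $|1-2Z|\le 2\delta$, arriving at $\delta/Z$. Your extra remarks (the trivial bound for large $\varepsilon$, almost-sure termination of the loop, and averaging over $\sigma$) are harmless refinements of the paper's restriction to $\varepsilon<2/9$.
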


\begin{proof}
We drop the subscript $\sigma$ and write $P \equiv P_\sigma$, $Q \equiv Q_\sigma$. Let $m(x)$ denote the mixture density. The true distribution $P$ satisfies $p(x) = 2 m(x) f^*(x)$, while the rejection-sampled density is
\begin{equation*}
    \hat p(x) = \frac{m(x) f_\theta(x)}{Z}, \quad Z = \mathbb{E}_{X \sim M}[f_\theta(X)],
\end{equation*}
where $Z$ is the acceptance rate. By Pinsker applied to the cross-entropy gap, $\mathbb{E}_{X \sim M}[(f^*(X) - f_\theta(X))^2] \leq \varepsilon/2$, so by Jensen
\begin{equation*}
    \delta \triangleq \mathbb{E}_{X \sim M} |f_\theta(X) - f^*(X)| \leq \sqrt{\varepsilon/2}.
\end{equation*}
This gives $|Z - \tfrac{1}{2}| \leq \delta$, hence $Z \geq \tfrac{1}{2} - \delta$. The total variation between $\hat P$ and $P$ is
\begin{align*}
    \mathrm{d}_{\mathrm{TV}}(\hat P, P) &= \frac{1}{2} \int \left| \frac{m(x) f_\theta(x)}{Z} - 2 m(x) f^*(x) \right| dx \\
    &= \mathbb{E}_{X \sim M} \left| \frac{f_\theta(X) - 2 Z f^*(X)}{2Z} \right|.
\end{align*}
By the triangle inequality,
\begin{equation*}
    \left| \frac{f_\theta(X) - 2 Z f^*(X)}{2Z} \right| \leq \frac{|f_\theta(X) - f^*(X)|}{2Z} + \frac{f^*(X)}{2Z} |1 - 2Z|.
\end{equation*}
Taking the expectation over $M$ and using $\mathbb{E}_{X \sim M}[f^*(X)] = \tfrac{1}{2}$ and $|1 - 2Z| \leq 2\delta$:
\begin{equation*}
    \mathrm{d}_{\mathrm{TV}}(\hat P, P) \leq \frac{\delta}{2Z} + \frac{2\delta}{4Z} = \frac{\delta}{Z} \leq \frac{2\delta}{1 - 2\delta}.
\end{equation*}
For $\varepsilon < 2/9$ we have $1 - 2\delta > 1/3$, hence
\begin{equation*}
    \mathrm{d}_{\mathrm{TV}}(\hat P, P) \leq 6\sqrt{\varepsilon/2} = O(\sqrt{\varepsilon}).
\end{equation*}
\end{proof}

\section{Experiments Overview}
\label{sec:appendix:experiments_overview}
Tables~\ref{tab:experiments_overview} and~\ref{tab:experiments_overview_oxe} summarize the datasets, distribution shifts, observation and action spaces, hyperparameters, loss functions, and evaluation details for all main experiments in Sections~\ref{sec:controlled_experiments} and~\ref{sec:oxe_experiments} respectively; ablation parameters may differ. Importantly, we swept hyperparameters for each baseline to ensure a fair comparison.

Binary success metrics are reported with 95\% Clopper-Pearson confidence intervals. Otherwise, error bars are standard error intervals. For each training run, we save 5-10 checkpoints and report the performance of the best-performing one. While the initial exposition of our algorithm was presented with variance-exploding diffusion, our implementation uses variance-preserving diffusion (i.e.\ $\sigma(T)=1$).

\begin{table*}[!hb]
\centering
\caption{Experimental configurations for the four \textit{generality} experiments. Parameters vary for the ablations.}
\renewcommand{\arraystretch}{1.2}
\resizebox{\textwidth}{!}{%
\begin{tabular}{>{\bfseries}l | >{\bfseries}l | cccc}
    \toprule
    & & \textbf{Maze}
    & \makecell{\textbf{Neural}\\\textbf{Motion Planning}}
    & \makecell{\textbf{Sim-and-Real}\\\textbf{Co-training}}
    & \makecell{\textbf{Block}\\\textbf{Sorting}} \\
    \midrule
    \multirow{5}{*}{\rotatebox[origin=c]{90}{Dataset Details}}
    & $\mathcal{D}_p$ & GCS \cite{marcucci2022motionplanningobstaclesconvex} & \makecell{Trajectory\\Optimization \cite{manipulation}} & \makecell{Joystick\\Teleop \cite{wei2025empirical}} & VR Teleop \\
    & $\mathcal{D}_q$ & RRT \cite{LaValle1998RapidlyexploringRT} & BiRRT \cite{kuffner2000rrt} & \makecell{Contact-Rich\\Planner \cite{graesdal2024tightconvexrelaxationscontactrich}} & VR Teleop \\
    & $|\mathcal{D}_p|$ (Num. Demos.) & 50 & 100,000 & 50 & 50 \\
    & $|\mathcal{D}_q|$ (Num. Demos.) & 5000 & 1,000,000 & 2000 & 200 \\
    & Nature of shift & \makecell{Low-quality/\\noisy trajectories} & \makecell{Low-quality/\\noisy trajectories} & \makecell{Sim-to-real\\gap} & \makecell{Task\\mismatch} \\
    \midrule
    \multirow{2}{*}{\rotatebox[origin=c]{90}{Policy I/O}}
    & Observation space
    & Proprio
    & Proprio
    & \makecell{EE position\\Scene camera\\Wrist camera}
    & \makecell{EE pose (xyz, R9)\\Gripper state\\Scene camera\\Blocks camera\\Wrist camera} \\
    & Action space
    & $x$-$y$ position
    & \makecell{Joint angles\\$\mathbb{R}^7$}
    & \makecell{$x$-$y$ pusher\\position}
    & \makecell{EE command\\(xyz, R9, gripper)} \\
    \midrule
    \multirow{3}{*}{\rotatebox[origin=c]{90}{Parameters}}
    & $\sigma_{t_{\min}^*}\in[0,1]$ & \makecell{0.074\\(sweep)} & \makecell{0.025\\(sweep)} & \makecell{varies\\see Table~\ref{tab:sim_and_real_results}} & \makecell{N/A\\(sweep)} \\
    & $\sigma_{t_{\max}^*}\in[0,1]$ & N/A & N/A & \makecell{0\\(sweep)} & \makecell{0.46\\(sweep)} \\
    & $\alpha^*\in[0,1]$ for co-training & \makecell{0.019\\(sweep)} & \makecell{0.091\\(sweep)} & \makecell{0.25\\(sweep)} & \makecell{0.9\\(sweep)} \\
    \midrule
    \multirow{2}{*}{\rotatebox[origin=c]{90}{Loss}}
    & Loss prediction target & $x_0$ & $\epsilon$ & $x_0$ & $\epsilon$ \\
    & Loss function & $\mathcal L_\text{ambient} $ & $\mathcal L_\text{denoising}$ & $\mathcal L_\text{denoising}$ & $\mathcal L_\text{denoising}$ \\
    \midrule
    \rotatebox[origin=c]{90}{Eval.}
    & Num.\ trials & 1,000 & 1,000 & 200 & \makecell{200\\(~800 blocks)} \\
    \bottomrule
\end{tabular}%
}
\label{tab:experiments_overview}
\end{table*}

\begin{table}[t]
\centering
\footnotesize
\setlength{\tabcolsep}{4pt}
\caption{Experimental configurations for the two \textit{OXE scaling} experiments. Parameters vary for the ablations.}
\renewcommand{\arraystretch}{1.2}
\begin{tabular}{>{\bfseries}l | >{\bfseries}l | cc}
    \toprule
    & & \makecell{\textbf{Table}\\\textbf{Cleaning}}
    & \makecell{\textbf{Tower}\\\textbf{Building}} \\
    \midrule
    \multirow{5}{*}{\rotatebox[origin=c]{90}{Dataset Details}}
    & $\mathcal{D}_p$ & VR Teleop & VR Teleop \\
    & $\mathcal{D}_q$ & \makecell{OXE\\\cite{open_x_embodiment_rt_x_2023}} & \makecell{OXE\\\cite{open_x_embodiment_rt_x_2023}} \\
    & $|\mathcal{D}_p|$ (Num. Demos.) & 50 \& 150 & 35 \\
    & $|\mathcal{D}_q|$ (Num. Demos.) & \makecell{1{,}279{,}774 (MS++)\\1{,}402{,}606 (COXE)} & 1{,}402{,}606 (COXE) \\
    & Nature of shift & Unstructured & Unstructured \\
    \midrule
    \multirow{2}{*}{\rotatebox[origin=c]{90}{Policy I/O}}
    & Observation space
    & \makecell{EE pose (xyz, R9)\\Gripper state\\Table camera\\Drawer camera\\2x Wrist camera}
    & \makecell{EE pose (xyz, R9)\\Gripper state\\Table camera\\Tower camera\\2x Wrist camera} \\
    & Action space
    & \makecell{Delta pose\\(xyz, axis-angle,\\gripper)}
    & \makecell{Delta pose\\(xyz, axis-angle,\\gripper)} \\
    \midrule
    \multirow{3}{*}{\rotatebox[origin=c]{90}{Parameters}}
    & $\sigma_{t_{\min}^*}\in[0,1]$ & \makecell{See Figure~\ref{fig:weighting_visualization}\\(classifier)} & \makecell{See Figure~\ref{fig:tower_building_weighting_visualization}\\(classifier)} \\
    & $\sigma_{t_{\max}^*}\in[0,1]$ & \makecell{0.167\\(no sweep)} & \makecell{0.09\\(no sweep)} \\
    & $\alpha^*\in[0,1]$ for co-training & \makecell{0.5\\(no sweep)} & \makecell{0.5\\(no sweep)} \\
    \midrule
    \multirow{2}{*}{\rotatebox[origin=c]{90}{Loss}}
    & Loss prediction target & $\epsilon$ & $\epsilon$ \\
    & Loss function & $\mathcal L_\text{denoising}$ & $\mathcal L_\text{denoising}$ \\
    \midrule
    \rotatebox[origin=c]{90}{Eval.}
    & Num.\ trials & \makecell{20\\(60 objects)} & 20 \\
    \bottomrule
\end{tabular}
\label{tab:experiments_overview_oxe}
\end{table}

\section{Controlled Experiments}
\label{sec:appendix:generality}

\subsection{Noisy Trajectories: 2D Maze}
\textbf{Data Generation:} The GCS \cite{marcucci2022motionplanningobstaclesconvex} trajectories were generated using third-order Bezier curves, first-order continuity constraints, and a maximum of 10 rounded paths. Both the GCS and the RRT datasets were generated with a velocity constraint of 1m/s in the $x$ and $y$ directions and 0.1m of obstacle padding. The maze environment is 5m by 5m.

\textbf{Evaluation:} During each trial, we sample the start and goal uniformly in the padded collision-free configuration space; the robot is initialized at the start configuration. A trial is successful if the robot can navigate to a 0.15m ball around the end configuration without collision within 30s of simulation time. We performed 1000 trials per policy.

We compute the smoothness metric as the average squared acceleration of the policy's rollout. Concretely, we fit a cubic spline to the rollout and evaluate
\begin{equation}
    \mathrm{Smoothness} = \tfrac{1}{T}\!\!\int_0^T\!\!\lVert a(t)\rVert_2^2\, dt,
    \label{eq:smoothness}
\end{equation}
where $a(t)$ is the acceleration of the spline at time $t$. We only report smoothness for the successful rollouts, for three reasons: 1) failed rollouts have orders of magnitude higher accelerations that disproportionally bias the results, 2) across 1000 trials, most policies had less than 10 failed rollouts, 3) the overall trend when averaging across all trials is the same.

\label{sec:appendix:maze}

Figure~\ref{fig:maze_sigma_min_sweep} shows the maze success rate as a function of $\sigma_{t_{\min}}$, and Figure~\ref{fig:more_maze_rollouts} visualizes representative policy rollouts for each training algorithm.

\begin{figure}[!t]
    \centering
    \includegraphics[width=\linewidth]{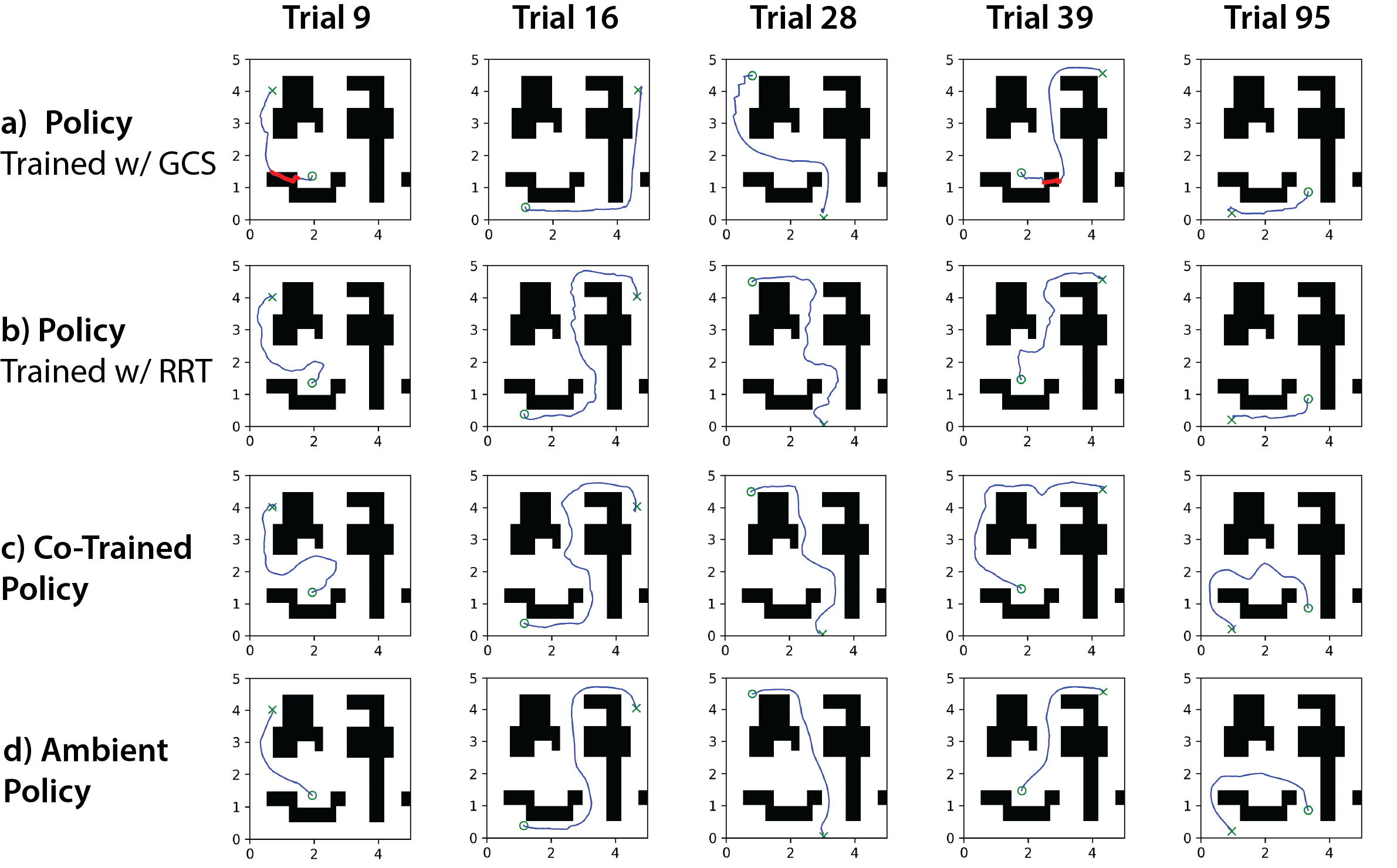}
    \caption{\textbf{Policy rollouts from the same start and goal pairs for different training algorithms.}
    \textbf{a)} The policy trained with data filtering (i.e. GCS-only) has not seen enough data to learn the maze. Its rollouts frequently collide with obstacles.
    \textbf{b)} The policy trained with RRT-only has memorized the maze, but exhibits non-smoothness.
    \textbf{c)} The co-trained policy's behavior is similar to the RRT-only policy.
    \textbf{d)} The Ambient Diffusion Policy is smooth and understands the structure of the maze.}
    \label{fig:more_maze_rollouts}
\end{figure}

\begin{figure}[!t]
    \centering
    \includegraphics[width=0.6\linewidth]{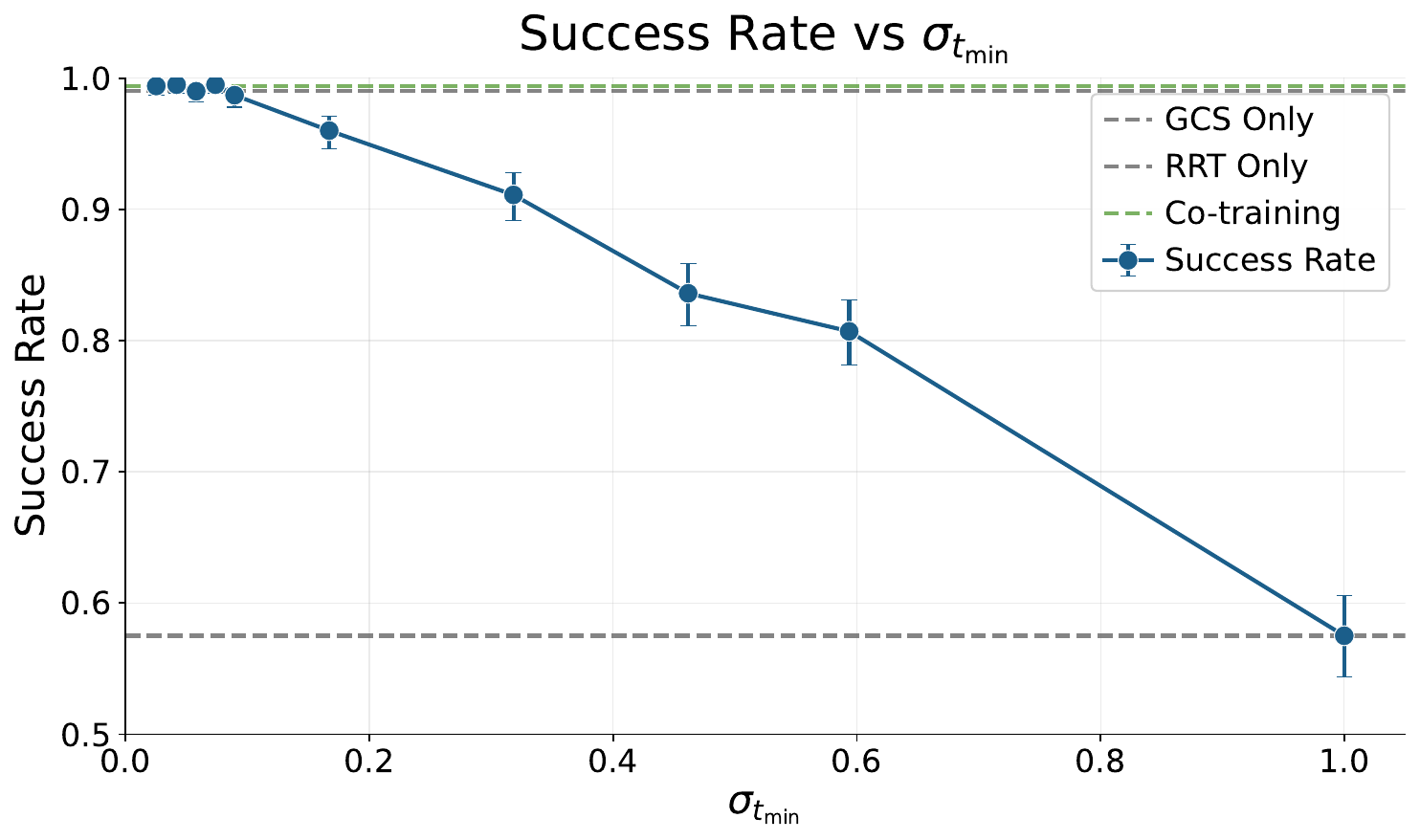}
    \caption{\textbf{Maze success rate vs $\sigma_{t_{\min}}.$} For small $\sigma_{t_{\min}}$, the Ambient Policies are smooth and achieve high success rate (see Figure~\ref{fig:maze_pareto}). As $\sigma_{t_{\min}}$ increases, the RRT data is used less during training, which reduces success rate.}
    \label{fig:maze_sigma_min_sweep}
\end{figure}

\subsection{Noisy Trajectories: Neural Motion Planning}
\label{sec:appendix:nmp}

This appendix provides the full experimental details for the neural motion planning experiment in Section~\ref{sec:controlled_experiments:nmp}, along with best-of-100 results and a $\sigma_{t_{\min}}$ sweep. The main results (Table~\ref{tab:nmp_results} and Figure~\ref{fig:nmp_pareto}) are presented in Section~\ref{sec:controlled_experiments:nmp}. Table~\ref{tab:maze_vs_nmp} compares the 2D maze and 7-DoF NMP setups side by side.

\begin{table}[!t]
    \centering
    \caption{Comparison between the 2D maze and 7-DoF neural motion planning experiments.}
    \label{tab:maze_vs_nmp}
    \setlength{\tabcolsep}{4pt}
    \begin{tabular}{l cc}
        \toprule
        & \textbf{2D Maze} & \textbf{7-DoF NMP} \\
        \midrule
        \rowcolor{rowgray}
        \textbf{$\mathcal{D}_p$ Source}     & GCS \cite{marcucci2022motionplanningobstaclesconvex} & Trajectory Optimizer \cite{manipulation} \\
        \textbf{$\mathcal{D}_q$ Source}     & RRT \cite{LaValle1998RapidlyexploringRT} & BiRRT \cite{kuffner2000rrt} \\
        \rowcolor{rowgray}
        \textbf{$|\mathcal{D}_p|$}          & 50 & 100,000 \\
        \textbf{$|\mathcal{D}_q|$}          & 5,000 & 1,000,000 \\
        \rowcolor{rowgray}
        \textbf{Output Type}                & Action Chunk & Full Plan \\
        \textbf{Rollouts}                   & Closed-Loop & Open-Loop \\
        \rowcolor{rowgray}
        \textbf{Penetration Threshold}      & 0cm & 2cm \\
        \textbf{Output Space}               & 2D $(x, y)$ & 7D (Config.\ Space) \\
        \rowcolor{rowgray}
        \textbf{Start \& End Goals} & Uniformly Random & \makecell[c]{Around points of interest \\ (See Appendix~\ref{sec:appendix:nmp})} \\
        \bottomrule
    \end{tabular}
\end{table}

The environment in Figure~\ref{fig:teaser}b was generated using a procedural method \cite{izatt2022capturing} and the assets are from~\citet{pfaff2025steerablescenegenerationpost}. Start and goal configurations are determined by sampling a random end-effector target point $p\in\SE(3)$ and then solving the optimization inverse kinematics problem
\begin{equation}
    \begin{array}{rl}
        \operatorname{find} & q\in\R^7 \\
        \mathrm{s.t.} & \operatorname{FK}_{\mathrm{pos}}(q)=p, \\
        & \operatorname{SDF}(q) \ge 0.01,
    \end{array}
    \label{eq:ik}
\end{equation}
where $\operatorname{FK}_{\mathrm{pos}}$ is the position forward kinematics, returning the position of the gripper as a function of the joint angles, and $\operatorname{SDF}$ returns the clearance (in meters) between the robot and any obstacles.
The constraint $\operatorname{SDF}(q)\ge 0.01$ is implemented via Drake's \texttt{MinimumDistanceLowerBoundConstraint}.
Targets are sampled 1) within shelves, 2) in bins, 3) near objects, and 4) uniformly in the free space. We sample from all 4 classes of target configurations with equal probability.
\eqref{eq:ik} is solved using SNOPT~\cite{gill2005snopt}.

Given a start and goal configuration $q_0,q_1\in\R^7$, we use BiRRT~\cite{kuffner2000rrtconnect} to produce the suboptimal data.
To produce the high-quality data, we first run randomized shortcutting~\cite{geraerts2007creating} to slightly simplify the path.
We then solve the trajectory optimization problem
\begin{equation}
    \begin{array}{rl}
        \min_{q(s)} & \int_0^1 ||\dot q(s)||^2\,\mathrm{d}s \\
        \mathrm{s.t.} & q(0)=q_0,q(1)=q_1, \\
        & \operatorname{SDF}(q(s))\ge 0,\;\forall s\in[0,1]
    \end{array}
\end{equation}
with Drake's \texttt{KinematicTrajectoryOptimization}, using its standard B-spline discretization \cite{tedrake2019drake}.
We use the shortcut trajectory as an initial guess and solve with IPOPT~\cite{wachter2006ipopt}, with solver options tuned aggressively to maintain feasibility.

The collision-free constraint $\operatorname{SDF}(q(s))\ge 0,\;\forall s\in[0,1]$ is applied at 200 points evenly spaced along the trajectory. Collision checking with finer granularity is performed after trajectory optimization to filter trajectories with collisions. In principle, the discarded trajectories could be added to $\mathcal D_q$ and used at sufficiently high diffusion times during training.

Each trajectory in $\mathcal D_p$ and $\mathcal D_q$ is subsampled to 100 waypoints, equally spaced in time. This subsampling procedure acts as a low-pass filter which reduces some of the non-smoothness in the BiRRT data.

\textbf{Evaluation:} Instead of rolling out the model closed-loop (as in the maze experiments), we train the planner to predict the entire plan at once. Start and end configurations are sampled from the 4 classes described above. Each trial is successful if the planner predicts a collision-free trajectory with up to 2cm of penetration. We allow 2cm of padding because the training trajectories were generated with 0cm of padding and frequently pass very close to obstacles. Both the first and last waypoints of the predicted plan must be within a 0.2rad ball of the start and end configurations respectively.

\begin{figure}[!t]
    \centering
    \begin{minipage}[c]{0.46\textwidth}
        \centering
        \resizebox{\linewidth}{!}{%
        \begin{tabular}{l|cc}
            \toprule
            Training Algo. & Success Rate $\uparrow$ & \makecell{Smoothness $\downarrow$\\(Eq.~\eqref{eq:smoothness})} \\
            \midrule
            Data Filtering & $73.9^{+2.7}_{-2.8}$ \% & $\mathbf{3.7}$ \\
            \makecell[l]{Co-train\\($\alpha^*=0.091$)} & $91.3^{+1.7}_{-1.9}$ \% & $32.6$ \\
            \makecell[l]{Ambient\\($\sigma_{t_{\min}^*}=0.025$)} & $\mathbf{91.9^{+1.6}_{-1.9}}$ \% & $20.2$ \\
            \bottomrule
        \end{tabular}}
        \captionof{table}{\textbf{Neural Motion Planning Best-of-100 Results (1000 trials).} Ambient retains the highest success rate while achieving lower squared acceleration than the co-trained baseline.}
        \label{tab:nmp_best_of_100}
    \end{minipage}
    \hfill
    \begin{minipage}[c]{0.52\textwidth}
        \centering
        \includegraphics[width=\linewidth]{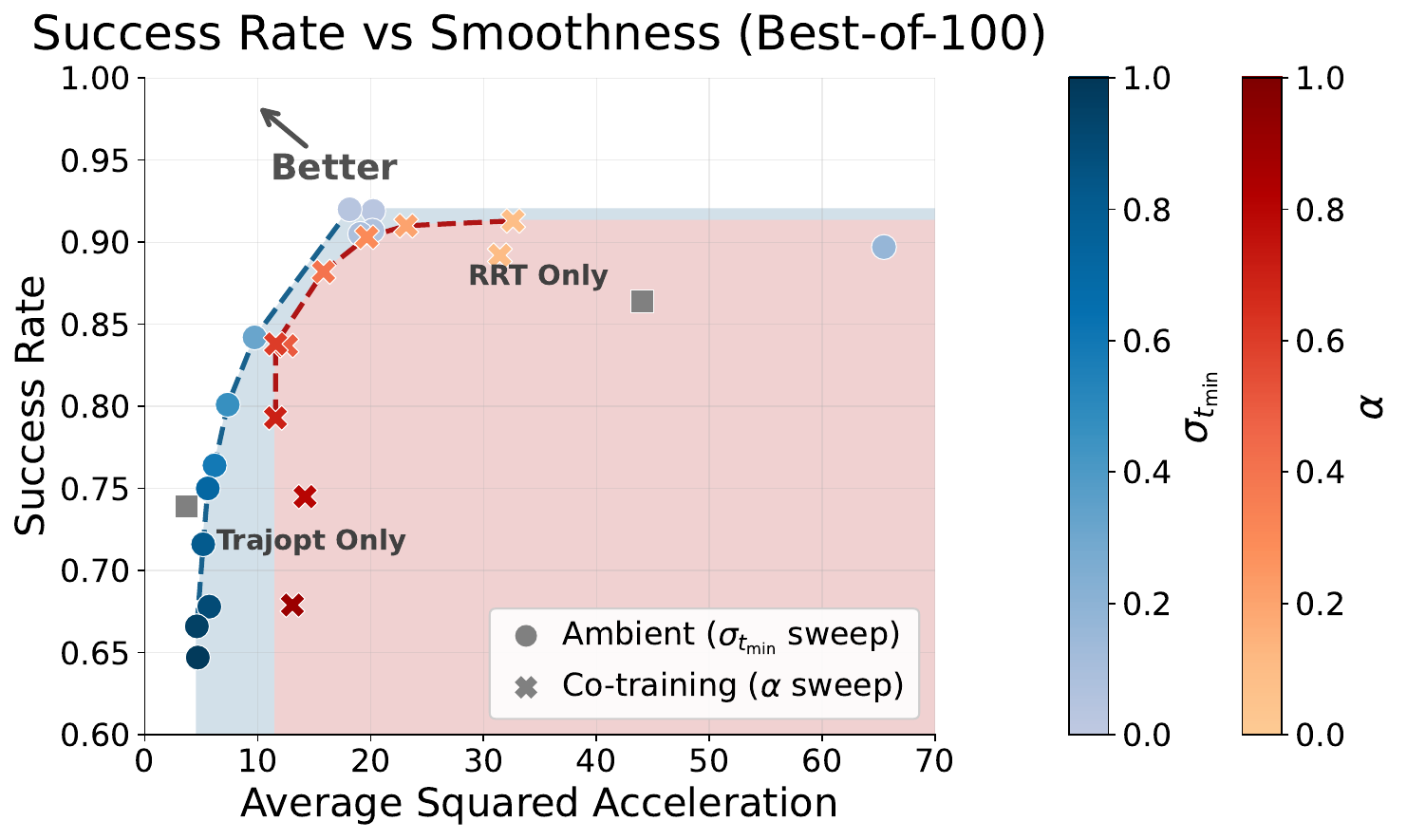}
        \captionof{figure}{\textbf{Figure~\ref{fig:nmp_pareto} repeated with best-of-100 evaluation.} The overall trends are the same, but the success rates are higher and the gap between the two Pareto frontiers is smaller.}
        \label{fig:nmp_pareto_best_of_100}
    \end{minipage}
\end{figure}

\begin{figure}[!t]
    \centering
    \begin{subfigure}[t]{0.49\linewidth}
        \centering
        \includegraphics[width=\linewidth]{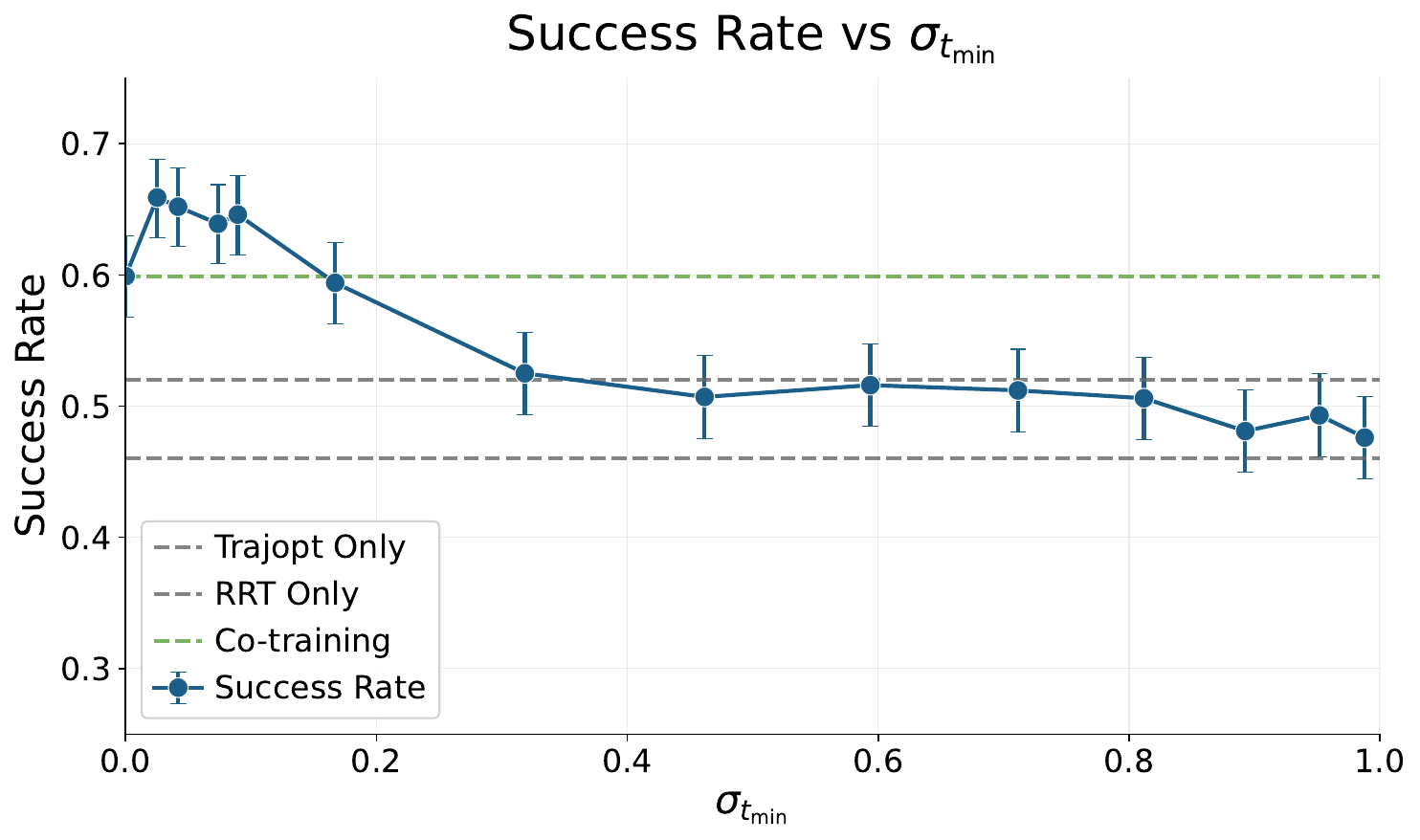}
        \caption{Success rate vs $\sigma_{t_{\min}}$.}
        \label{fig:nmp_success_vs_sigma_min}
    \end{subfigure}
    \hfill
    \begin{subfigure}[t]{0.49\linewidth}
        \centering
        \includegraphics[width=\linewidth]{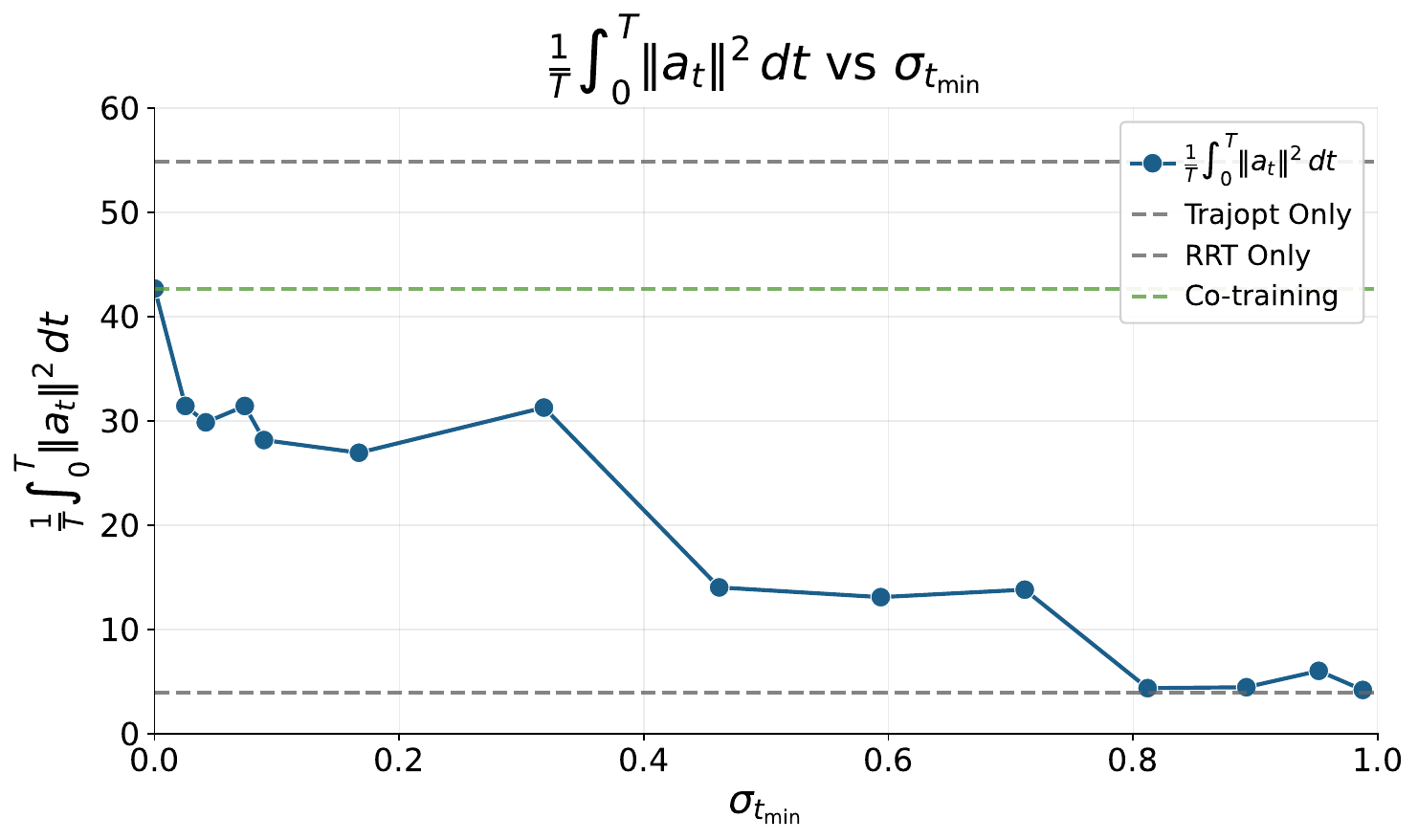}
        \caption{Smoothness metric vs $\sigma_{t_{\min}}$.}
        \label{fig:nmp_isa_vs_sigma_min}
    \end{subfigure}
    \caption{\textbf{Neural motion planning $\sigma_{t_{\min}}$ sweep.} \textbf{(\subref{fig:nmp_success_vs_sigma_min})} For small $\sigma_{t_{\min}}$, the policies are smoother, which results in fewer accidental collisions and increases success rate. For large $\sigma_{t_{\min}}$, the RRT data becomes under-utilized, which decreases the success rate. \textbf{(\subref{fig:nmp_isa_vs_sigma_min})} Increasing $\sigma_{t_{\min}}$ improves smoothness.}
    \label{fig:nmp_sigma_min_combined}
\end{figure}

Previous works in neural motion planning adopt a best-of-N approach when evaluating their planners \cite{dalal2024neuralmpgeneralistneural}. We present the results using $N=1$ (Table~\ref{tab:nmp_results} and Figure~\ref{fig:nmp_pareto}) and $N=100$ as in \cite{dalal2023imitatingtaskmotionplanning} (Table~\ref{tab:nmp_best_of_100} and Figure~\ref{fig:nmp_pareto_best_of_100}). Concretely, for each trial, we generate 100 plans and pick the one with the least collisions. The general trends for both choices of $N$ are the same, but the success rates are significantly higher. The Ambient policy continues to be smoother than the co-trained policy in both cases.

\subsection{Sim-and-Real Co-training: Planar Pushing}
\label{sec:appendix:sim_and_real}
\textbf{Data Generation}: We use the datasets from \citet{wei2025empirical}. $\mathcal D_p$ contains human teleoperated demonstrations in their ``target'' simulation environment and $\mathcal D_q$ contains automatically generated demonstrations~\cite{graesdal2024tightconvexrelaxationscontactrich} from their ``source'' simulation environment.

Table~\ref{tab:sim2target_gaps} shows the distribution shift between the source and target simulation datasets. For more details and visualizations of the environments, see~\citet{wei2025empirical}.

\begin{table}[!htbp]
\centering
\caption{\textbf{Sim-to-target gap along three axes}. The sim-to-target gap in the environments from \cite{wei2025empirical} was designed to mimic the sim-to-real gap. See \cite{wei2025empirical} for visualizations.}
\label{tab:sim2target_gaps}
\begin{tabular}{l cc}
    \hline
    \textbf{Gap} & \textbf{Simulation} & \textbf{Target Sim} \\
    \hline
    Visual  & \makecell{White spectrum light\\Different object colors\\Different camera pose\\No shadows} & \makecell{Warm spectrum light\\Different object colors\\Different camera pose\\Shadows} \\
    \hline
    Physics & Quasistatic & Drake physics \cite{tedrake2019drake} \\
    \hline
    Action  & Motion planning & Human teleop \\
    \hline
\end{tabular}
\end{table}

\textbf{Evaluation:} The slider is randomly initialized within the robot's workspace. A trial is successful if the robot returns to its default position and the slider’s pose is within 1.5cm and $3.5^o$ of
the goal. The time limit is 75s. The evaluation criteria exactly match \citet{wei2025empirical} to ensure a fair comparison.

\textbf{Ablations:} We use these experiments as a test-bed for evaluating key design decisions in Ambient Diffusion Policy. Figure~\ref{fig:planar_pushing_ablations} consolidates eight ablations; the results suggest the following:

\begin{itemize}[leftmargin=2em, topsep=1pt, itemsep=1pt, parsep=0pt]
    \item \textbf{Per-datapoint $t_{\min}$ (Figure~\ref{fig:planar_pushing_ablations}\subref{fig:planar_pushing_sigma_min_sweep}):} Labeling $\sigma_{t_{\min}}$ per dataset improves over co-training, and labeling $\sigma_{t_{\min}}$ per datapoint performs the best. The distribution of $t_{\min}$ assigned by the best classifier is shown in Figure~\ref{fig:sim_sigma_min_dist}.

    \item \textbf{$\sigma_{t_{\max}}$ sweep (Figure~\ref{fig:planar_pushing_ablations}\subref{fig:perf_vs_sigma_max}):} Using locality ($\sigma_{t_{\max}}>0$) does not appear to improve or hurt performance in sim-and-real co-training until $\sigma_{t_{\max}}$ is larger. Sim and real data do not share the same local action structure (e.g. due to differing contact dynamics); thus, policies should not train on sim data at low diffusion times.

    \item \textbf{Classifier robustness (Figures~\ref{fig:planar_pushing_ablations}\subref{fig:perf_vs_classifier} and~\ref{fig:planar_pushing_ablations}\subref{fig:perf_vs_tau}):} Policy performance is insensitive to the classifier checkpoint and threshold $\tau$; the classifier merely needs to be sufficiently trained.

    \item \textbf{Ambient loss buffer (Figure~\ref{fig:planar_pushing_ablations}\subref{fig:perf_vs_buffer}):} The Ambient loss is highly sensitive to the buffer size from Appendix~\ref{sec:appendix:ambient_loss}. In most experiments (Table~\ref{tab:experiments_overview}), we avoid this sensitivity by training with the regular denoising objective~\cite{ho2020denoisingdiffusionprobabilisticmodels}.

    \item \textbf{$t_{\min}$ granularity (Figure~\ref{fig:planar_pushing_ablations}\subref{fig:planar_pushing_partitions}):} Assigning $t_{\min}$ at intermediate granularities (one $t_{\min}$ per partition of $\mathcal D_q$) appears to be a promising direction. We split $\mathcal{D}_q$ into $N$ partitions and annotate one $\sigma_{t_{\min}}$ per partition. $N=1$ corresponds to assigning a single $t_{\min}$ to all of $\mathcal{D}_q$, and $N\rightarrow\infty$ corresponds to annotating a $t_{\min}$ per sample. Given a partition $P_i$, we annotate
    \[ t_{\min} = \inf \{t : \mathbb E_{(O, A_0)\sim P_i, A_t|A_0}[c_{\phi^*}(A_t, t)] > 0.5 - \tau \}. \]
    We compare three partitioning strategies: (1) randomly partition the action chunks (blue); (2) randomly partition the trajectories -- each partition contains the action chunks of its trajectories (red); (3) find $t_{\min}$ for each action chunk using the classifier and sort the chunks according to these values. Then form the partitions in increasing order (orange). For this task, Strategy (1) with $N=10{,}000$ performed best.

    \item \textbf{$|\mathcal D_q|$ scaling (Figures~\ref{fig:planar_pushing_ablations}\subref{fig:perf_vs_scale_10} and~\ref{fig:planar_pushing_ablations}\subref{fig:perf_vs_scale_50}):} As $\mathcal D_q$ scales, Ambient consistently outperforms co-training via re-weighting at both $|\mathcal D_p|=10$ and $|\mathcal D_p|=50$ (except at $|\mathcal D_q|=4000$ in the latter). The co-training baseline results (green) are from~\cite{wei2025empirical}.
\end{itemize}

\begin{figure}[!htbp]
    \centering
    \includegraphics[width=0.7\linewidth]{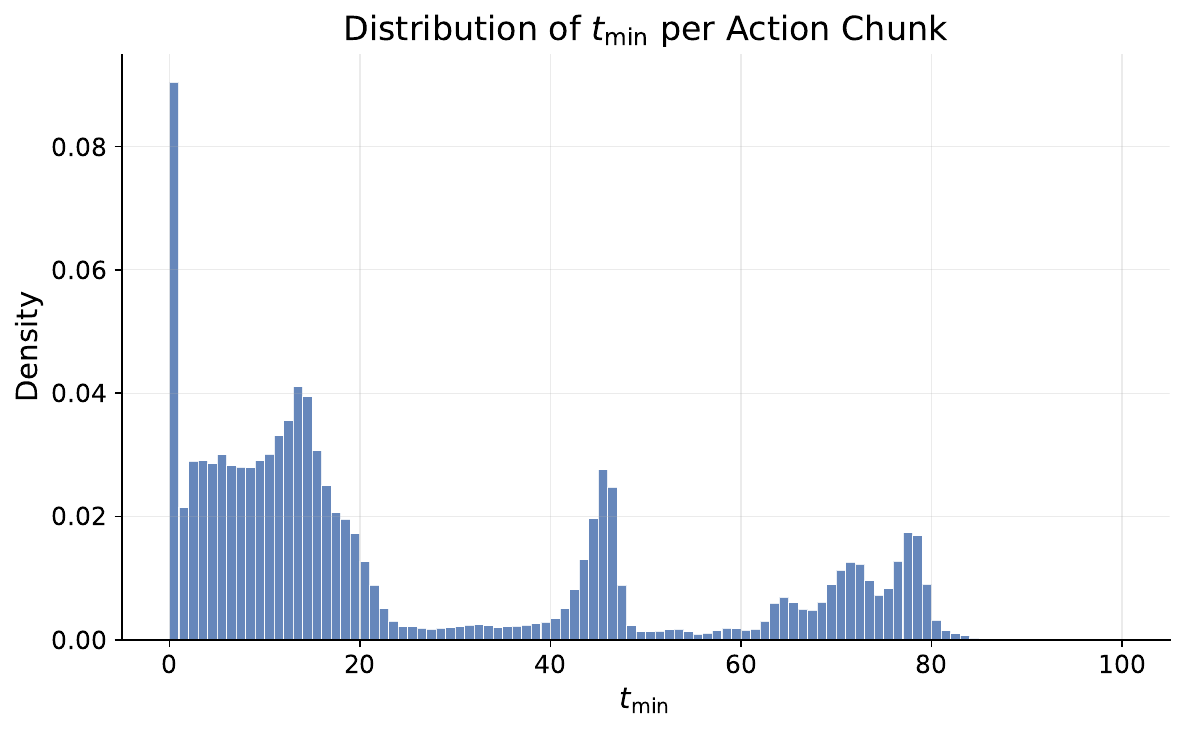}
    \caption{\textbf{Distribution of $t_{\min}$ labels.} This histogram shows the distribution of $t_{\min} \in [0, 100]$ assigned to each action chunk by the classifier for the sim-and-real experiments.}
    \label{fig:sim_sigma_min_dist}
\end{figure}

\begin{figure}[!htbp]
    \centering
    \begin{subfigure}[t]{0.49\linewidth}
        \centering
        \includegraphics[width=\linewidth]{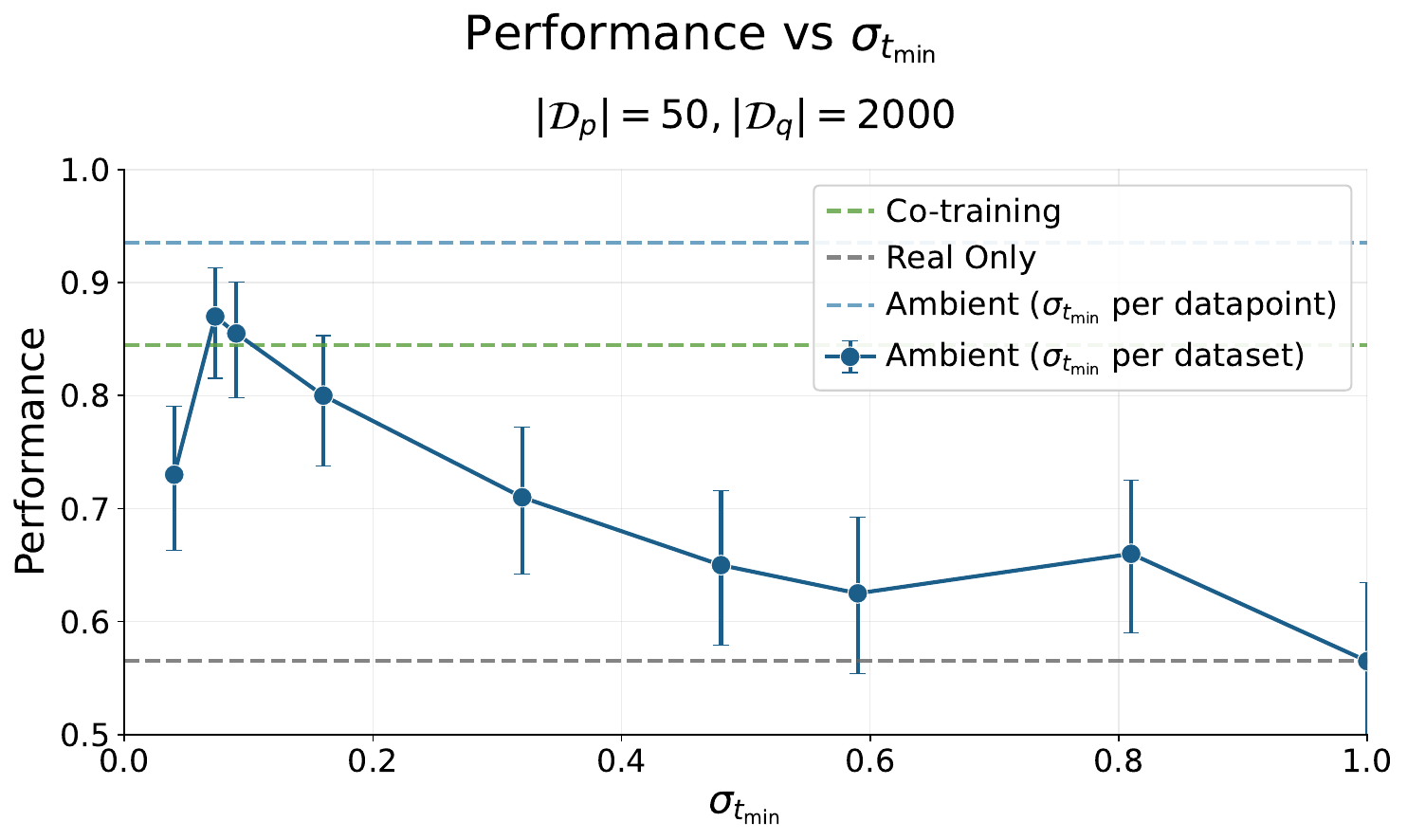}
        \caption{Performance vs $\sigma_{t_{\min}}$ (per dataset).}
        \label{fig:planar_pushing_sigma_min_sweep}
    \end{subfigure}
    \hfill
    \begin{subfigure}[t]{0.49\linewidth}
        \centering
        \includegraphics[width=\linewidth]{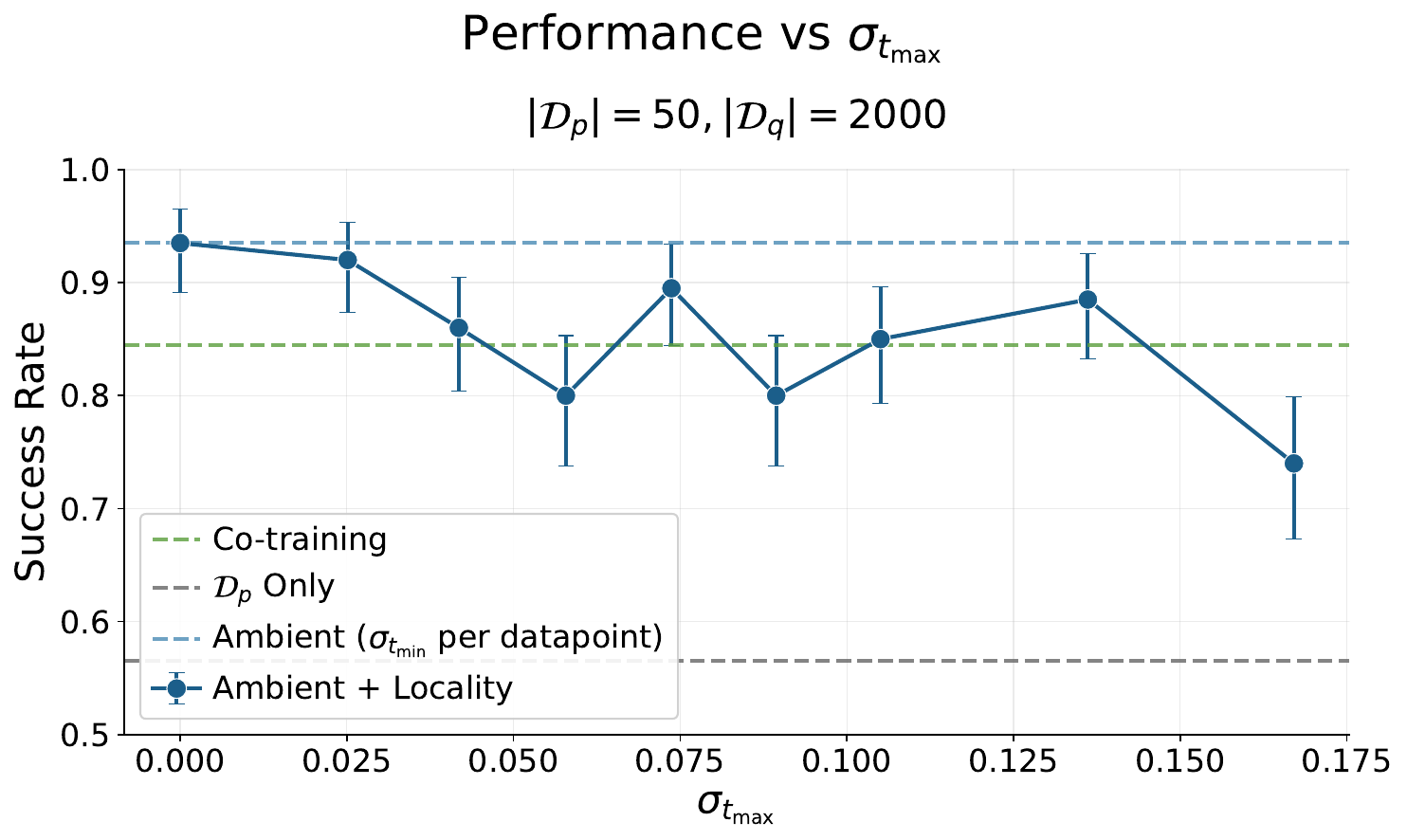}
        \caption{Performance vs $\sigma_{t_{\max}}$.}
        \label{fig:perf_vs_sigma_max}
    \end{subfigure}

    \vspace{0.5em}

    \begin{subfigure}[t]{0.49\linewidth}
        \centering
        \includegraphics[width=\linewidth]{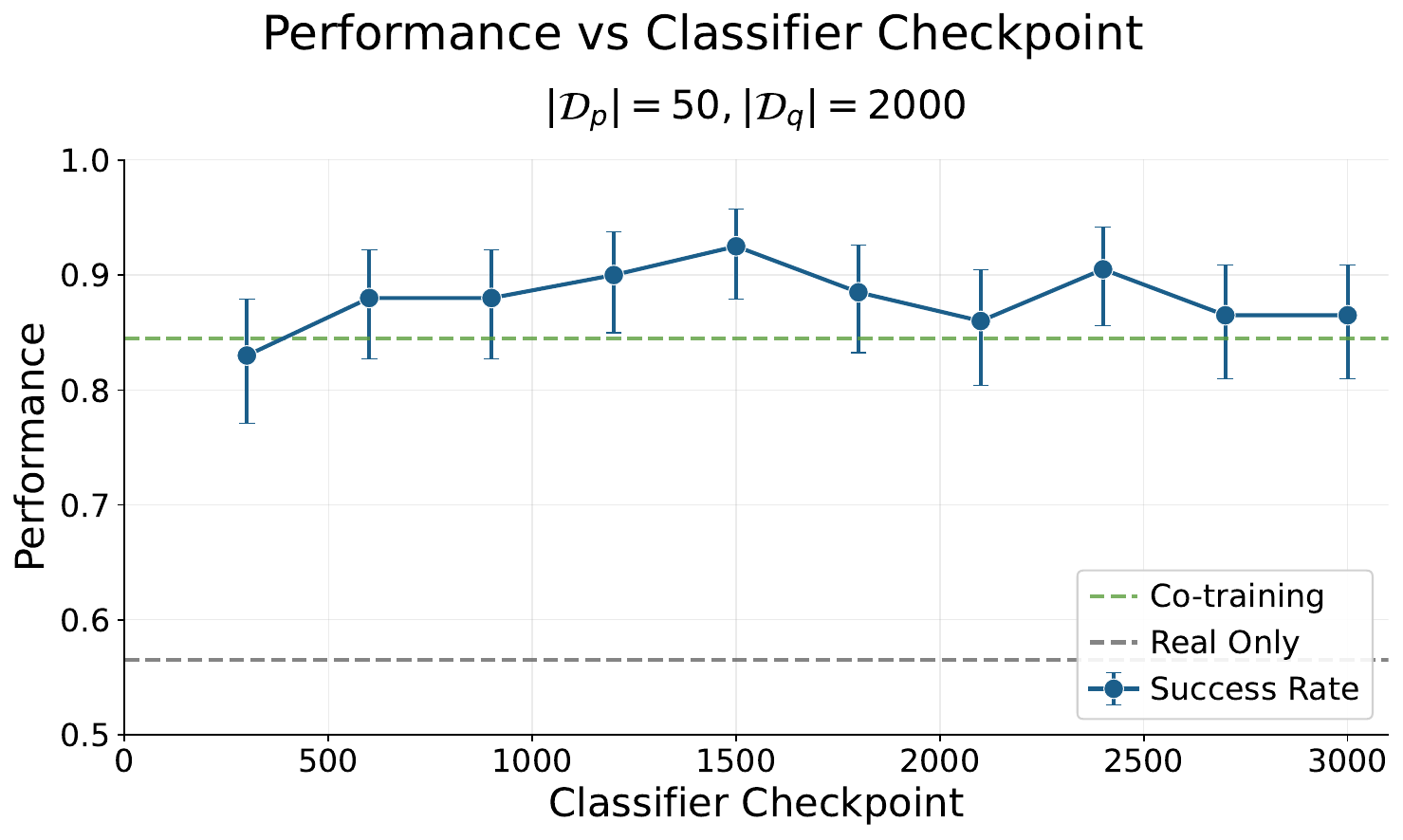}
        \caption{Sensitivity to classifier training duration.}
        \label{fig:perf_vs_classifier}
    \end{subfigure}
    \hfill
    \begin{subfigure}[t]{0.49\linewidth}
        \centering
        \includegraphics[width=\linewidth]{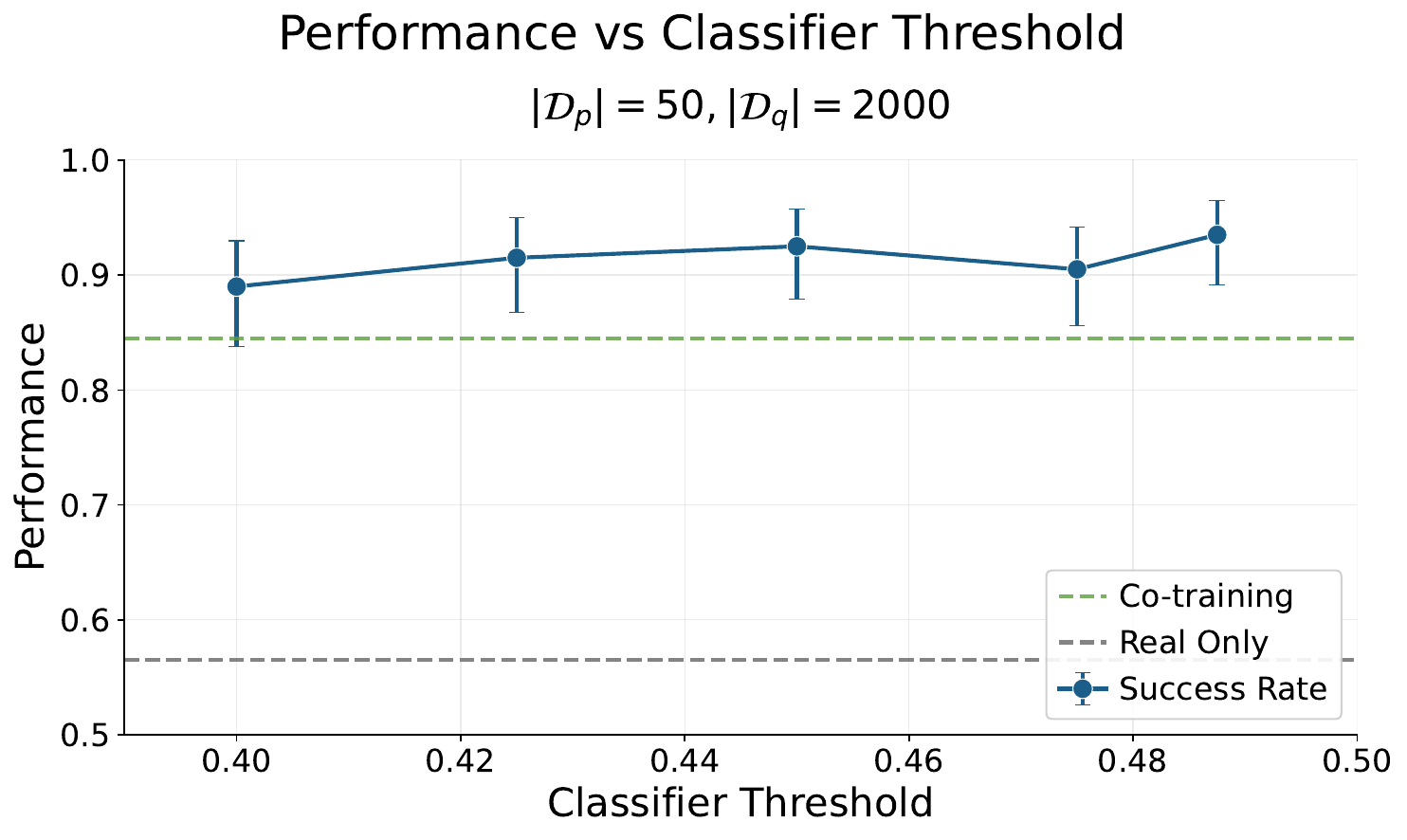}
        \caption{Sensitivity to classifier threshold $\tau$.}
        \label{fig:perf_vs_tau}
    \end{subfigure}

    \vspace{0.5em}

    \begin{subfigure}[t]{0.49\linewidth}
        \centering
        \includegraphics[width=\linewidth]{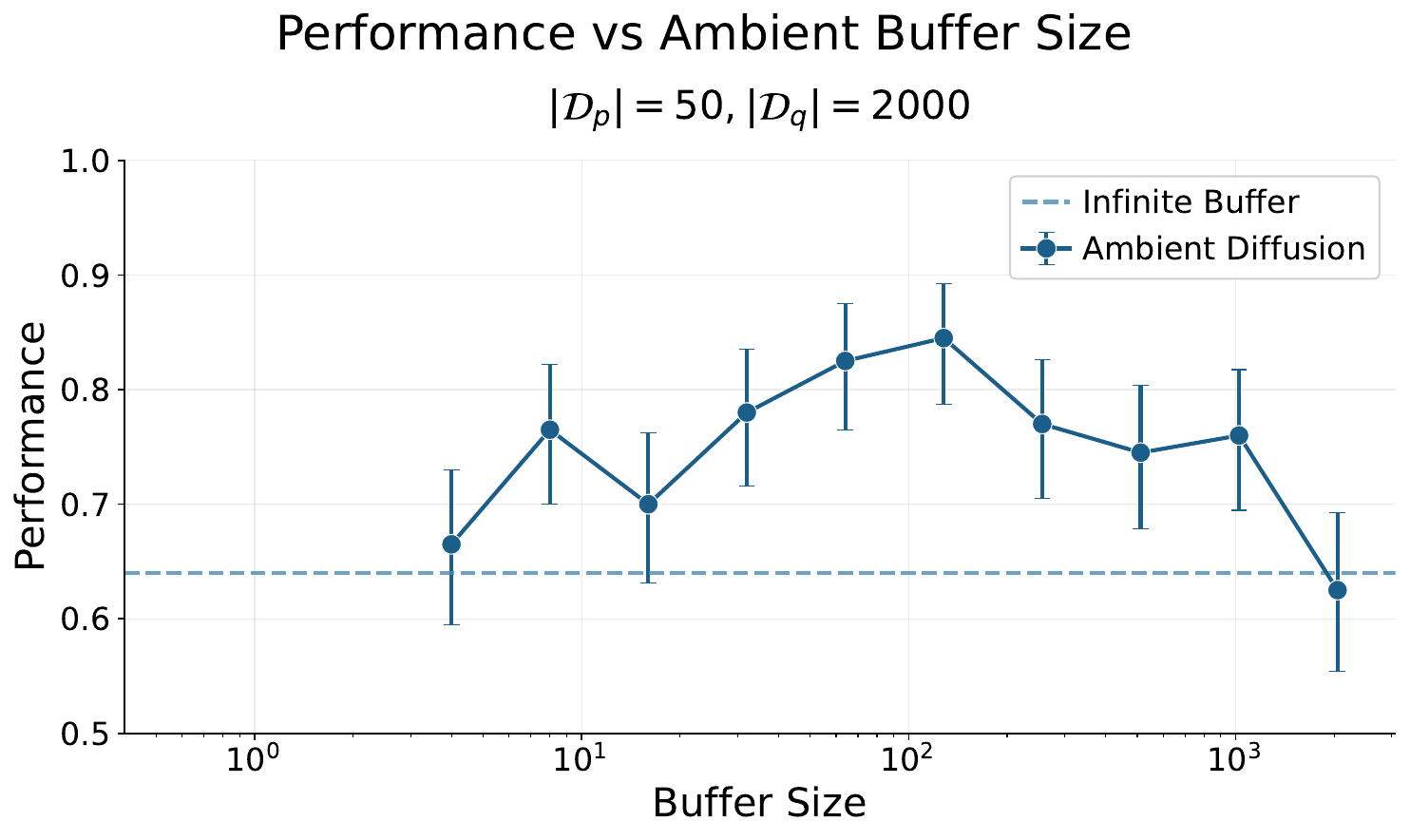}
        \caption{Sensitivity to the Ambient loss buffer (Appendix~\ref{sec:appendix:ambient_loss}).}
        \label{fig:perf_vs_buffer}
    \end{subfigure}
    \hfill
    \begin{subfigure}[t]{0.49\linewidth}
        \centering
        \includegraphics[width=\linewidth]{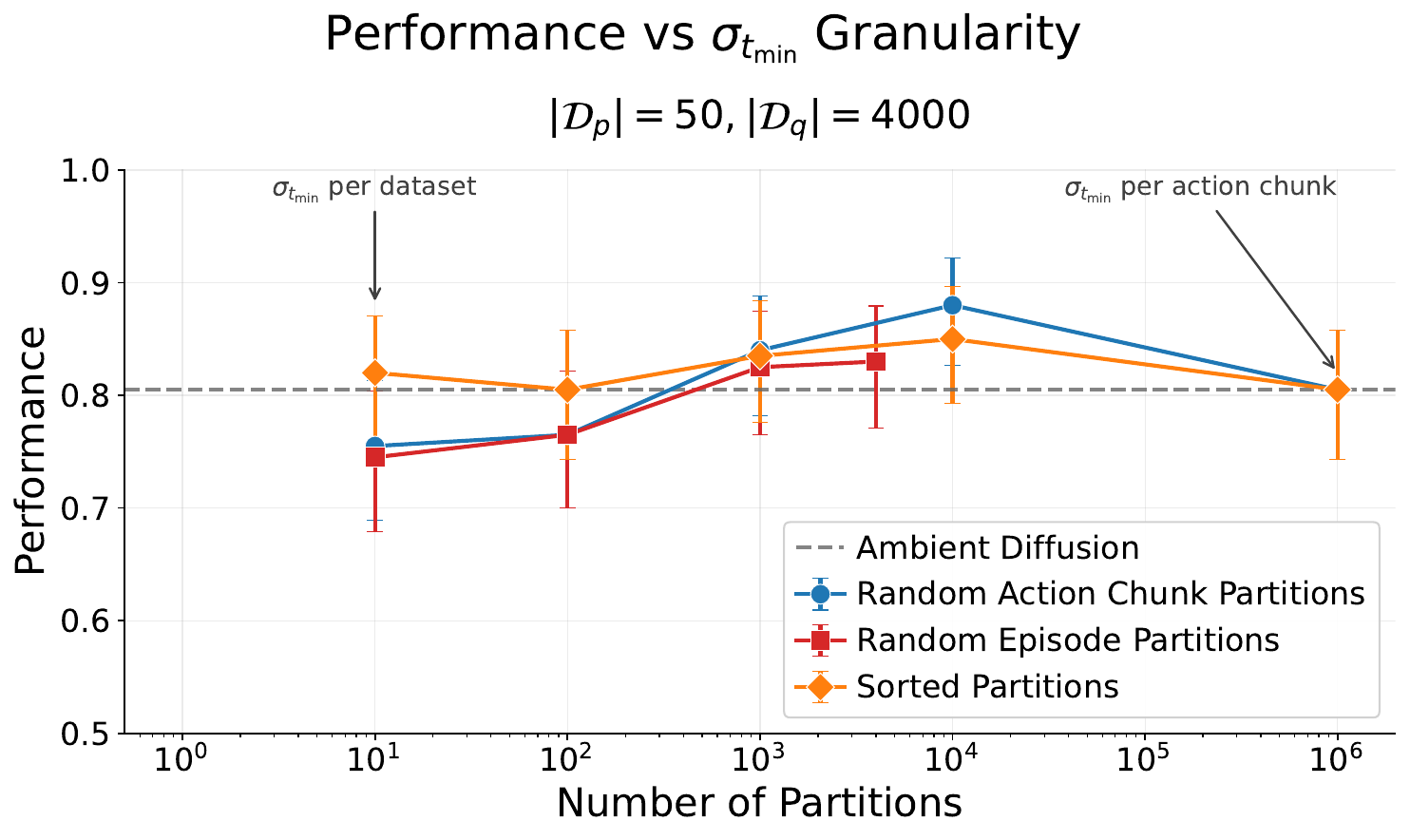}
        \caption{Performance vs granularity of $t_{\min}$ labels.}
        \label{fig:planar_pushing_partitions}
    \end{subfigure}

    \vspace{0.5em}

    \begin{subfigure}[t]{0.49\linewidth}
        \centering
        \includegraphics[width=\linewidth]{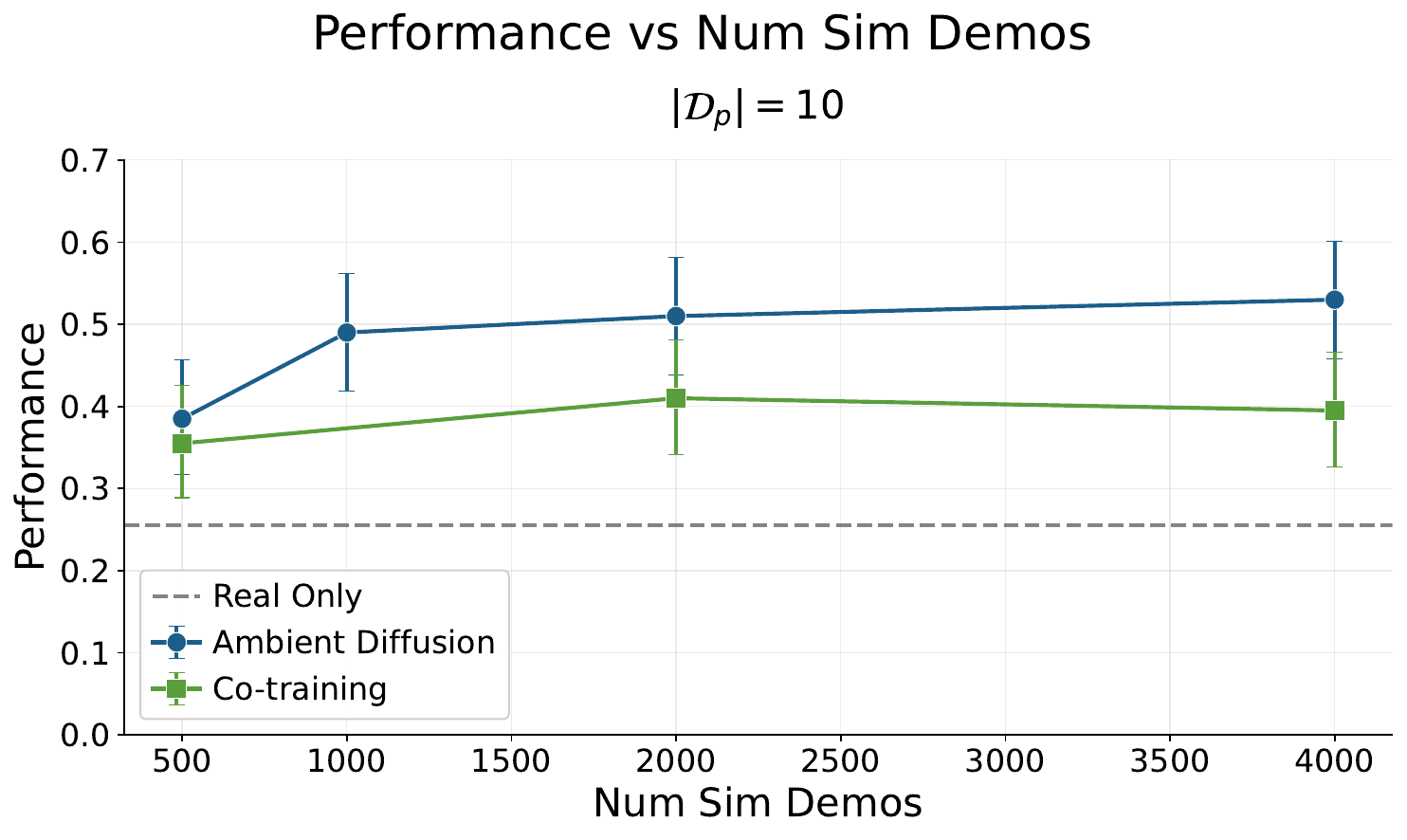}
        \caption{Performance vs $|\mathcal D_q|$ scale ($|\mathcal D_p|=10$).}
        \label{fig:perf_vs_scale_10}
    \end{subfigure}
    \hfill
    \begin{subfigure}[t]{0.49\linewidth}
        \centering
        \includegraphics[width=\linewidth]{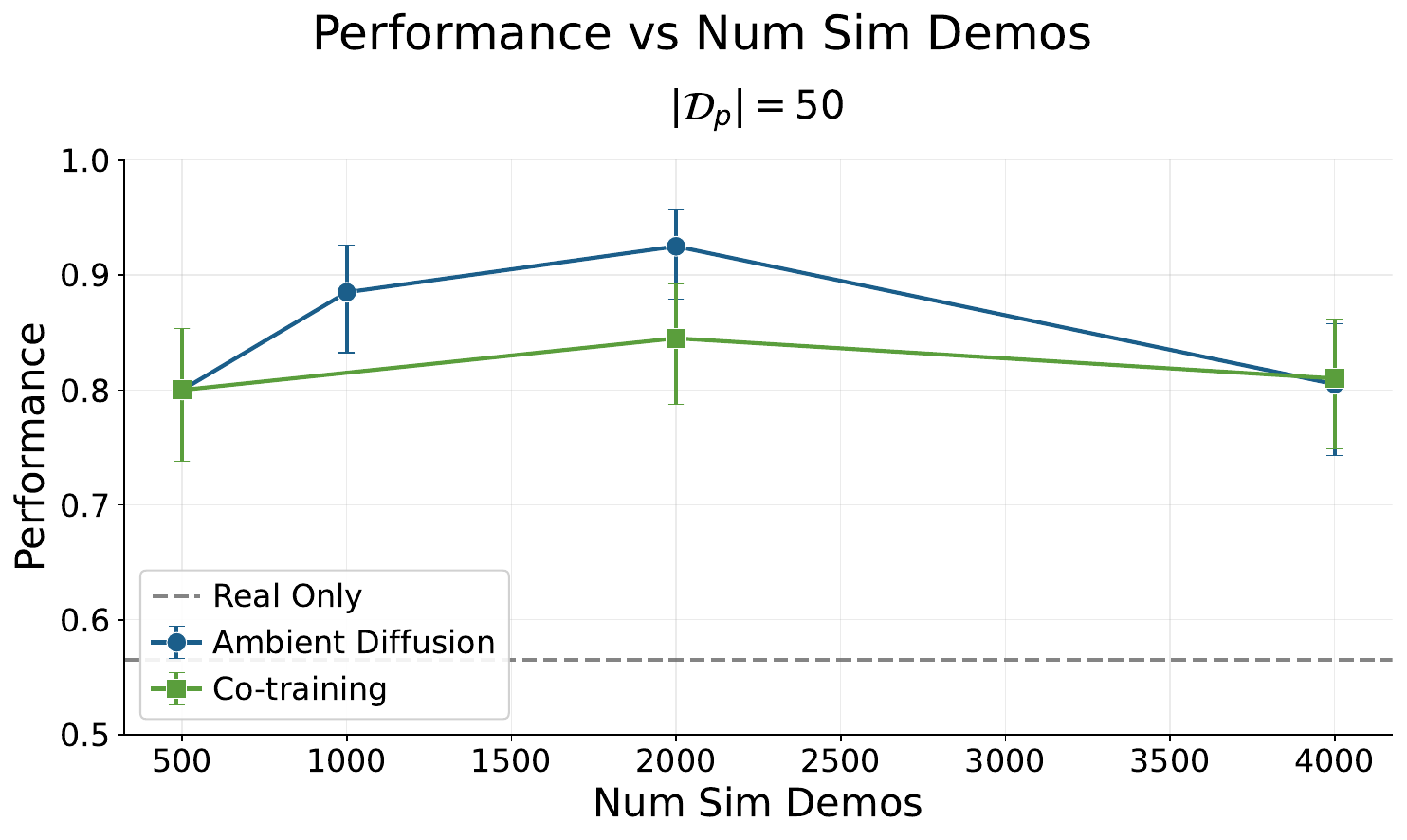}
        \caption{Performance vs $|\mathcal D_q|$ scale ($|\mathcal D_p|=50$).}
        \label{fig:perf_vs_scale_50}
    \end{subfigure}

    \caption{\textbf{Sim-and-real co-training ablations.} Eight ablations covering the key design decisions of Ambient Diffusion Policy. See Appendix~\ref{sec:appendix:sim_and_real} for the per-panel analysis.}
    \label{fig:planar_pushing_ablations}
\end{figure}

\subsection{Task Mismatch: Block Sorting}
\label{sec:appendix:block_sorting}
\textbf{Data Collection:} Both datasets were collected using VR teleoperation.

\textbf{Evaluation:} On each trial, 1-3 red blocks and 1-3 blue blocks are randomly initialized in a pile between the bins. The time limit is 20s per block. This allows trials with more blocks to have a longer time limit. Each policy is evaluated on 200 trials ($\sim$800 blocks).

Figure~\ref{fig:bin_sorting_combined} shows the performance of the Ambient Diffusion Policy on each metric during the $\sigma_{t_{\max}}$ sweep. Figures~\ref{fig:grasping_sigma_max_sweep} and~\ref{fig:time_per_block_sigma_max_sweep} show the average number of grasp attempts and time required per block of the policy's trained with different $\sigma_{t_{\max}}$ values. Both can be viewed as additional \textit{motion} metrics and improve (decrease) with higher $\sigma_{t_{\max}}$.

\textbf{Ablation 1 (Task-Conditioned):} A standard solution to handling the incorrect data is to add a one-hot encoding that indicates the sorting direction. This is analogous to task or quality conditioning \cite{pi07}. Even with task conditioning, the Ambient Diffusion Policy performs the best (Table~\ref{tab:bin_sorting_ablations}a). Note that Ambient Diffusion Policy and task-conditioning are complementary.

\textbf{Ablation 2 ($\mathcal{D}_q$ only at low noise):} Locality implies that the denoisers at $t < t_{\max}$ are insensitive to the global task structure. If so, we should be able to train an effective policy without using any ``correct'' data at low noise. We train a policy (without task-conditioning) that exclusively uses $\mathcal{D}_q$ for $t \in [0, t_{\max})$ and $\mathcal{D}_p$ for $t \in (t_{\max}, T]$. This policy still achieves a logic accuracy of $97.9\%$ and a motion score of $93.8\%$ (Table~\ref{tab:bin_sorting_ablations}b). This reinforces our claim that the backward process factorizes into a planning and a local refinement regime.

\begin{figure}[!htbp]
    \centering
    \begin{subfigure}[t]{0.49\linewidth}
        \centering
        \includegraphics[width=\linewidth]{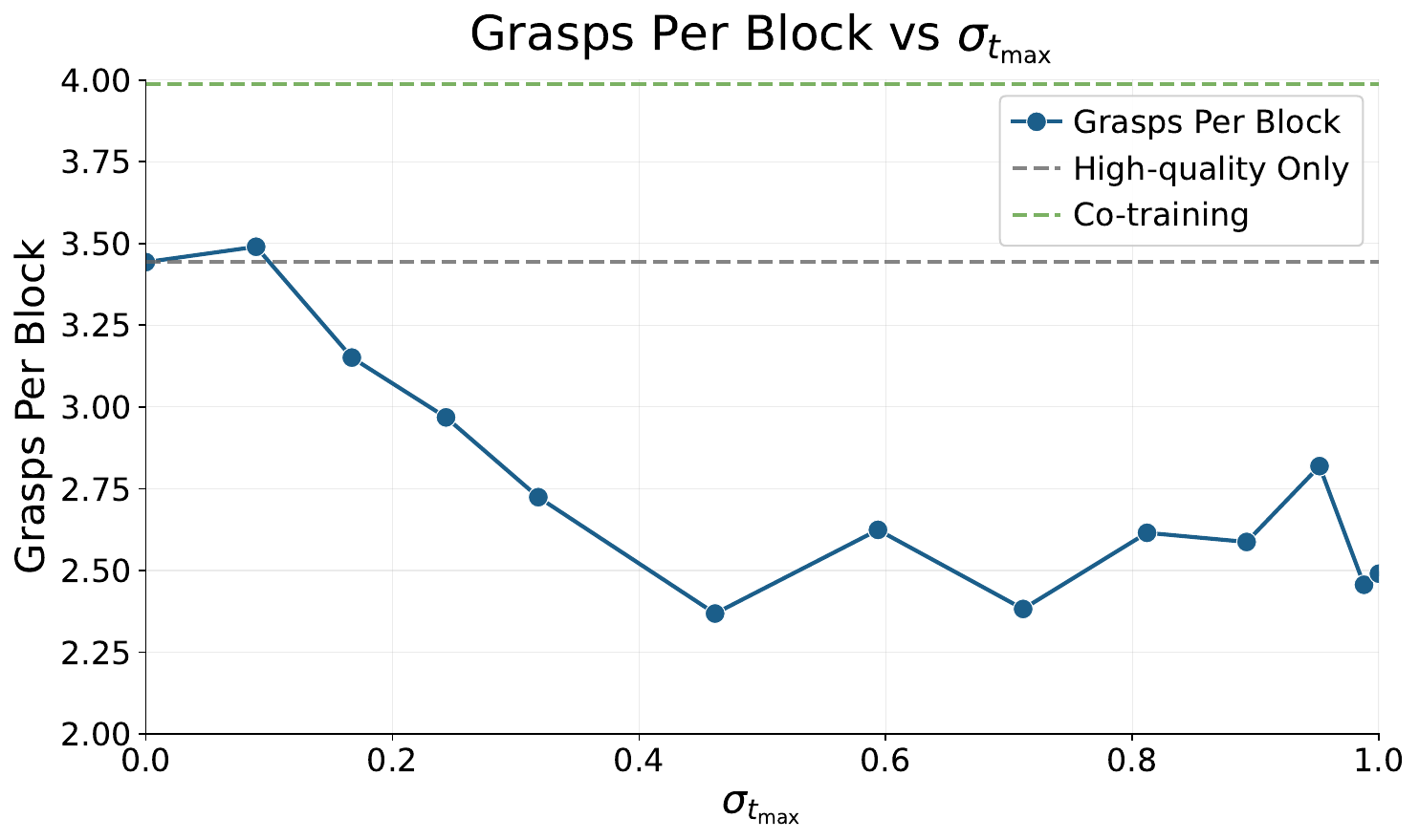}
        \caption{Grasp attempts per block vs $\sigma_{t_{\max}}$.}
        \label{fig:grasping_sigma_max_sweep}
    \end{subfigure}
    \hfill
    \begin{subfigure}[t]{0.49\linewidth}
        \centering
        \includegraphics[width=\linewidth]{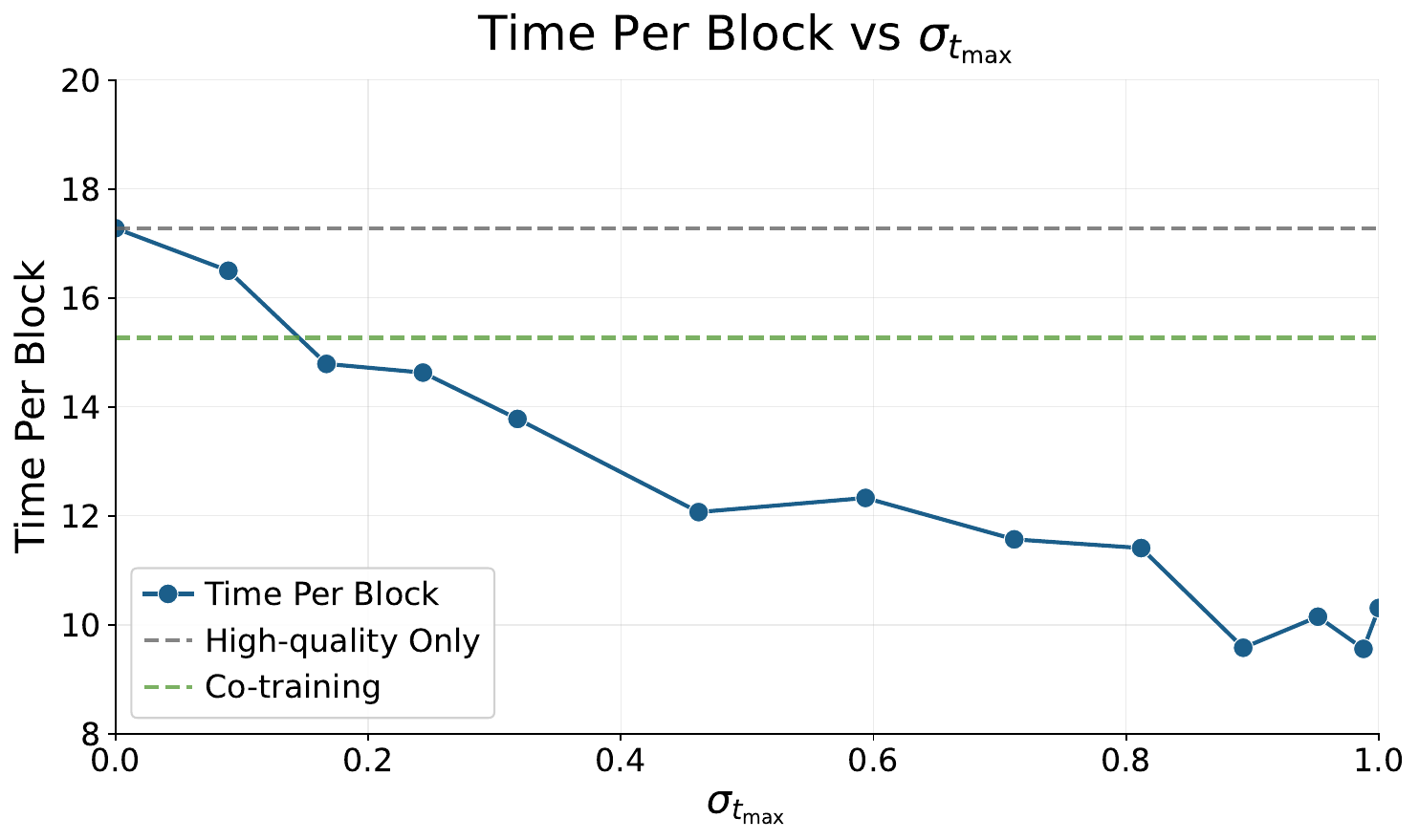}
        \caption{Time per block vs $\sigma_{t_{\max}}$.}
        \label{fig:time_per_block_sigma_max_sweep}
    \end{subfigure}
    \caption{\textbf{Additional motion-level metrics for the block sorting experiment (Section~\ref{sec:controlled_experiments:bin_sorting}).} \textbf{(\subref{fig:grasping_sigma_max_sweep})} Grasp attempts per block improves as $\sigma_{t_{\max}}$ increases but plateaus after $\sigma_{t_{\max}^*}=0.46$. \textbf{(\subref{fig:time_per_block_sigma_max_sweep})} Time per block improves as $\sigma_{t_{\max}}$ increases.}
    \label{fig:bin_sorting_motion_metrics}
\end{figure}

\begin{table}[!htbp]
    \caption{\textbf{Block sorting ablations (200 trials, $\sim$800 blocks).} \textbf{(a)} Ablation 1 (Task-Conditioned): Ambient outperforms co-training. \textbf{(b)} Ablation 2: the Ambient Diffusion Policy performs well despite training on no logically correct samples for $t \in [0,t_{\max}^*)$.}
    \centering
    \begin{tabular*}{\linewidth}{@{\extracolsep{\fill}}l|cc|c@{}}
        \toprule
        & \multicolumn{2}{c|}{\textbf{(a) Ablation 1: Task-Conditioned}} & \textbf{(b) Ablation 2} \\
        Training Algo.        & Co-train         & Locality                          & Locality \\
        (\# Blocks: $\sim$800) & ($\alpha^*=0.5$) & ($\sigma_{t_{\max}^*}=0.89$)      & \makecell[c]{$\mathcal{D}_q$ only below\\$\sigma_{t_{\max}^*}=0.24$} \\
        \midrule
        Logic $\uparrow$        & $\mathbf{98.6^{+0.7}_{-1.1}}$ \% & $98.5^{+0.7}_{-1.1}$ \%          & $97.9^{+0.9}_{-1.3}$ \% \\
        Motion $\uparrow$       & $91.5^{+1.8}_{-2.2}$ \%          & $\mathbf{94.2^{+1.5}_{-1.8}}$ \% & $93.8^{+1.6}_{-1.9}$ \% \\
        \midrule
        Success Rate $\uparrow$ & $90.3^{+2.0}_{-2.3}$ \%          & $\mathbf{92.8^{+1.7}_{-2.0}}$ \% & $91.9^{+1.8}_{-2.1}$ \% \\
        \bottomrule
    \end{tabular*}
    \label{tab:bin_sorting_ablations}
\end{table}

\section{Scaling Experiments: Open X-Embodiment}
\label{sec:appendix:scaling}

\subsection{OXE Experiments}
\label{sec:appendix:oxe}

\textbf{Dataset Curation:} We collected demonstrations for $\mathcal D_p$ with VR teleoperation. For each demonstration, 3-5 objects are placed on the table. These objects are selected among a set of 31 objects in the training distribution. The teleoperator opens the drawer, moves the objects into the drawer, and closes the drawer.

These experiments used two different subsets of the Open X-Embodiment (OXE) for $\mathcal D_q$: Magic Soup++ (MS++) and Custom OXE (COXE). MS++ is a collection of 27 datasets that was taken directly from the OpenVLA paper \cite{kim2024openvla}.

Custom OXE is a collection of 48 datasets. All 27 datasets from MS++ were included in COXE. Afterward, any dataset that met the following criteria was included:
\begin{itemize}
    \item Robot morphology: single arm or mobile manipulator
    \item Action space: EEF position or EEF velocity
    \item Has a known control frequency
    \item Observation space: has proprioception and contains at least one non-depth camera 
    \item Accessible: a few datasets that met the criteria above were not accessible, contained ambiguous features and action-space labels, or bugs. These datasets were excluded.
\end{itemize}

\textbf{Dataset Weights:} Weights for MS++ were taken from OpenVLA \cite{kim2024openvla}. For COXE, we reused weights for the MS++ subset. For the remaining datasets, we assigned weights that were proportional to their size except for the two largest datasets. For those, we halved their relative weight. Following the ``50/50 co-training'' in previous literature \cite{khazatsky2024droid}\cite{nasiriany2024robocasalargescalesimulationeveryday}, we normalize the total weight of $\mathcal D_q$ to be 0.5, and assign $\mathcal D_p$ a weight of 0.5. This is equivalent to assigning $\alpha=0.5$ in the co-training case. In the unweighted case, the weight of each dataset is proportional to its relative size. See Figure~\ref{fig:weighting_visualization} for a visualization.

\textbf{Annotating $t_{\min}$:} For each sub-dataset in COXE (e.g. DROID \cite{khazatsky2024droid}), we train a classifier to discern samples from $\mathcal D_p$ and the sub-dataset. We use this classifier to assign a single $t_{\min}$ value to all action chunks within this sub-dataset. This required us to train 48 classifiers for the 48 sub-datasets in COXE. Although in principle, we could have accomplished this using a single classifier $c_\phi(A_t, t, l)$, where $l\in \{0,1\}^{48}$ is a one-hot encoding indicating the source dataset of $A_t$. Figures~\ref{fig:weighting_visualization} and~\ref{fig:tower_building_weighting_visualization} show the distribution of $t_{\min}$ assigned to the sub-datasets of COXE for both the table cleaning and the tower building task.

\textbf{Annotating $t_{\max}$:} We simply set $t_{\max} = 10$ for the table cleaning and $t_{\max}=5$ for tower building. We did not perform a sweep. These values are unlikely to be optimal, especially since adding locality showed mixed results for table cleaning. Lowering $t_{\max}$ in the tower building experiments appears to help.

\citet{daras2025ambient} present a method to automatically label $t_{\max}$ using a similar classifier-based approach. Their approach assigned $t_{\max}=0$ to nearly all datasets in MS++ and COXE. A full hyperparameter sweep on real hardware would have been prohibitively time-consuming. Hence, we selected values that are small enough to be conservative, yet large enough for a meaningful evaluation of locality.

\begin{figure*}
    \centering
    \includegraphics[width=\linewidth]{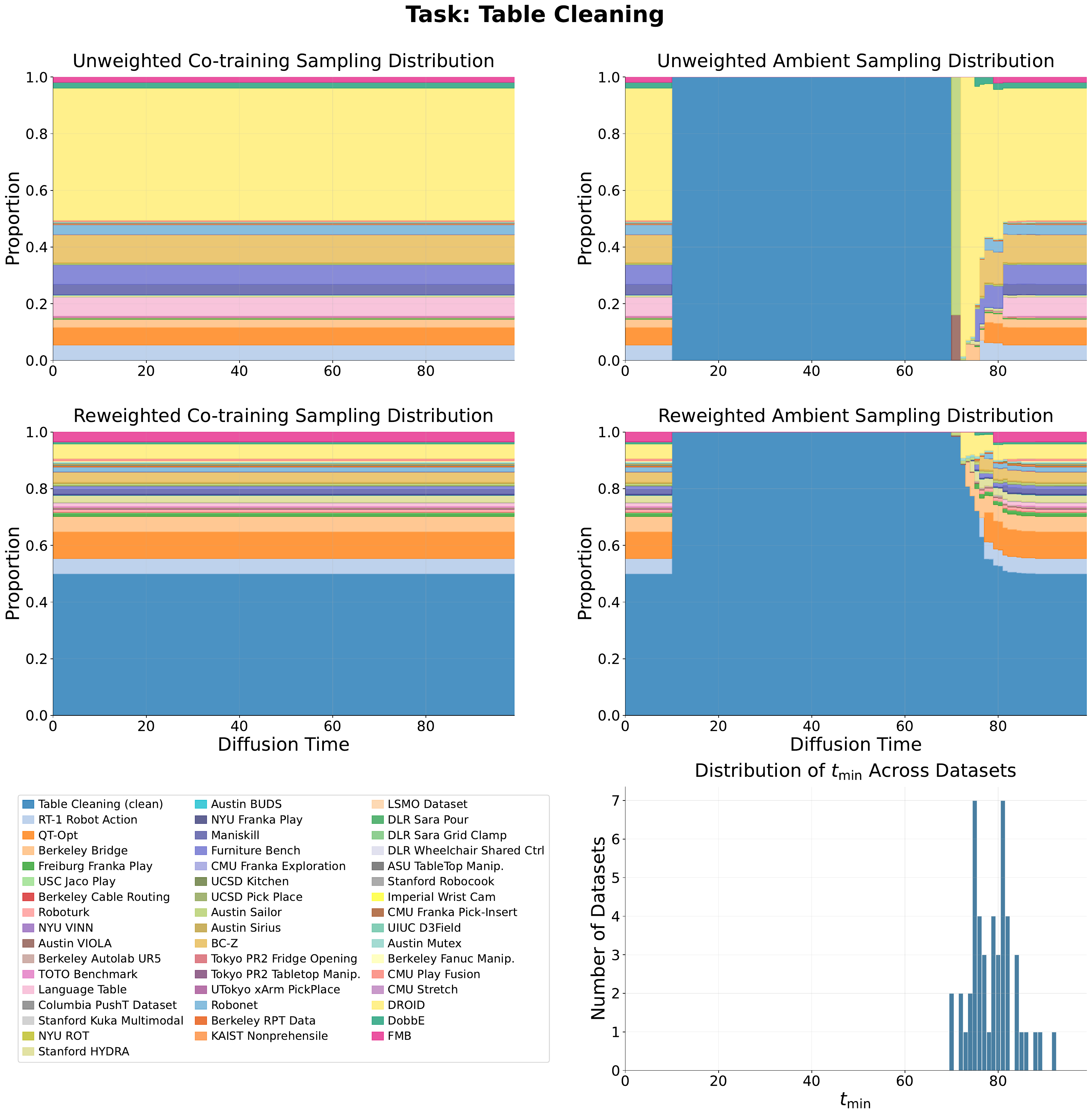}
    \caption{\textbf{Table Cleaning: Dataset re-weighting and $t_{\min}$ visualization.} These plots show the sampling ratio for $\mathcal D_p$ (shown in blue) and each dataset in COXE at different diffusion times for the table cleaning task. \textbf{Left:} Co-training uses the same sampling ratio at all diffusion times. \textbf{Right:} The sampling ratio for each dataset varies at each diffusion time. Notably, at intermediate diffusion times, only data from $\mathcal D_p$ can be used (the ``Donut paradox'' from \citet{daras2025ambient}). The histogram shows the distribution of $t_{\min}$ values assigned by the classifiers to each dataset in COXE.}
    \label{fig:weighting_visualization}
\end{figure*}

\begin{figure*}
    \centering
    \includegraphics[width=\linewidth]{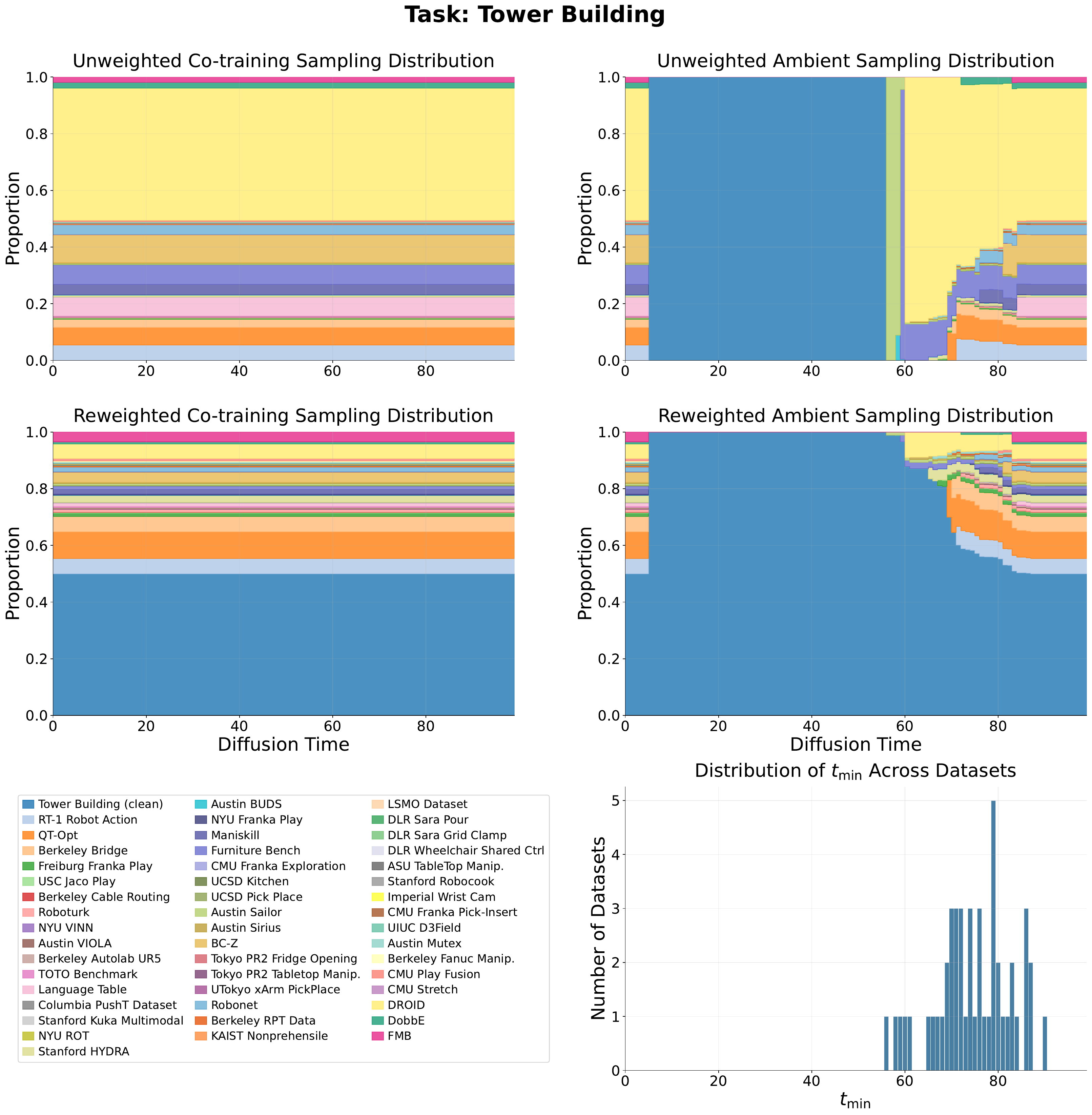}
    \caption{\textbf{Tower Building: Dataset re-weighting and $t_{\min}$ visualization.} See Figure~\ref{fig:weighting_visualization} for the equivalent plot descriptions.}
    \label{fig:tower_building_weighting_visualization}
\end{figure*}

\textbf{Table Cleaning Evaluation:} During each trial, 3 random unseen objects were placed on the table. The robot's task completion is measured according to the rubric in Table~\ref{tab:table_cleaning_rubric}. The time limit is 90s. Each policy was evaluated on 20 trials.

The results include a minor camera resolution bug that \textit{lowers} the policy performance across all table cleaning experiments. The image data was collected at 640 by 320 pixels and resized to 224 by 224 for training. Due to a bug at inference time, the image streams were 128 by 128 and upscaled to 224 by 224 for the policy; thus, the images during inference were effectively lower resolution than the images during training.

\begin{table}[]
\centering
\caption{\textbf{Table cleaning evaluation rubric.} The maximum score per trial is 1.0.}
\label{tab:table_cleaning_rubric}
\begin{tabular}{lc}
    \hline
    \textbf{Subtask} & \textbf{Points} \\
    \hline
    Open drawer & 0.1 \\
    Clean object (per object, 3 total) & $0.8/3 \approx 0.267$ \\
    Close drawer (after table cleaned) & 0.1 \\
    \hline
    \textbf{Max score per trial} & \textbf{1.0} \\
    \hline
\end{tabular}
\end{table}

\textbf{Tower Building Evaluation:} The performance of each trial is the number of correct blocks placed before the robot topples the tower or requires human intervention. Correct indicates that the blocks follow the correct alternating direction and color shown in Figure~\ref{fig:teaser}f. Each policy was evaluated on 20 trials.

The camera resolution bug from the table cleaning experiments was resolved for the tower building experiments.

\subsection{OXE Ablations}
\label{sec:appendix:oxe_ablations}

\subsubsection{Qualitative Performance}
The qualitative gap between the Ambient and co-trained policies is larger than the task completion metric would suggest. The 90s timeout artificially inflates the task completion rates of weak policies: they can fail repeatedly without penalty as long as they eventually succeed. To capture this, we recorded the number of grasp attempts and time required per object for each policy. Tables~\ref{tab:grasps_per_object} and~\ref{tab:time_per_object} show that the Ambient Diffusion Policies require 25-40\% fewer grasps and time per object than their co-trained counterparts.

\begin{table}[]
\centering
\caption{Average grasp attempts per object. The error bars on the \textbf{Average} row report the standard error of the mean across per-object values in each column ($n=10$ objects).}
\label{tab:grasps_per_object}
\begin{tabular}{>{\bfseries}l cccc}
    \toprule
    & \multicolumn{2}{c}{\textbf{MS++}} & \multicolumn{2}{c}{\textbf{COXE}} \\
    \cmidrule(lr){2-3} \cmidrule(lr){4-5}
    \textbf{Object} & Co-train & \makecell{Ambient +\\Locality} & Co-train & \makecell{Ambient +\\Locality} \\
    \midrule
    \rowcolor{rowgray}
    Apple              & 2 & 1 & 2 & 2 \\
    Figurine           & 4 & 2 & 2 & 2 \\
    \rowcolor{rowgray}
    Eraser             & 4 & 2 & 2 & 1 \\
    Play-Doh           & 2 & 2 & 2 & 2 \\
    \rowcolor{rowgray}
    Robot Gripper      & 3 & 2 & 2 & 2 \\
    Yarn               & 2 & 2 & 2 & 2 \\
    \rowcolor{rowgray}
    Camera Mount       & 1 & 1 & 1 & 2 \\
    3D Printed Object  & 2 & 2 & 3 & 1 \\
    \rowcolor{rowgray}
    Mustard            & 3 & 1 & 2 & 2 \\
    Garbage Bag Box    & 1 & 1 & 3 & 1 \\
    \midrule
    \textbf{Average}   & $2.4 \pm 0.3$ & $1.6 \pm 0.2$ & $2.1 \pm 0.2$ & $1.7 \pm 0.2$ \\
    \bottomrule
\end{tabular}
\end{table}

\begin{table}[]
\centering
\caption{Average cleaning time per object. The error bars on the \textbf{Average} row report the standard error of the mean across per-object values in each column ($n=10$ objects).}
\label{tab:time_per_object}
\begin{tabular}{>{\bfseries}l cccc}
    \toprule
    & \multicolumn{2}{c}{\textbf{MS++}} & \multicolumn{2}{c}{\textbf{COXE}} \\
    \cmidrule(lr){2-3} \cmidrule(lr){4-5}
    \textbf{Object} & Co-train & \makecell{Ambient +\\Locality} & Co-train & \makecell{Ambient +\\Locality} \\
    \midrule
    \rowcolor{rowgray}
    Apple              & 21s & 9s  & 30s & 19s \\
    Figurine           & 44s & 20s & 30s & 20s \\
    \rowcolor{rowgray}
    Eraser             & 39s & 16s & 18s & 8s  \\
    Play-Doh           & 20s & 20s & 19s & 20s \\
    \rowcolor{rowgray}
    Robot Gripper      & 31s & 19s & 19s & 20s \\
    Yarn               & 22s & 19s & 20s & 20s \\
    \rowcolor{rowgray}
    Camera Mount       & 12s & 8s  & 8s  & 20s \\
    3D Printed Object  & 19s & 20s & 29s & 8s  \\
    \rowcolor{rowgray}
    Mustard            & 33s & 9s  & 18s & 19s \\
    Garbage Bag Box    & 8s  & 8s  & 30s & 7s  \\
    \midrule
    \textbf{Average}   & $24.9 \pm 3.6$s & $14.8 \pm 1.7$s & $22.1 \pm 2.3$s & $16.1 \pm 1.8$s \\
    \bottomrule
\end{tabular}
\end{table}

The dataset re-weighting ablation is presented in the main body (Section~\ref{sec:oxe_experiments:ablations}).

\section{Finetuning}
\label{sec:appendix:finetuning}

\subsection{Implementation}
\label{sec:appendix:finetuning_implementation}
To finetune the base policies, we reduce the learning rate by 10x to 20x. Checkpoints are saved frequently since finetuning a model for too long can degrade performance. In the simulated experiments, we evaluate every checkpoint and report the performance of the best policy. In the real-world experiments, we perform a preliminary evaluation of each checkpoint with 5-10 trials, and perform a full set of trials on the best checkpoint.

The controlled finetuning comparison is presented in the main body (Section~\ref{sec:controlled_experiments:finetuning}).

\subsection{Table Cleaning (OXE)}
\label{sec:appendix:finetuning_oxe}
\begin{figure}
    \centering
    \includegraphics[width=\linewidth]{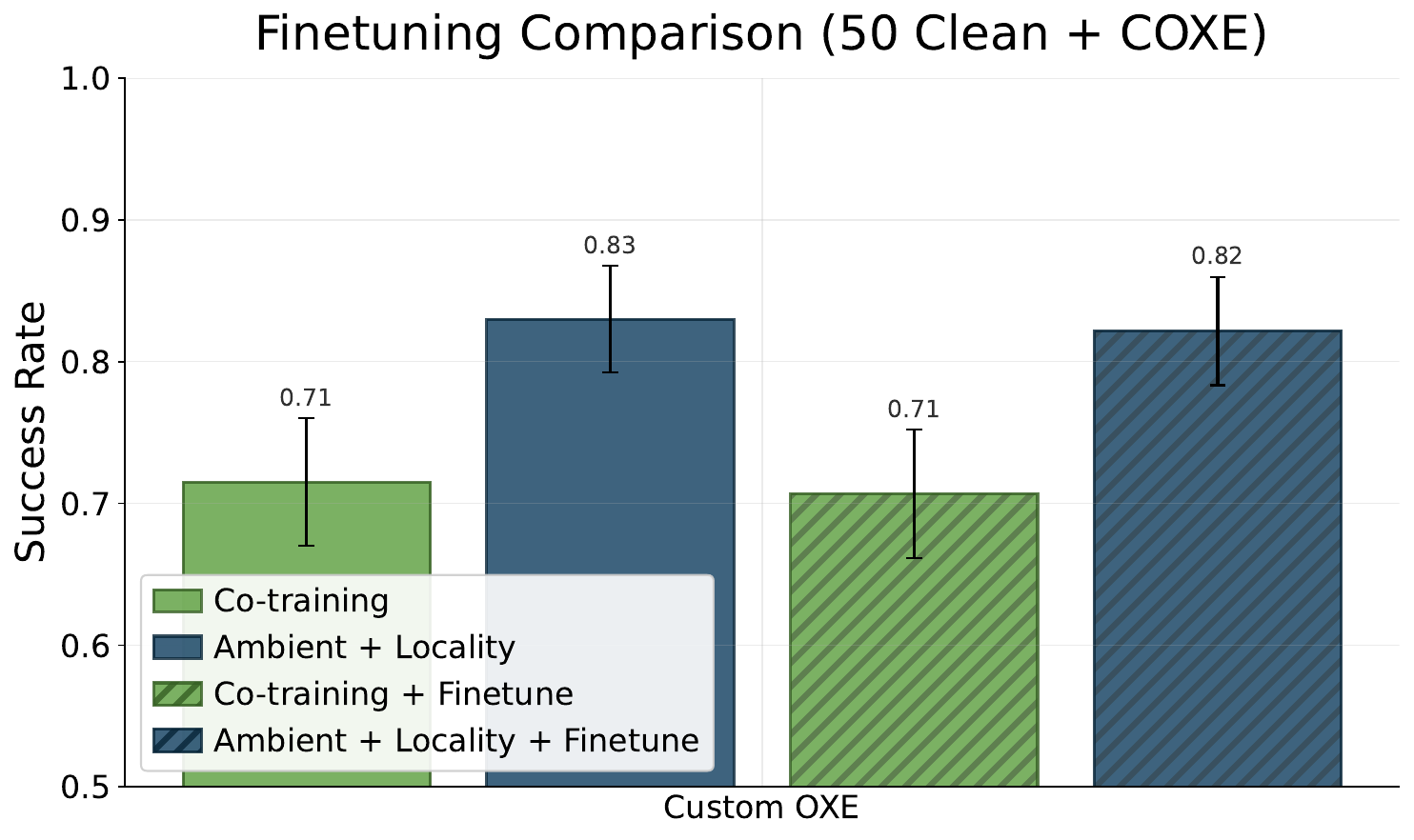}
    \caption{Finetuning resulted in no statistically significant change in policy performance on the table cleaning task.}
    \label{fig:table_cleaning_finetuning_results}
\end{figure}

Figure~\ref{fig:table_cleaning_finetuning_results} compares finetuning the best co-trained and Ambient OXE policies on $\mathcal D_p$ in the table cleaning task; neither produced a statistically significant gain. The results are consistent with the controlled experiments in Section~\ref{sec:controlled_experiments:finetuning}.

\section{Distribution Shifts in the Observation Space}
\label{sec:appendix:obs_shift}
We explore two algorithms for handling suboptimality in the observation space. Other approaches exist -- such as jointly denoising actions and observations. We note that ``suboptimality'' in the observation space can be ill-defined: an image from a different task is not necessarily suboptimal. In fact, it could increase the diversity of the dataset which is desirable. 

Although Ambient Diffusion Policy only addresses action-space distribution shift, its results are already strong. Whether observation-space shift presents a fundamental problem and possible solutions remain open directions.

\subsection{Noising the Observations}
Ambient Diffusion Policy contracts distributional differences in the actions by adding noise. We explore the same solution for the observations. During training, we add noise to the observation input (images, proprioception, etc) for half of the samples using the same noise schedule as the actions. The model takes in the noise level for both the actions and the observations. At inference time, the model performs inference with clean observations. This ablation is conceptually similar to concurrent work that noises observations \cite{hong2026tmrldiffusiontimestepmodulatedpretraining}; however, their goal was to expand the sampling distribution of a base policy for RL finetuning. Figure~\ref{fig:noisy_obs_results} presents the results; it did not appear to help for table cleaning.

\subsection{Classifier-Free Guidance}
In classifier-free guidance, the model learns to denoise the actions both unconditionally (i.e. without observations) and conditionally (with observations) \cite{ho2022classifierfreediffusionguidance}. We use the same training implementation as \cite{ho2022classifierfreediffusionguidance}. At sampling-time, we use the denoiser in \eqref{eq:cfg} where $w$ is the guidance weight. We explored classifier-free guidance as a potential solution since it trains an unconditional denoiser which is unaffected by corruptions in the observations.
\begin{equation}
    \tilde{h}_\theta(A_t, O, t) = (1+w)\cdot h_\theta(A_t, O, t) - w \cdot h_\theta(A_t, \varnothing, t)
\label{eq:cfg}
\end{equation}

Figure~\ref{fig:table_cleaning_cfg_sweep} shows that classifier-free guidance does not improve performance for all swept values of $w$.

\begin{figure}
    \centering
    \begin{subfigure}[t]{0.49\linewidth}
        \centering
        \includegraphics[width=\linewidth]{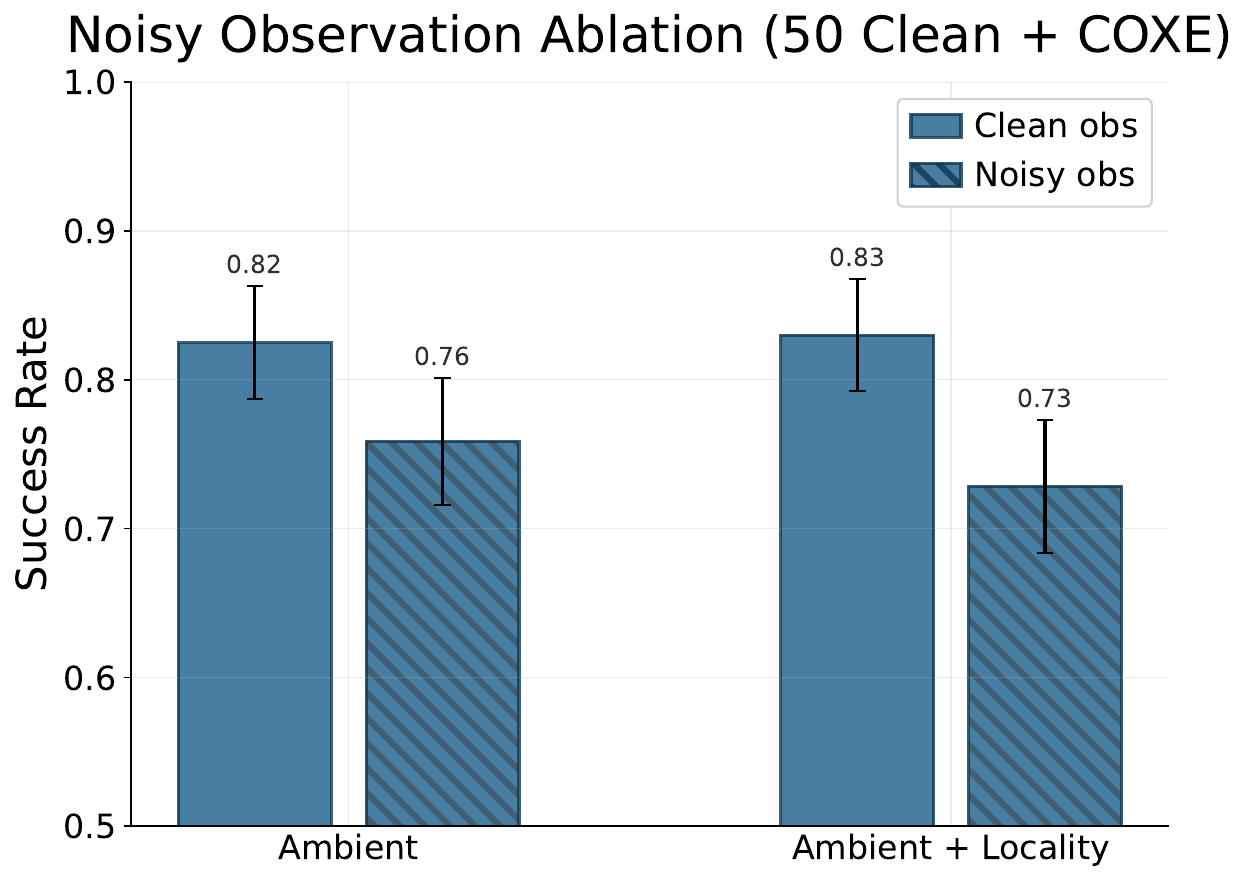}
        \caption{Noising the observations during training.}
        \label{fig:noisy_obs_results}
    \end{subfigure}
    \hfill
    \begin{subfigure}[t]{0.49\linewidth}
        \centering
        \includegraphics[width=\linewidth]{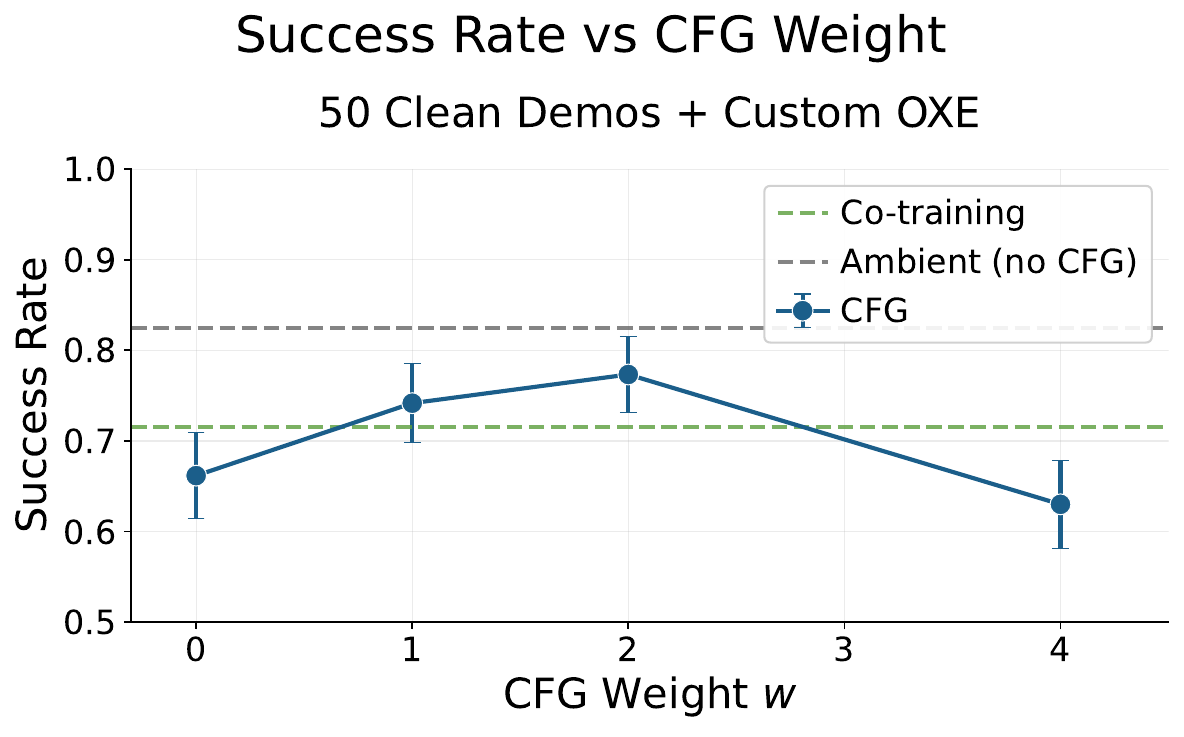}
        \caption{Classifier-free guidance vs.\ guidance weight $w$.}
        \label{fig:table_cleaning_cfg_sweep}
    \end{subfigure}
    \caption{\textbf{Algorithms for handling distribution shift in the observation space.} Neither approach produced stronger policies on the table cleaning task. \textbf{(\subref{fig:noisy_obs_results})} Adding Gaussian noise to the observations during training. \textbf{(\subref{fig:table_cleaning_cfg_sweep})} Classifier-free guidance~\cite{ho2022classifierfreediffusionguidance} swept over guidance weights $w$.}
    \label{fig:obs_shift_ablations}
\end{figure}

\end{document}